\def\eqref#1{equation~\ref{#1}}
\def\1{\bm{1}}
\DeclareMathAlphabet{\mathsfit}{\encodingdefault}{\sfdefault}{m}{sl}
\SetMathAlphabet{\mathsfit}{bold}{\encodingdefault}{\sfdefault}{bx}{n}
\definecolor{cb404040}{HTML}{404040}
\definecolor{cb808080}{HTML}{808080}
\newtcolorbox{assumptioncard}[1]{enhanced, colback=gray!4, colframe=cb404040, title={#1}, left=6pt, right=6pt, top=4pt, bottom=4pt}
\newtcolorbox{theoremcard}[1]{enhanced, colback=gray!4, colframe=cb808080, title={#1}, left=6pt, right=6pt, top=4pt, bottom=4pt}
\newtheorem{theorem}{Theorem}
\newtheorem{lemma}{Lemma}
\title{Transformer Is Inherently a Causal Learner}
\author{%
  Xinyue Wang \\
  Halıcıoğlu Data Science Institute \\
  University of California San Diego \\
  La Jolla, CA 92093 \\
  \texttt{xiw159@ucsd.edu} \\
  \And
  Stephen Wang \\
  ABEL Intelligence, Inc. \\
  Mountain View, CA, 94040 \\
  \texttt{stephen@abel.ai} \\
  \And
  Biwei Huang \\
  Halıcıoğlu Data Science Institute \\
  University of California San Diego \\
  La Jolla, CA 92093 \\
  \texttt{bih007@ucsd.edu} \\
}
\newcommand{\revision}[1]{#1}
\begin{document}

\maketitle

\begin{abstract}
    We reveal that transformers trained in an autoregressive manner naturally encode time-delayed causal structures in their learned representations. When predicting future values in multivariate time series, the gradient sensitivities of transformer outputs with respect to past inputs directly recover the underlying causal graph, without any explicit causal objectives or structural constraints. We prove this connection theoretically under standard identifiability conditions and develop a practical extraction method using aggregated gradient attributions. On challenging cases such as nonlinear dynamics, long-term dependencies, and non-stationary systems, this approach greatly surpasses the performance of state-of-the-art discovery algorithms, especially as data heterogeneity increases, exhibiting scaling potential where causal accuracy improves with data volume and heterogeneity, a property traditional methods lack. This unifying view lays the groundwork for a future paradigm where causal discovery operates through the lens of foundation models, and foundation models gain interpretability and enhancement through the lens of causality.\footnotemark
\end{abstract}
\footnotetext{Project page: \href{https://www.charonwangg.com/project/transformers-scale-discovery}{https://www.charonwangg.com/project/transformers-scale-discovery}}

\section{Introduction}



The ability to discover causality is foundational to intelligence, enabling understanding, prediction, and decision-making about the world. As an evolving field, causal discovery aims to formalize theoretical frameworks for identification criteria and to propose search algorithms to find the true causal structure from observational data \citep{pearl2009causality,spirtes2000causation}. In this area, causal discovery from time series focuses on identifying temporal causal dynamics by exploiting the temporal ordering that naturally constrains the direction of causation. Granger causality \citep{granger1969,tank2018neuralgranger,nauta2019tcdf} formalizes this intuition: a variable $X$ Granger-causes $Y$ if past values of $X$ contain information that helps predict $Y$ beyond what is available from past values of $Y$ alone. Additional methods extend this foundation, including constraint-based approaches like PCMCI and its variants that iteratively test conditional independence to examine the existence of causal edges \citep{runge2019sciadv}, score-based methods like DYNOTEARS \citep{pamfil2020dynotears} that optimize graph likelihood with structural prior regularizations, and functional approaches like TiMINo and VAR-LiNGAM that leverage structural equation models and non-Gaussianity for identifiability \citep{peters2014causal,hyvarinen2010varlingam}.

Real-world systems exhibit complex interactions among many variables. For example, financial markets are highly non-stationary and involve very large variable sets \citep{engle1982arch}; neural recordings exhibit strongly nonlinear population dynamics \citep{breakspear2017dynamic}; climate sensor networks display long and short-term teleconnections \citep{wallace1981teleconnections,newman2016pdo}; and unstructured modalities such as video require modeling long-range spatiotemporal dependencies \citep{bertasius2021timesformer,arnab2021vivit}. Despite rigorous theoretical foundations, prevailing algorithms are often constrained in practice by complex heuristics. Specifically, constraint-based and score-based approaches scale poorly: the number of statistical tests grows rapidly with dimension and lag, and non-parametric tests are computationally expensive \citep{runge2019sciadv,chickering2002ges}. Optimization approaches require careful tuning to achieve the right balance between likelihood and structural regularization \citep{zheng2018notears,ng2020golem,pamfil2020dynotears,zheng2020learning}. More fundamentally, these estimators are not scalable representation learners: their learning is not transferable and thus offers little generalizability for zero-shot or few-shot adaptation; their effective capacity and expressiveness are not well-suited for pretraining on diverse systems.



Motivated by the striking performance and scaling behavior of autoregressive foundation models \citep{brown2020language,kaplan2020scaling,hoffmann2022chinchilla}, we ask whether the properties that make transformers strong forecasters can help causal discovery. Building this connection is valuable in two directions: for discovery, it promises data efficiency by leveraging pretrained representations and a scalable learning paradigm suited to complex dependencies; for foundation models, causal principles offer ways to diagnose limitations in memory and hallucinations, and guide architecture and objective choices. In this paper, we take a first step toward these goals: we revisit common identifiability assumptions in lagged data generation processes and show how decoder-only transformers trained for forecasting, together with input–output gradient attributions via Layer-wise Relevance Propagation (LRP) \citep{pmlr-v235-achtibat24a,bach2015lrp}, reveal lagged causal structure. This view turns modern sequence models into practical, scalable estimators for temporal graphs while opening a path to analyze and strengthen foundation models through causal perspectives.

\section{Background}




In this section, we review causal discovery methods for time-series data and interpretability work on transformers, focusing mainly on language modeling. We then introduce our motivations for connecting these two fields.

\subsection{Related Work} \label{subsec:related-work}

\paragraph{Time-series Causal Discovery.} Recovering causal dynamics from temporal observations requires structural assumptions that determine when and how the true causal graph is identifiable. Constraint-based methods, assuming causal sufficiency and faithfulness, prune spurious edges via conditional independence tests \citep{entner2010causal, runge2019sciadv,malinsky2018causal}. PCMCI and its variants extend the PC algorithm to handle nonlinear, contemporaneous, non-stationary, and high-dimensional settings \citep{runge2020pcmci+, martinez2024decomposing,saggioro2020reconstructing}. Score-based methods search for structures that optimize scores encoding functional form and complexity preferences \citep{friedman2013learning,nodelman2012learning}, with recent continuous relaxations reducing computational cost \citep{sun2021nts,pamfil2020dynotears}. Functional methods exploit noise asymmetry to resolve edge orientation: VAR-LiNGAM assumes linear dynamics with non-Gaussian noise \citep{hyvarinen2010varlingam}, while TiMINo generalizes to nonlinear additive-noise processes \citep{peters2013timino}. Granger causality tests whether past values of one series improve forecasts of another, reflecting predictive rather than structural causation \citep{granger1969}; neural extensions handle nonlinear dependencies \citep{tank2018neuralgranger, nauta2019tcdf,lu2023attention}.

\paragraph{Transformer Interpretability.}
The success of large-scale pretrained transformers has motivated diverse interpretability efforts \citep{hewitt2019structural,cunningham2023sparse,panickssery2023steering,clark2019does,wang2022interpretability,sundararajan2017axiomatic}. Most relevant to our work are methods for input–output token attribution in autoregressive models. Early work equated attention with explanation \citep{xu2015show,choi2016retain,yang2016hierarchical,feng2018pathologies}, but attention proves manipulable and misaligned with perturbation effects \citep{jain2019attention,serrano2019attention,bastings2020elephant}. Post-processing techniques like attention rollout aggregate across layers and heads, yet remain forward-weight heuristics rather than faithful causal measures \citep{abnar2020quantifying}. Gradient-based methods such as saliency, Integrated Gradients, and Layer-wise Relevance Propagation estimate output-conditioned sensitivities through attention, residual, and MLP paths, yielding attributions that better align with perturbation tests \citep{sundararajan2017axiomatic,bach2015lrp,pmlr-v235-achtibat24a}. Perturbation-based methods directly quantify importance via prediction changes under input erasure or counterfactuals \citep{li2016understanding,kokalj2021bert}. We ground token-level attributions in identifiability theory for dynamic systems, providing a causally principled framework for interpreting transformers as structure learners.




 

\subsection{Motivations} \label{subsec:motivations-and-intuitions}


Modern scientific and industrial systems are high-dimensional, nonlinear, non-stationary, and data-rich---precisely the regime where classical causal discovery faces brittle assumptions, exploding conditioning sets, and poor scaling with many variables, long lags, and complex dependencies \citep{runge2019inferring, montagna2023assumption}. Decoder-only transformers sit at the opposite point in the design space: a single autoregressive objective, contextualized dependency modeling, and predictable gains from scale, data, and compute \citep{kaplan2020scaling}. Empirically, the same architecture transfers across modalities and \revision{domains} and often outperforms specialized models in text, vision, and long-context time series \citep{liu2023visual,liang2024foundation}. Our premise is pragmatic: if many real-world regularities arise from a small set of sparse, independent mechanisms, then an autoregressive learner that already scales and generalizes is a natural interface to structure learning. This brings three immediate benefits: (i) amortization---forecasting supplies a ubiquitous training signal, turning structure learning into a unified, scalable procedure grounded in identifiability insights; (ii) the long-term potential of leveraging pretrained priors to reduce sample complexity when moving to new domains; and (iii) coverage---complex nonlinear, long-term dependencies and a diverse set of dynamic systems are handled within a unified framework.

On the other hand, framing decoder-only transformers through structure learning does two things. First, it suggests principled priors for analyzing and improving foundation models by aligning with the theoretical foundations of causal discovery, such as sparsity, modularity, and environment-invariance. Second, it clarifies what to measure: raw attention is not a faithful causal signal in deep stacks due to cross-layer mixing, whereas gradient/relevance-based criteria have stronger faithfulness guarantees and empirical support. The broader payoff is a two-way bridge: causal principles guide foundation models toward stable, mechanism-level generalization, while foundation model practice delivers scalable, domain-agnostic causal discovery.

\section{A Unifying View: Identification inside Robust Next Variables Prediction}
\label{sec:unifying}
\vspace{-0.5mm}
\begin{figure}[htbp]
\centering
\includegraphics[width=1.0\textwidth]{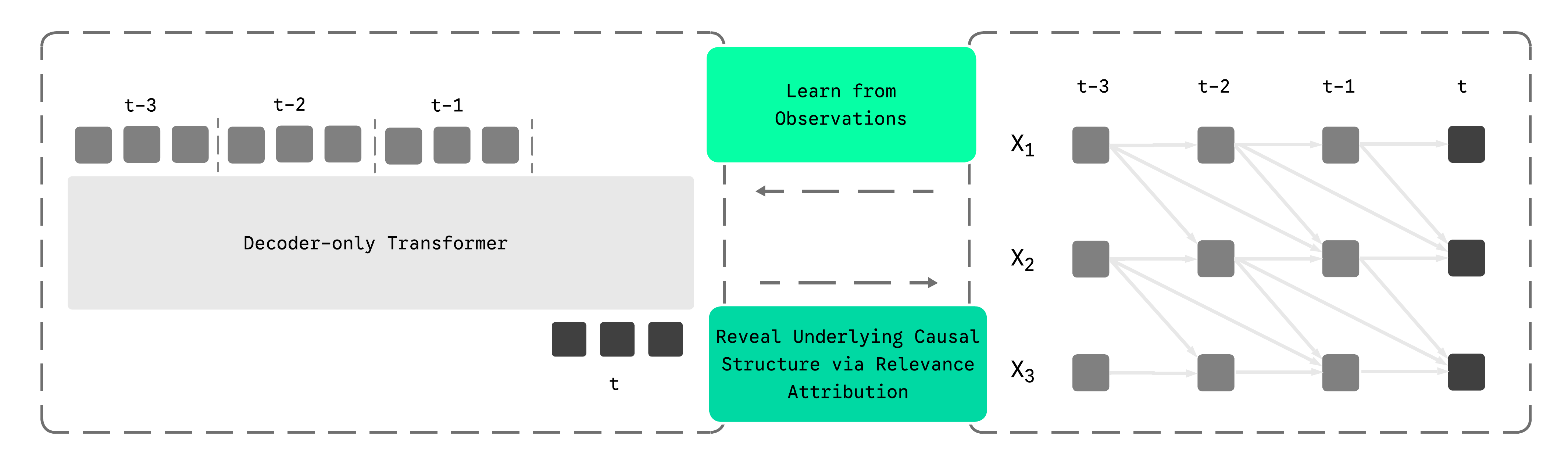}
\caption{\textbf{Data generation and transformer-based causal discovery.} \textbf{Left:} A decoder-only transformer trained for next-step prediction. Tokens are lagged observations from $t\!-\!L$ to $t\!-\!1$; the model predicts $X_t$ from $X_{t-1:t-L}$. \textbf{Right:} A lagged data-generating process with $N\!=\!3$ and window $L\!=\!3$. Each $X_{i,t}$ depends on selected past values $X_{j,t-\ell}$ per the true graph $\mathcal{G}^*$. The trained transformer learns the process, and relevance attributions help recover the causal structure.}
\label{fig:dgp_transformer}
\end{figure}

\subsection{From Prediction to Causation}
\label{subsec:setup}

\paragraph{Data-generating process.}
Consider a $p$-variate time series $X_t = (X_{1,t}, \ldots, X_{p,t})^\top$ and a lag window $L \ge 1$. Each variable follows
\[
X_{i,t} = f_i(\operatorname{Pa}(i,t), \revision{U_{t},} N_{i,t}),
\]
where \(\operatorname{Pa}(i,t) \subseteq \{X_{j,t-\ell} : j \in [p],\, \ell \in [L]\}\) are the lagged parents, \revision{$U_{t}$ are unobserved processes,} and \revision{$N_{i,t}$ are mutually independent noises satisfying $N_{i,t} \perp (X_{<t}, U_{\leq t})$.} We write \(j \stackrel{\ell}{\longrightarrow} i\) if \(X_{j,t-\ell}\) is a direct cause of \(X_{i,t}\). The lagged graph \(\mathcal{G}^*\) contains \(j \stackrel{\ell}{\longrightarrow} i\) if and only if \(X_{j,t-\ell} \in \operatorname{Pa}(i,t)\). We define identifiability as the unique recovery of the true causal graph \(\mathcal{G}^*\) from observational data under the given assumptions. This data-generating process can model both linear and nonlinear relationships, as well as different types of exogenous noise and non-stationary dynamics, making our approach applicable across a wide range of real-world scenarios.

\begin{assumptioncard}{Assumptions for lagged identifiability}
\begin{itemize} \label{sec:assumptions}
    \item[A1] \revision{Conditional Exogeneity.}
    \item[A2] No instantaneous effects (all parents occur at lags $\ell \ge 1$).
    \item[A3] Lag-window coverage (the chosen $L$ includes all true parents).
    \item[A4] \revision{Faithfulness (the distribution is faithful to $\mathcal{G}^*$).}

\end{itemize}
\end{assumptioncard}

Our identifiability result relies mainly on standard assumptions (A1--A4) commonly used in the causal discovery literature \citep{spirtes2000causation,pamfil2020dynotears,runge2019sciadv,white2010granger}. We also assume the regularity of the data generation functions to avoid ill-posed conditions and guarantee the existence of the derivatives.

We briefly comment on the plausibility and practical remedies by combining traditional approaches when they are imperfectly met. \revision{\emph{(A1) Conditional Exogeneity.} Causal sufficiency is a special case of A1 obtained by removing $U_t$; it permits latent confounders as long as they do not create spurious dependencies between targets and non-parents. When arbitrary latent variables are present, one can treat the learned structure as a Markov blanket and apply algorithms that handle latent variables \citep{malinsky2018causal} as post-processing.} \emph{(A2) No instantaneous effects.} If contemporaneous couplings exist, the analysis can be combined with algorithms capable of handling instantaneous effects (e.g., PCMCI+). We consider the variables that have instantaneous effects as latent confounders since they introduce spurious edges. Then we could easily handle this by using suitable algorithms like PCMCI+ with the initial skeleton consisting of the graph output from the transformer and a fully connected contemporaneous graph. The violation of \emph{(A3)} can be abated by using a large window length, and \emph{(A4)} holds generically. \revision{The unfaithful set (exact cancelations) in the linear Gaussian model has Lebesgue measure zero \citep{meek2013strong} and for nonlinear models, unfaithfulness becomes more restrictive.} We recognize the lack of a mechanism for modeling latent variables and instantaneous effects as limitations of the current architecture and leave them as promising directions for future work.



\begin{theoremcard}{Causal Identifiability via Prediction}
    \begin{theorem}\label{thm:identifiability}
    Under A1--A4 and regularity conditions, the lagged causal graph $\mathcal{G}^*$ is uniquely identifiable via the score gradient energy: edge $j\,\stackrel{\ell}{\longrightarrow}\, i$ exists iff $H_{j,i}^{\ell} := \mathbb{E}[(\partial_{x_{j,t-\ell}} \log p(X_{i,t} \mid X_{<t}))^2] > 0$.
    \end{theorem}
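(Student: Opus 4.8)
The plan is to prove the biconditional by routing through an intermediate conditional-independence (CI) statement, establishing the chain
\[
j \stackrel{\ell}{\longrightarrow} i \ \text{exists}
\iff
X_{i,t} \not\perp X_{j,t-\ell} \mid Z
\iff
H_{j,i}^{\ell} > 0,
\qquad Z := X_{<t}\setminus\{X_{j,t-\ell}\}.
\]
The right-hand equivalence is analytic (it links the score energy to the conditional law), while the left-hand one is graphical (it links CI to edge existence). I would prove each in isolation and then compose them.

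\textbf{Analytic step (score energy $\leftrightarrow$ conditional independence).} Since $H_{j,i}^{\ell}$ is the expectation of a nonnegative integrand, $H_{j,i}^{\ell}=0$ holds iff $\partial_{x_{j,t-\ell}}\log p(X_{i,t}\mid X_{<t})=0$ almost surely under the joint law of $(X_{i,t},X_{<t})$. Using the regularity conditions (differentiability of $f_i$, strict positivity of the conditional density on a connected support), I would integrate this vanishing derivative along the $x_{j,t-\ell}$-axis to conclude that $\log p(X_{i,t}\mid X_{<t})$ is constant in that coordinate, i.e. $p(X_{i,t}\mid X_{<t})=p(X_{i,t}\mid Z)$, which is precisely $X_{i,t}\perp X_{j,t-\ell}\mid Z$. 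The converse is immediate: CI makes the density constant in $x_{j,t-\ell}$, so the derivative and hence $H_{j,i}^{\ell}$ vanish.

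\textbf{Graphical step (conditional independence $\leftrightarrow$ edge).} For ``no edge $\Rightarrow$ CI'' I would invoke the causal Markov property, which follows from the structural equations with mutually independent noise ($N_{i,t}\perp(X_{<t},U_{\leq t})$) together with A1 and A2: because every parent lives at a strictly positive lag (A2) and therefore sits inside $X_{<t}$, and because conditional exogeneity (A1) prevents the latent process $U_t$ from inducing dependence between $X_{i,t}$ and any non-parent, conditioning on $Z\supseteq\operatorname{Pa}(i,t)$ screens $X_{i,t}$ off from the non-parent $X_{j,t-\ell}$; A3 ensures no true parent hides outside the window, so the screening set is complete. For ``edge $\Rightarrow$ not CI'' I would use Faithfulness (A4): a direct edge cannot be d-separated by any conditioning set, so $X_{i,t}$ and $X_{j,t-\ell}$ stay dependent given $Z$, and by the analytic step $H_{j,i}^{\ell}>0$. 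Composing the two steps yields the stated characterization.

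\textbf{Main obstacle.} I expect the crux to be the direction $H_{j,i}^{\ell}=0\Rightarrow\text{CI}$ inside the analytic step. Nonnegativity hands us ``derivative $=0$ almost surely'' for free, but upgrading this to genuine functional independence of the density from $x_{j,t-\ell}$ requires the support to be regular enough to integrate along the perturbation axis, and it must exclude measure-zero pathologies in which the derivative vanishes on the support while the density still varies. This is exactly where the positivity and connected-support regularity conditions do their work, and where Faithfulness (A4) rules out adversarial cancellations—such as a genuine parent whose first-order effect happens to integrate to zero along the observed support—guaranteeing that a true edge forces the score derivative to be nonzero on a set of positive measure rather than merely off-support.
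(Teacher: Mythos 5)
Your proposal matches the paper's proof essentially step for step: the paper's Lemma on zero weak partials (integrating along coordinate lines on a rectangular support) together with its score lemma is exactly your analytic step ($H_{j,i}^{\ell}=0 \iff$ the conditional density is free of $x_{j,t-\ell} \iff X_{i,t}\perp X_{j,t-\ell}\mid Z$), and its conditional-exogeneity-implies-Causal-Markov lemma plus the two claims of the appendix theorem are your graphical step (Markov for no-edge $\Rightarrow$ CI, faithfulness for edge $\Rightarrow$ not CI). The only quibble is that faithfulness does its work entirely in the graphical step (ruling out a true parent whose influence cancels so that CI nonetheless holds), not in the analytic direction $H_{j,i}^{\ell}=0\Rightarrow$ CI, which needs only the positivity, connected-support, and Sobolev regularity conditions.
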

\end{theoremcard}

\revision{This theorem characterizes causal parents through sensitivity of the conditional distribution to each input. Unlike classical Granger causality, which tests mean prediction, the score gradient energy $H_{j,i}^{\ell}$ captures influence on the \emph{entire} conditional distribution, including variance and higher moments. Under homoscedastic Gaussian noise, this reduces to $G_{j,i}^{\ell} := \mathbb{E}[(\partial_{x_{j,t-\ell}} f^*(X_{i,t}))^2]$, where $G_{j,i}^{\ell} > 0$ iff edge $j \stackrel{\ell}{\to} i$ exists.}
The estimator choice is flexible: any model that fits the conditional distribution suffices. We use decoder-only transformers for scalability and toward foundation model alignment. In practice, we approximate $G_{j,i}^{\ell}$ via Layer-wise Relevance $\tilde G_{j,i}^{(\ell)}:=\mathbb{E}[\,|R_{ij}^{(\ell)}(X)|\,]$, then calibrate to recover $\mathcal{G}^*$. See Appendix~\S\ref{sec:identifiability-of-the-causal-structure} for the proof and \S\ref{sec:attention-lrp-as-a-surrogate-for-gradient-energy} for the LRP–gradient connection.

\subsection{Transformers inherit causal identifiability}
\label{subsec:transformers}



We connect Theorem~\ref{thm:identifiability} to decoder-only transformers and make explicit why this architecture aligns with the identifiability program in Section~\ref{subsec:setup}, and how we extract a graph in practice. The connection has four parts: (i) alignment with assumptions A1--A4 and the forecasting objective, (ii) scalable sparsity and conditional-dependence selection, (iii) contextualized parameters for heterogeneity, and (iv) a structure extraction and binarization procedure.

\vspace{-0.5mm}

\paragraph{Alignment with identifiability and objective.}
We use a decoder-only transformer on a length-$L$ window. For each $t>L$, the input $\mathbf{s}_t=[X_{t-L},\ldots,X_{t-1}]\in\mathbb{R}^{L\times p}$ is flattened to $L\cdot p$ tokens. We use separate learnable node and time embeddings to distinguish temporal positions and node identities. Causal masking and autoregressive decoding enforce temporal precedence (A2); the window $L$ bounds the maximum lag (A3). We focus on the causal-sufficiency case (a special case of A1) unless otherwise noted. Note that unlike traditional structure learning approaches that use a fixed input length to predict the last token, this aligns with standard autoregressive training where every token can be used as a training signal via teacher forcing. This benefits from the capacity and expressivity introduced by attention: the model can fit conditionals from $p\left(X_t \,\middle|\, X_{t-1}\right)$ to $p\left(X_t \,\middle|\, X_{t-1:t-L}\right)$ within a unified architecture. We optimize:

\vspace{-1mm}
\begin{equation}
    \label{eq:erm}
    \min_{\theta}\; -\frac{1}{(T-L)\,L}\sum_{i=1}^{T-L}\sum_{k=1}^{L}
    \log p_{\theta}\!\left(X_{i+k}\,\middle|\,X_{i:i+k-1}\right)
    \;+\; \lambda\,\Omega(\theta).
\end{equation}
\vspace{-1mm}

where $p_{\theta}(\cdot\mid\cdot)$ denotes the conditional likelihood parameterized by transformer outputs $\widehat{f}_\theta: \mathbb{R}^{L\times p} \to \mathbb{R}^p$. For simplicity, we use a Gaussian likelihood (MSE objective), and $\Omega(\theta)$ is optional (e.g., sparsity regularization or entropy regularization; by default we do not use structural penalties).


\vspace{-0.5mm}

\paragraph{Sparsity and scalable dependence selection.}
While explicit sparsity is not required for identifiability in the population, finite-sample recovery benefits from sparsity for both accuracy and efficiency. Constraint-based and score-based approaches control complexity via combinatorial conditioning and structural penalties, which limits scalability in high dimensions and long lags. Transformers implicitly sparsify: finite capacity, weight decay compress high-dimensional observations into generalizable parameters; softmax attention induces competitive selection among candidates \citep{martins2016softmax,sutton1998reinforcement}; and multi-head context supports selecting complementary parents. These priors make transformers well suited for scalable causal learning and can be complemented with explicit sparsity if desired.

\vspace{-0.5mm}

\paragraph{Attention as contextual parameters.}
Attention matrices are input-conditioned and therefore act as contextualized parameters of pairwise dependencies rather than fixed population-level graph weights commonly used in optimization-based estimators \citep{zheng2018notears,pamfil2020dynotears}. Unlike methods that learn a single static binary mask, input-conditioned attention adapts to heterogeneity and non-stationarity: different contexts (time, regime) induce distinct effective dependency patterns. This is powerful and critical for learning multiple distinct dynamics at the same time, allowing the causal structures to have different sparsity, functional forms, and maximum lags. It is desirable and scalable in practice, enabling a data-driven mixture-of-graphs view without committing to a single mask.

\vspace{-0.5mm}

\paragraph{Structure extraction.} \label{sec:structure-extraction}
After training, we recover the structure via population gradient energy rather than raw attention. We use LRP \citep{pmlr-v235-achtibat24a} to compute relevance scores $R_{ij}^{(\ell)}$ that quantify the influence of variable $j$ at lag $\ell$ on predicting variable $i$ at time $t$:
\vspace{-0.5mm}

\begin{equation}
R_{ij}^{(\ell)} \,=\, \sum_{m=1}^M \sum_{h=1}^H \mathrm{LRP}^{(m,h)}\big(\widehat{f}_\theta, X_t^{(i)}, X_{t-\ell}^{(j)}\big).
\end{equation}
We aggregate these attributions across samples to estimate gradient energy $\tilde G_{j,i}^{(\ell)}=\mathbb{E}[\,|R_{ij}^{(\ell)}(X)|\,]$ and then calibrate to a sparse graph. Note that we do not use raw attention weights as causal explanations, since deep token mixing often misaligns attention scores with input and output dependence \citep{jain2019attention}. See Appendix~\S\ref{sec:attention-lrp-as-a-surrogate-for-gradient-energy} for implementation and aggregation details.

\paragraph{Graph binarization.}
We propose two rules to binarize $\tilde G$: (i) \emph{Top-$k$ per target:} for each target variable (row), select the $k$ largest entries as parents; this directly controls graph density and stabilizes precision. (ii) \emph{Uniform-threshold rule:} assume a uniform baseline over $L\!\times\!p$ candidates and select entries whose normalized relevance exceeds $\tfrac{1}{L\!\times\!p}$. The two rules behave similarly at small scale; as context length grows, the uniform-threshold rule tends to yield lower \revision{F1 scores} than Top-$k$.
\paragraph{Why gradients rather than raw attention.}
Tokens in deep transformer layers are highly contextualized and are heavily mixed by downstream value projections and residual paths. Consequently, large attention on a token does not guarantee a large effect on the target, and deep mixing can obscure true dependencies \citep{jain2019attention}. In contrast, gradients directly quantify local sensitivity of the target to each input coordinate; integrating their magnitude over the data yields a population-level measure aligned with our identifiability results. To obtain stable gradient attributions, we adopt Layer-wise Relevance Propagation (LRP), which conserves relevance through nonlinear blocks and reduces gradient noise while accounting for both attention and MLP pathways \citep{pmlr-v235-achtibat24a}.

\section{Experiments}
\subsection{Simulation Experiments}
\begin{figure}[htbp]
\centering
\includegraphics[width=1.0\textwidth]{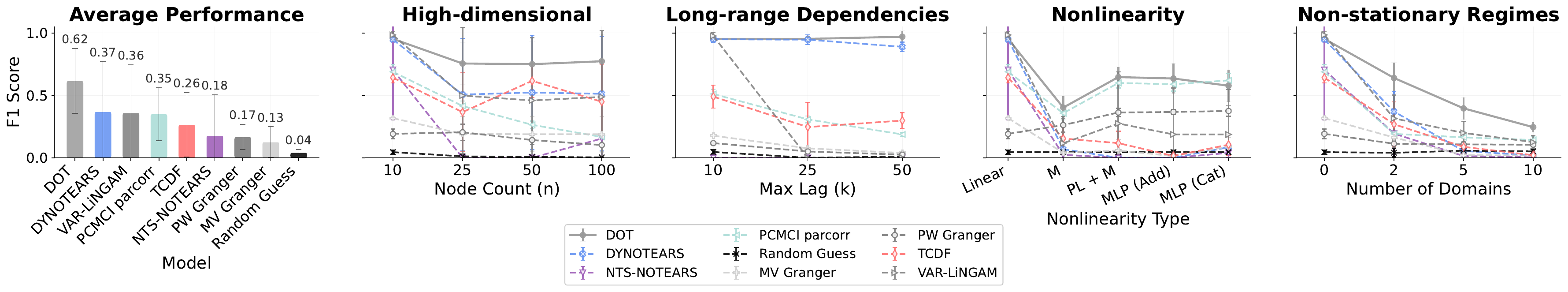}
\caption{\textbf{F1 score analysis across regimes.} \textbf{(A)} Mean F1 across all experiments (averages exclude timeout cases). \textbf{(B)} \revision{High-dimensional input: F1 averaged across scales and seeds vs. the number of nodes.} \textbf{(C)} Long-range dependencies: F1 averaged across scales and seeds vs. maximum lag. \textbf{(D)} Nonlinearity: F1 averaged across scales and seeds vs. different types of functional forms. \textbf{(E)} Non-stationarity: F1 averaged across scales and seeds vs. the number of domains. We run each method with three seeds. Missing results indicate method timeouts due to computational limits. DOT stands for Decoder-only Transformer. PL and M stand for piecewise linear and monotonic functions.}
\label{fig:experiments}
\end{figure}

\vspace{-0.5mm}

\paragraph{Setup.}
We evaluate our procedure for causal discovery using the simulator detailed in Appendix~\S\ref{sec:data-generation-and-simulation}. We construct datasets along several axes, including high-dimensional inputs, long-range dependencies, nonlinear interactions, non-stationary processes, unobserved latent variables, and different exogenous noise types. We compare against baselines from a diverse set of algorithm families including PCMCI \citep{runge2019sciadv}, DYNOTEARS \citep{pamfil2020dynotears}, VAR-LiNGAM \citep{hyvarinen2010varlingam,peters2014causal}, NTS-NOTEARS \citep{sun2021nts}, TCDF \citep{nauta2019tcdf}, pairwise/multivariate Granger tests \citep{granger1969}, and a random guess baseline based on the ground-truth density. Both TCDF and NTS-NOTEARS are nonlinear methods. After training, we extract edges with LRP and binarize them with a per-target top-$k$ rule to obtain an inferred causal structure, and then evaluate performance using the F1 score. We also compare variants of our approach, including a shallow one-layer transformer, using attention as a dependency indicator, and different binarization methods (see Appendix~\ref{sec:experiment-setups} for detailed setups).

\paragraph{General capabilities.}
The transformer recovers lagged parents accurately and consistently across settings, achieving comparable or better performance to baseline methods (Figure \ref{fig:experiments}A). The transformer maintains consistent and strong performance in all settings, where specialized approaches perform well in some settings but worse in others (Figure \ref{fig:experiments}). Traditional methods degrade as dynamics and dimension grow, whereas the transformer remains robust without sensitive hyperparameter tuning. These advantages stem from the model's expressivity and attention-based dependency modeling. Performance improves steadily with sample size, making the approach suitable for complex real-world scenarios.





\begin{figure}[htbp]
    \centering
    \includegraphics[width=\textwidth]{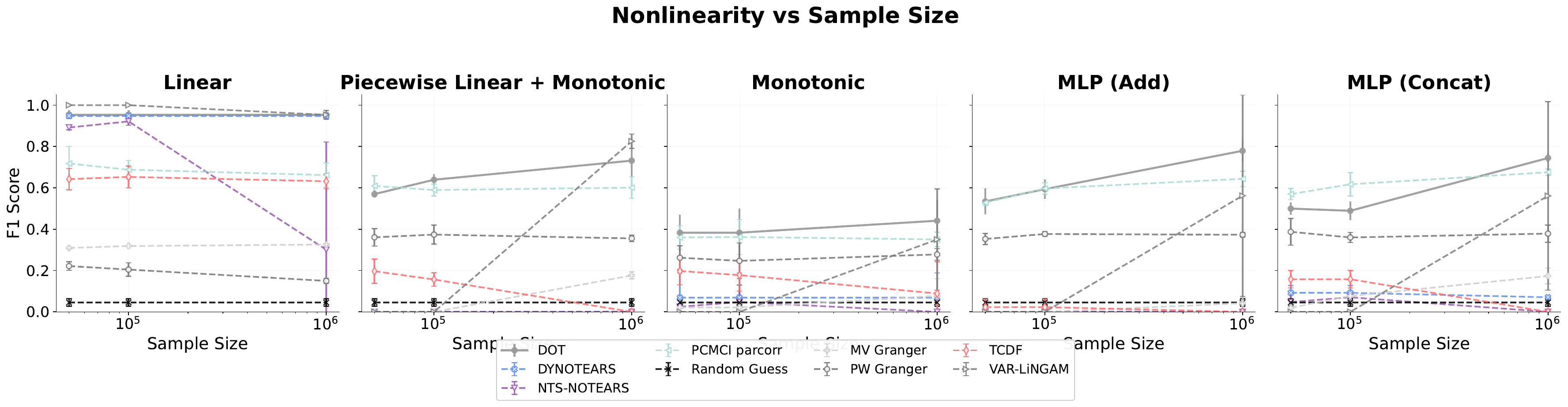}
    \caption{\textbf{Nonlinear dependencies.} F1 scores averaged across seeds vs. sample size in different nonlinear settings.}
    \label{fig:nonlinear-sample-scaling}
\end{figure}

\vspace{-0.5mm}

\paragraph{Capabilities of modeling long-range and high-dimensional dependencies.} \label{sec:long-range-high-dimension-settings}
The attention mechanism connects any pair of variables in one hop, making it excel at modeling long-range and complicated connections of a high-dimensional system \citep{vaswani2017attention,likhosherstov2021expressive}. We examine this property for causal discovery with datasets including different maximum lags of causal effects and the number of input variables. Each setup has a linear and a nonlinear version. Decoder-only transformers consistently surpass baselines in both high-dimensional and long-range dependency settings. Traditional algorithms like VAR-LiNGAM and PCMCI perform worse when the input dimension increases, suffering from weaker detection power and the curse of dimensionality. 
\paragraph{Capabilities of modeling nonlinear interactions.} \label{sec:nonlinear-settings}
We examine the capability of the transformer in learning nonlinear interactions, considering settings from simple to complex: additive noise models with linear, monotonic, mixture of piecewise linear and monotonic, and multi-layer perceptron (MLP) as nonlinear functions, and non-additive noise models with MLP as mixing functions of variables and noise. We observe a trade-off between data efficiency and expressivity. While traditional methods employing simple estimators and search heuristics from human prior (e.g., DYNOTEARS, VAR-LiNGAM, PCMCI) can achieve good performance efficiently in simple cases like linear settings, a decoder-only transformer generally works better when the data scales and shows a consistent accuracy improvement as data increases.   

\begin{figure}[htbp]
    \centering
    \includegraphics[width=1.0\textwidth]{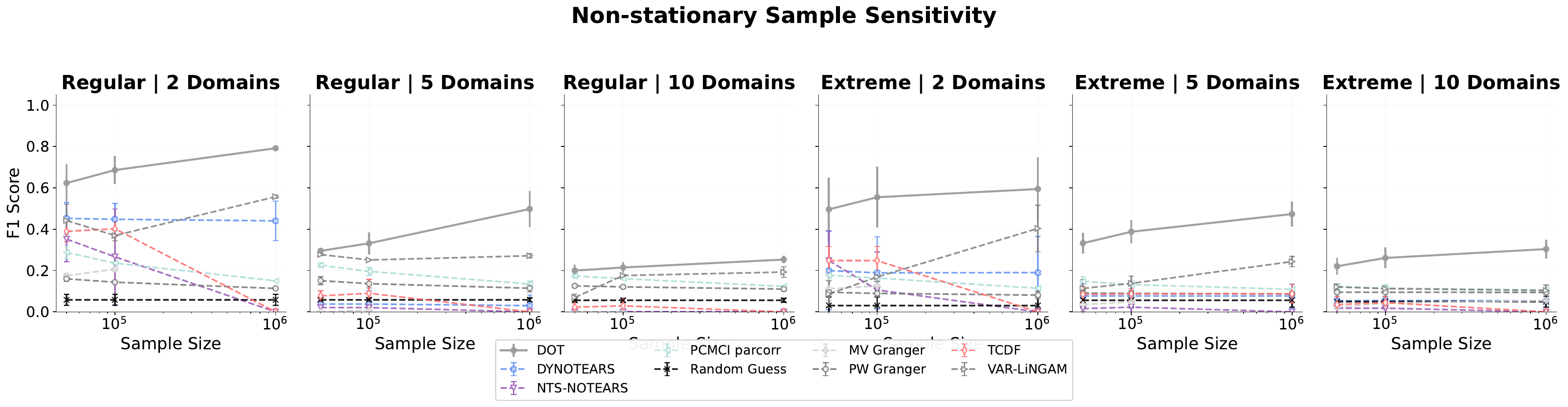}
    \caption{\textbf{Non-stationary dependencies.} F1 scores averaged across seeds vs. sample size in different non-stationary settings.}
    \label{fig:non-stationary-sample-scaling}
\end{figure}

\paragraph{Scaling behavior in non-stationary settings.} \label{sec:non-stationary-settings}
The transformer can effectively leverage additional data to improve causal structure estimation accuracy. Here we construct two kinds of non-stationarity: (i) a regular setting that samples linear structures with a fixed maximum lag for each domain, and (ii) an extreme setting that samples both structure and nonlinear monotonic functions for each domain, with varying maximum lags. Unlike traditional methods that can become intractable as data grows, the transformer shows consistent improvement across sample sizes (see Figure~\ref{fig:non-stationary-sample-scaling}). In non-stationary settings, the model learns to handle multiple local mechanisms within a single framework. As sample size increases, the transformer better separates and routes different causal structures corresponding to distinct regimes (Figure~\ref{fig:experiments}E). These learning curves, together with the nonlinearity results, also highlight the limitations of weak structural priors and data hungriness. When the number of domains increases, accurately modeling and switching between regimes becomes harder and requires much more data. This suggests that when data is insufficient relative to structural complexity, expressive models may resort to spurious correlations.

\begin{figure}
    \centering
    \includegraphics[width=0.8\textwidth]{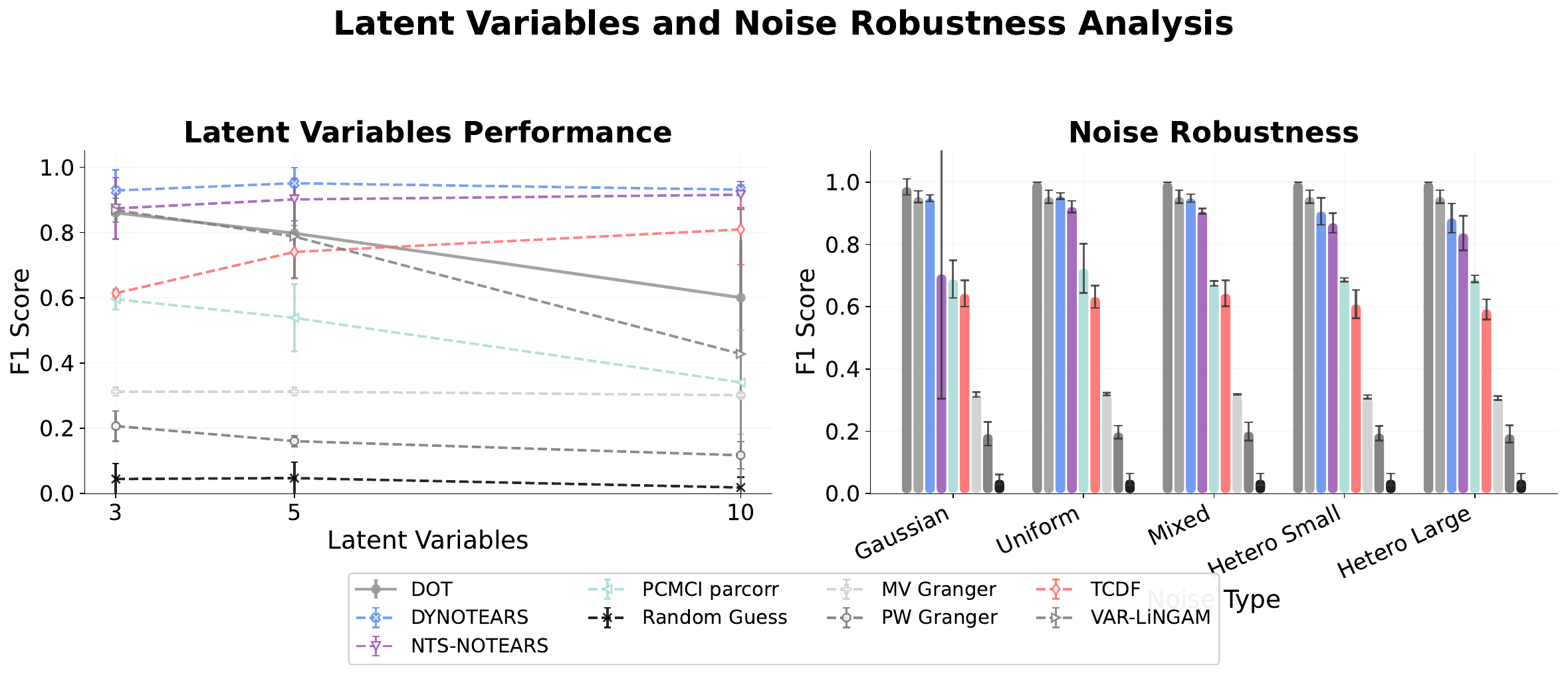}
    \caption{\textbf{Robustness to latent variables and noise.} \textbf{Left:} F1 scores on scenarios including different amounts of latent variables. \textbf{Right:} F1 scores on different kinds of noise (equal variance and non-equal variance).}
    \label{fig:latent-variable-robustness}
    \vspace{-6pt}
\end{figure}

\paragraph{Noise and latent variable robustness.} \label{sec:noise-robustness}
Transformers demonstrate robust performance across different noise distributions, maintaining consistent accuracy regardless of noise type or the variance properties of noise (see Figure \ref{fig:latent-variable-robustness}). While we observe a performance drop of continuous optimization methods like DYNOTEARS and TCDF in non-equal noise variance settings aligned with \cite{ng2024structure}, the decoder-only transformer remains stable and accurate. However, due to the lack of a latent variable modeling mechanism, transformers are prone to learn spurious links and degrade as the number of latent variables increases, while traditional methods considering sparsity alleviate this influence.

\begin{figure}[htbp]
    \centering
    \includegraphics[width=0.8\textwidth]{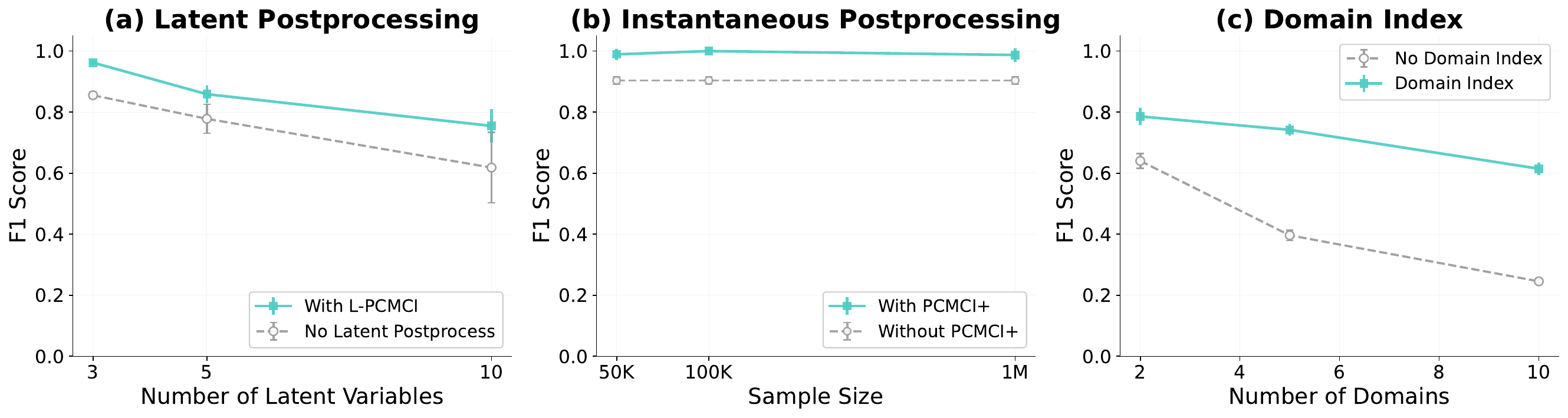}
    \caption{\revision{\textbf{The potential of handling latent confounders, instantaneous relationships, and improving data efficiency by integrating known domain indicators.} When the assumptions of latent confounders and instantaneous relationships are violated, traditional causal discovery methods are adopted in postprocessing to effectively refine the learned structure. Integrating domain indicators improves data efficiency by helping recognize domain invariance and localize changes across domains. }}
    \label{fig:postprocessing_and_domain_index}
\end{figure}

\paragraph{\revision{The potential of handling latent confounders.}} \label{sec:potential-of-handling-latent-confounders}
\revision{Transformer performance degrades under latent confounding, and the architecture cannot generally model latent variables (see Figure \ref{fig:latent-variable-robustness}). We show that it is possible to handle this by post-processing with a latent-aware causal discovery method: run L-PCMCI \citep{gerhardus2020high} constrained by the transformer's predicted edges to refine the graph. Starting from the transformer's graph sharply reduces the expensive search space of latent-aware causal discovery methods. The combined pipeline is robust to latent confounders and yields substantially higher accuracy than the transformer alone (see Figure \ref{fig:postprocessing_and_domain_index}.a).}

\paragraph{\revision{The potential of handling instantaneous relationships.}}\label{sec:potential-of-handling-instantaneous-relationships} \revision{The decoder-only transformer lacks the ability to model instantaneous relationships due to its autoregressive nature. Unobserved contemporaneous effects can influence autoregressive learning in a way that resembles latent confounding, inducing spurious edges in the recovered structure. A similar approach to the one used for latent confounders can be employed to handle instantaneous relationships. We combine the transformer's learned lagged structure with a fully connected contemporaneous graph and use it as the initial skeleton for a statistical causal discovery method that can handle both lagged and contemporaneous relationships (e.g., PCMCI+ or DYNOTEARS \citep{pamfil2020dynotears, runge2020pcmci+}). This procedure refines the confounded lagged graph and identifies instantaneous relationships, leading to a more precise recovered structure. Both contemporaneous effects and unobserved confounders can be regarded as latent variables, which are common in real-world data; however, mechanisms for handling them are lacking in current autoregressive learning. We leave native ways to model latent variables in transformers as a promising direction for future work (see Figure~\ref{fig:postprocessing_and_domain_index}.b).}

\paragraph{\revision{Integration with known domain indicators in non-stationary settings.}} \label{sec:domain-index}

\revision{Exploiting variation across environments and distributions helps identify causal structures and representations \citep{huang2020causal,khemakhem2020variational}. Providing domain indicators lets the model separate cross-domain changes from invariants. We encode a domain index, proxying distribution shifts, as an additional input to a decoder-only transformer, improving data efficiency in both standard and highly complex settings and helping disentangle structure within representations (see Figure \ref{fig:postprocessing_and_domain_index}.c).}

\paragraph{\revision{Uncertainty analysis.}} \label{sec:uncertainty-analysis}

\revision{Statistical causal discovery outputs a population-level graph and estimates uncertainty via resampling (e.g., bootstrap). With transformers, we can aggregate per-sample point estimates and use their standard deviation to gauge consistency. Because larger mean relevance scores often have larger raw score variance, we rank each target’s candidate parents within every sample and summarize these ranks by their mean and standard deviation. True edges show a higher mean rank and a lower rank standard deviation, indicating greater confidence (Figure \ref{fig:uncertainty-analysis}). This offers a pragmatic way to surface the most reliable edges when precision is prioritized. In graphs with varied degrees, combining the mean and variance of ranks with a global top-k yields more accurate structures than using the mean of raw scores with row-wise top-k in both linear and nonlinear settings. More results are provided in Appendix~\S\ref{sec:additional-uncertainty-analysis}.} 

\begin{figure}[htbp]
    \centering
    \begin{subfigure}[b]{0.32\textwidth}
        \centering
        \includegraphics[width=\textwidth]{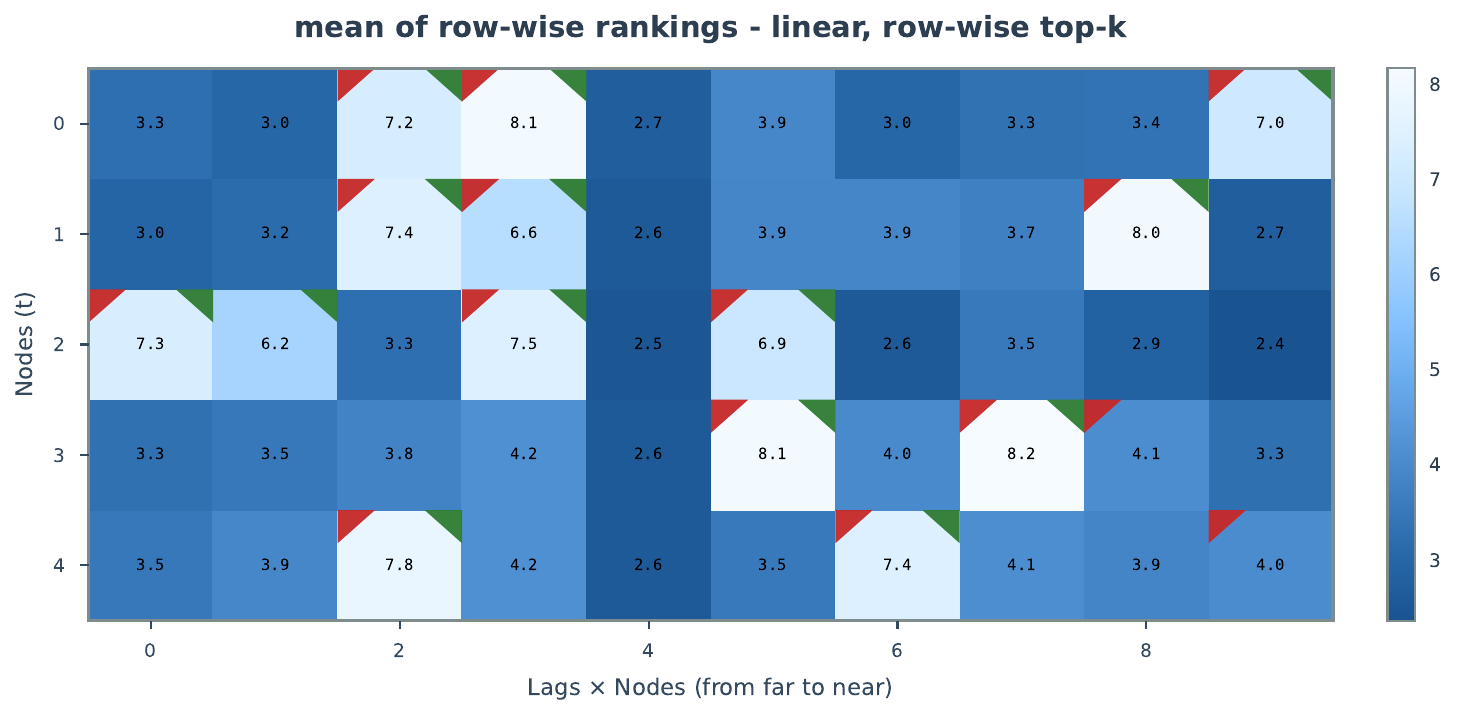}
        \caption{}
    \end{subfigure}
    \hfill
    \begin{subfigure}[b]{0.32\textwidth}
        \centering
        \includegraphics[width=\textwidth]{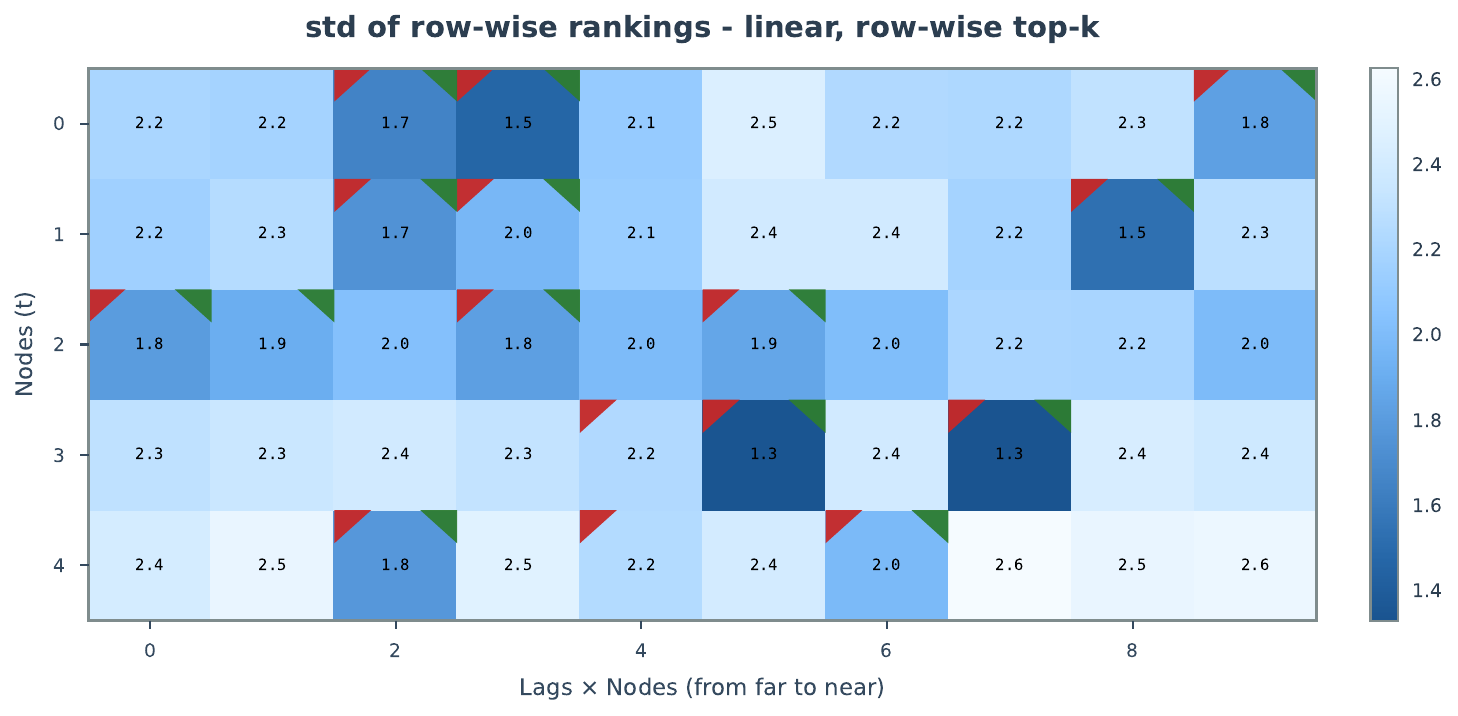}
        \caption{}
    \end{subfigure}
    \hfill
    \begin{subfigure}[b]{0.32\textwidth}
        \centering
        \includegraphics[width=\textwidth]{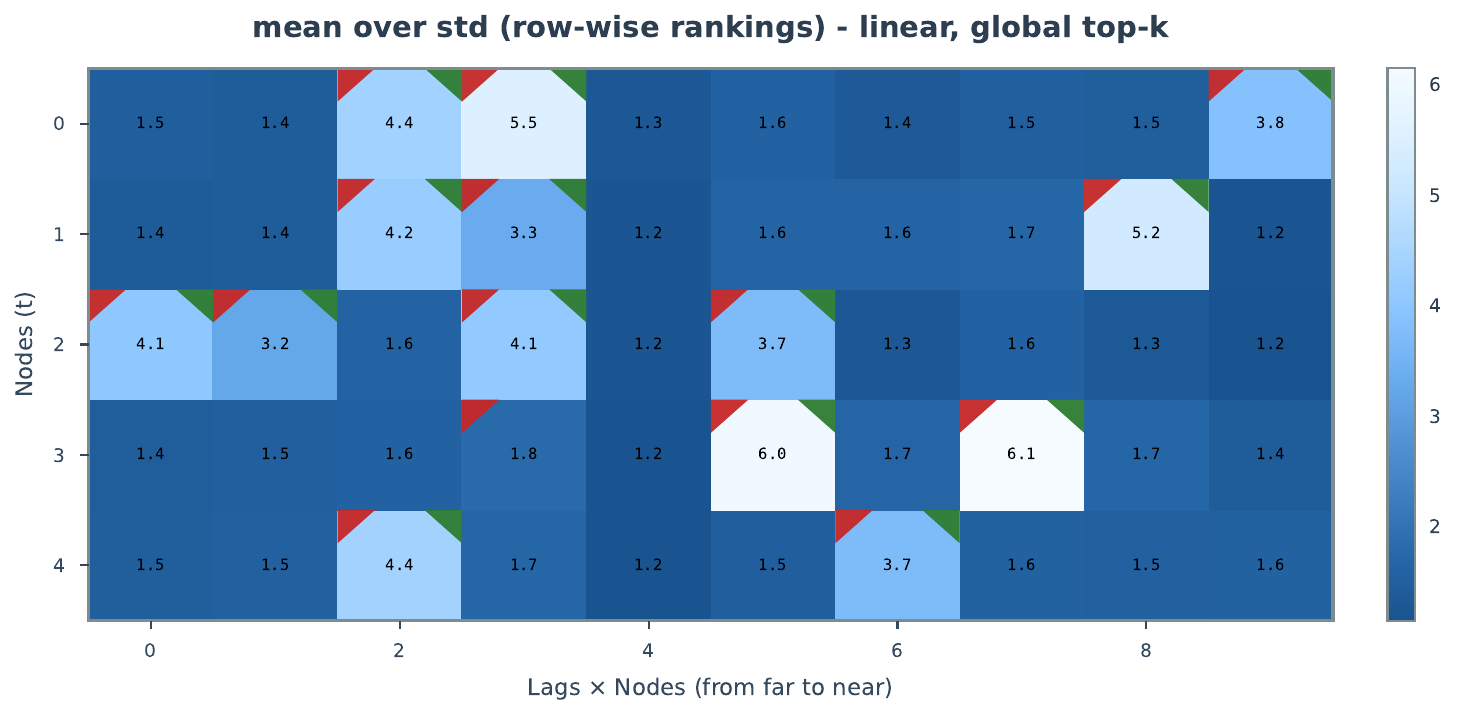}
        \caption{}
    \end{subfigure}
    \caption{\revision{\textbf{Uncertainty analysis of causal structure estimation.} Mean and variance of relevance score rankings across samples for potential parents of target variables. Larger mean rankings tend to have a lower variance in rankings, indicating the model's confidence in identifying true causal relationships. The top-left red triangle means that model predicts there is a causal edge and top-right green triangle means that there is a true edge between the two variables.}}
    \label{fig:uncertainty-analysis}
\end{figure}

\paragraph{Attention and gradient attribution.} \label{sec:attention-and-gradient-attribution}

We also evaluate non-gradient proxies, such as raw attention scores, for recovering causal structure (see Figure~\ref{fig:lrp-vs-attention-depth}). We find that the relationship between attention scores and gradient-based attributions differs for deep and shallow transformers. In deep transformers, attention scores reveal little information about the learned structure, while in one-layer transformers, structures extracted from attention are much more accurate and better aligned with LRP outputs. This aligns with findings that as depth increases, repeated attention routing and residual/MLP updates mix token representations, so attention no longer faithfully reflects token dependencies \citep{serrano2019isattention,jain2019attention}.

\begin{figure}
    \centering
    \vspace{-6pt}
    \includegraphics[width=\textwidth]{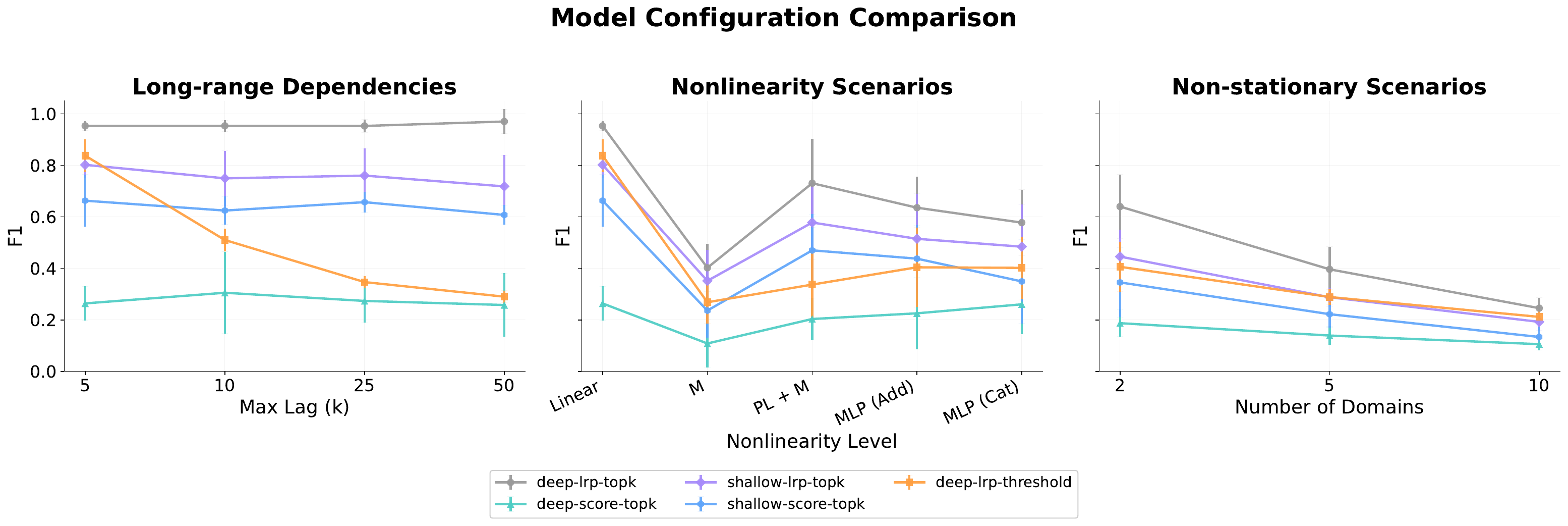}
    \caption{\textbf{Transformer variants performance comparison on challenging regimes.} \textbf{Left:} F1 scores on long-range dependencies. \textbf{Middle:} F1 scores on linear and nonlinear dynamics. \textbf{Right:} F1 scores on non-stationary dependencies.}
    \label{fig:lrp-vs-attention-depth}
\end{figure}




\paragraph{The effect of model depth.} \label{sec:effect-of-model-depth}

The depth of the transformer primarily affects its capacity and expressivity to capture complex dependencies. With more layers, the transformer can model more complex structures and longer-range effects. In our nonlinear and long-range settings, deeper transformers achieve higher accuracy in recovering causal structure and show clear advantages with LRP readout. This highlights the potential of deep transformers for modeling highly heterogeneous long-range dynamics, echoing the success of pretrained large language and vision models.

\paragraph{The effect of graph binarization.} \label{sec:effect-of-graph-binarization}

Different binarization rules can lead to distinct causal graphs. We compare thresholding and top-k. Thresholding performs similarly to top-$k$ when the number of variables and the lag window are moderate, but its precision degrades as the context length grows. The importance of variables varies non-uniformly across lags with longer contexts. Top-$k$ provides a simple, effective way to control the precision–recall trade-off. Similar to max-depth limits in classical methods (PC, GES, etc.), choosing $k$ with domain knowledge lets us control edge density (e.g., use a small $k$ when the goal is to recover only the most important interactions).



\section{Conclusion}
We ask whether modern sequence models can reveal causal structure while learning to forecast. We prove that, under standard assumptions, decoder-only transformers trained autoregressively admit causal identifiability: the output’s sensitivities to lagged inputs recover the true parents. We operationalize this with an aggregated gradient-energy readout (LRP) and a simple top-k binarization. Across nonlinear, long-range, high-dimensional, and non-stationary regimes, this procedure surpasses strong baselines and improves monotonically with data. This reframes discovery as a by-product of scalable representation learning while giving foundation models a causal lens. In sum, transformers are not only strong forecasters: read out with gradients, they are scalable causal learners, and causality, in turn, offers principled guidance to make foundation models more robust, data-efficient, and interpretable.

\newpage

\bibliography{iclr2026_conference}
\bibliographystyle{iclr2026_conference}

\newpage
\appendix
\section{Appendix}
\setcounter{theorem}{0}
\setcounter{lemma}{0}
\renewcommand{\theHtheorem}{app.\arabic{theorem}}
\renewcommand{\theHlemma}{app.\arabic{lemma}}

\theoremstyle{plain}
\newtheorem{corollary}[theorem]{Corollary}
\newtheorem{proposition}[theorem]{Proposition}

\theoremstyle{definition}
\newtheorem{definition}[theorem]{Definition}

\theoremstyle{remark}
\newtheorem{assumption}{Assumption}
\newtheorem{remark}{Remark}

\subsection{Identifiability of the causal structure} \label{sec:identifiability-of-the-causal-structure}

\begin{definition}[Lagged Causal Structure Identifiability]
\label{def:structure-id}
The lagged causal structure is identifiable if the parent set $S = \{j : X_{j,t-\ell} \in \operatorname{Pa}(Y_t)\}$ can be uniquely recovered from observational data.
\end{definition}

We formalize when gradients of the conditional distribution recover the lagged causal parents. For a fixed target $Y := X_{i,t}$, we write $H_j$ for the score gradient energy of the flattened covariate $X_j$; this corresponds to $H_{j',i}^{\ell}$ in the main text where $j$ indexes the pair $(j', \ell)$. Consider a $p$-variate time series where each variable follows
\[
X_{i,t} = f_i(\operatorname{Pa}(i,t), U_{t}, N_{i,t}), \quad i = 1, \ldots, p,
\]
where $\operatorname{Pa}(i,t) \subseteq \{X_{j,t-\ell} : j \in [p],\, \ell \in [L]\}$ are the lagged parents, $U_t$ denotes unobserved processes, and $N_{i,t}$ are independent noises. To identify the full causal graph, we analyze each target variable separately. Fix a target variable $Y := Y_t$ (which can be any $X_{i,t}$), and let $X = (X_1, \ldots, X_d)$ collect all covariates formed by stacking all $p$ variables over lags $1{:}L$. Write $S \subseteq \{1, \ldots, d\}$ for the index set of the direct time-lagged parents $\operatorname{Pa}(Y)$ inside $X$. For the target variable, we have:
\[
Y_t = f_Y(\operatorname{Pa}(Y,t), U_t, N_{Y,t}),
\]
where $N_{Y,t}$ is independent noise satisfying $N_{Y,t} \perp (X, U_{\leq t})$.

\paragraph{Assumptions.}
We work under the following conditions:
\begin{itemize}
    \item \revision{\textbf{Conditional exogeneity}: the unobserved process satisfies $U_t \perp X_{S^c} \mid X_S$. This holds automatically when $U_t$ does not exist (becomes causal sufficiency) or is temporally independent (i.e., $U_t \perp U^{t-1}$), combined with no instantaneous effects \citep{white2010granger}.}
    \item \textbf{No instantaneous effects}: edges from time $t$ to $t$ are absent; all parents of $Y_t$ live at lags $\ell\ge1$ \citep{peters2013timino,runge2019sciadv}.
    \item \textbf{Lag-window coverage}: the constructed design vector $X$ contains all true lagged parents of $Y_t$ (the chosen maximum lag $L$ is at least the causal horizon).
    \item \textbf{Faithfulness}: the distribution is faithful to the underlying time-lagged graph, so that no independences arise from measure-zero cancellations \citep{pearl2009causality,spirtes2000causation,peters2013timino}.
    \item \textbf{Support and regularity}: the law of $X$ admits a density supported on a rectangle $\Omega\subset\mathbb{R}^d$ \citep{evans2010pde,adams2003sobolev}.
    \item \revision{\textbf{Conditional density and score regularity}: for $\mathbb{P}_X$-almost every $x\in\Omega$, the conditional law of $Y$ given $X=x$ admits a density $p^*(\cdot\mid x)$ with respect to Lebesgue measure on $\mathbb{R}$ that is strictly positive on its support. The log-density
    \[
        \ell^*(y,x)\;:=\;\log p^*(y\mid x)
    \]
    belongs to $W^{1,2}_{\mathrm{loc}}(\mathbb{R}\times\Omega)$ as a function of $(y,x)$, so that its weak partials $\partial_{x_j}\ell^*$ exist and are square-integrable under the joint law of $(Y,X)$.}
\end{itemize}

\revision{Note that assuming conditional exogeneity is weaker than assuming causal sufficiency (no latent confounders) and independent noise (independent from past input and all the other exogenous inputs) at the same time: it permits latent confounders, provided they do not induce dependence between $Y_t$ and non-parents $X_{S^c}$ beyond what is mediated by the parents $X_S$. Moreover, the Causal Markov property $Y\perp X_{S^c}\mid X_S$ is not assumed separately; it follows from conditional exogeneity and the DGP. Here we assume plain faithfulness instead of strong-faithfulness \citep{uhler2013geometry}, which only requires the conditional independence in the probability distribution is entailed by d-separation in G. In other words, no 'accidental' independencies exist that aren't implied by the graph structure. The plain faithfulness violations are practically zero: (i) Unfaithful distributions form a collection of hypersurfaces in parameter space, which have Lebesgue measure zero \citep{meek2013strong, spirtes2000causation}. This becomes harder when the data generation process is nonlinear, where the asymmetry caused by the nonlinear functions makes cancellation even harder. In the real world, violations are structurally unstable since any perturbation to parameters restores faithfulness. Below, we provide a general proof suitable for almost arbitrary observation distributions.}

\revision{\paragraph{Score gradient energy.}
Define the \emph{score} of the conditional density with respect to $x$ as
\[
    s(y,x)\;:=\;\nabla_x \ell^*(y,x),\qquad s_j(y,x)\;:=\;\partial_{x_j} \ell^*(y,x),
\]
and the corresponding \emph{score gradient energy}
\[
    H_j\;:=\;\mathbb{E}\Big[\big(s_j(Y,X)\big)^2\Big],\quad j=1,\ldots,d.
\]
Intuitively, $s_j(Y,X)$ measures how sensitive the entire conditional law of $Y$ is to perturbations of the coordinate $X_j$ at the realized pair $(Y,X)$, and $H_j$ aggregates this sensitivity over the population.}

\revision{\begin{lemma}[Zero weak partial implies no dependence]\label{lem:zero-partial}
Let $f\in W^{1,1}_{\mathrm{loc}}(\Omega)$ on a rectangle $\Omega\subset\mathbb{R}^d$. If $\partial_{x_j} f=0$ holds almost everywhere on $\Omega$, then there exists a measurable $h$ with $f(x)=h(x_{-j})$ almost everywhere. Conversely, if $f$ does not depend on $x_j$, then $\partial_{x_j} f=0$ holds almost everywhere.
\end{lemma}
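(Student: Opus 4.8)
The plan is to handle the two directions separately, with the converse reducing to a one-line integration by parts and the forward implication requiring a short regularity argument. For the converse I would assume $f(x)=h(x_{-j})$ for some measurable $h$ and exploit the product structure of the domain: since $\Omega$ is a rectangle, write $\Omega = I_j \times \Omega_{-j}$ with $I_j$ an interval and $\Omega_{-j}$ a rectangle in the remaining coordinates. For any test function $\varphi \in C_c^\infty(\Omega)$, Fubini's theorem gives $\int_\Omega f\,\partial_{x_j}\varphi\,dx = \int_{\Omega_{-j}} h(x_{-j})\big(\int_{I_j}\partial_{x_j}\varphi\,dx_j\big)\,dx_{-j}$, and the inner integral vanishes because $\varphi$ has compact support in $I_j$. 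By the definition of the weak derivative, this shows $\partial_{x_j}f = 0$ almost everywhere.

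For the forward direction I would use mollification to reduce to the smooth classical statement. Fix a sub-rectangle $\Omega'$ whose closure is contained in $\Omega$ and, for small $\epsilon$, set $f_\epsilon = f * \rho_\epsilon$ with $\rho_\epsilon$ a standard mollifier; then $f_\epsilon$ is smooth on $\Omega'$, and mollification commutes with the weak derivative, so $\partial_{x_j}f_\epsilon = (\partial_{x_j}f)*\rho_\epsilon = 0$ on $\Omega'$. Since $f_\epsilon$ is smooth with vanishing classical $x_j$-derivative and each $x_j$-slice of a rectangle is an interval, $f_\epsilon$ is genuinely independent of $x_j$ on $\Omega'$. The remaining task is to transfer this invariance to the limit: because $f_\epsilon \to f$ in $L^1(\Omega')$ and the set of $L^1$ functions invariant under translation in the $e_j$ direction is a closed subspace (the map $s \mapsto \tau_s g$ is continuous in $L^1$, and $\tau_s f_\epsilon = f_\epsilon$ for each $\epsilon$), the limit $f$ also satisfies $\tau_s f = f$, i.e.\ $f$ does not depend on $x_j$ almost everywhere on $\Omega'$.

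Finally I would globalize and exhibit $h$ explicitly. Exhausting $\Omega$ by an increasing sequence of sub-rectangles $\Omega'_n$ with closures in $\Omega$ and using the product decomposition $\Omega = I_j \times \Omega_{-j}$, I would define $h(x_{-j})$ as the a.e.\ common value of $f$ along the slice $\{x_{-j}\}\times I_j$, concretely $h(x_{-j}) := f(x_j^0, x_{-j})$ for a fixed reference $x_j^0 \in I_j$, with measurability of $h$ following from Fubini. The main obstacle I anticipate is not any single estimate but the bookkeeping that upgrades slicewise constancy into a single globally measurable $h$: one must control the null set where constancy fails uniformly across slices, which the rectangle (product) geometry makes clean, since there are no disconnected $x_j$-slices that could carry different constants. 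An alternative route that sidesteps mollification is to invoke the absolute-continuity-on-lines characterization of $W^{1,1}_{\mathrm{loc}}$ and apply the fundamental theorem of calculus on almost every line parallel to $e_j$; I would note this as a shortcut but prefer the mollification argument for being self-contained.
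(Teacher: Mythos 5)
Your proof is correct, but it takes a genuinely different route from the paper's. The paper proves the forward direction in one step via the absolute-continuity-on-lines characterization of $W^{1,1}_{\mathrm{loc}}$: on almost every line parallel to $e_j$ the fundamental theorem of calculus gives $f(t_2,x_{-j})-f(t_1,x_{-j})=\int_{t_1}^{t_2}\partial_{x_j}f(s,x_{-j})\,ds=0$, so $f$ is constant on almost every slice --- this is exactly the ``shortcut'' you mention and set aside. Your mollification argument ($\partial_{x_j}f_\epsilon=(\partial_{x_j}f)*\rho_\epsilon=0$, hence $f_\epsilon$ constant in $x_j$ on connected slices, then passing translation invariance to the $L^1$ limit) is more self-contained, trading the ACL theorem as a black box for elementary mollifier facts at the cost of two pieces of bookkeeping you should make explicit: the identity $\tau_s f_\epsilon=f_\epsilon$ only holds on the overlap $\Omega'\cap(\Omega'-se_j)$, not on all of $\Omega'$; and the definition $h(x_{-j}):=f(x_j^0,x_{-j})$ is only licit for almost every choice of the reference point $x_j^0$ (or via slice-averaging), since $f$ is an $L^1$ equivalence class and a fixed hyperplane is null. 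Your converse direction, via Fubini and the vanishing of $\int_{I_j}\partial_{x_j}\varphi\,dx_j$ for compactly supported test functions, is actually more detailed than the paper's, which asserts it in one sentence. Both proofs lean on the rectangle (product) structure of $\Omega$ in the same essential way, to guarantee connected slices and a clean Fubini decomposition.
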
}

\revision{\begin{proof}
Assume $\partial_{x_j} f=0$ almost everywhere. Fix $x_{-j}$. For almost every line $t\mapsto (t,x_{-j})$, it yields
$f(t_2,x_{-j})-f(t_1,x_{-j})=\int_{t_1}^{t_2}\partial_{x_j} f(s,x_{-j})\,ds=0$,
so $f(t,x_{-j})$ is (a.e.) constant in $t$. Thus there is a measurable $h$ with $f(x)=h(x_{-j})$ a.e. Conversely, if $f$ does not depend on $x_j$, then its weak partial $\partial_{x_j} f$ is $0$ almost everywhere.
\end{proof}}

\revision{\begin{lemma}[Conditional exogeneity implies Causal Markov]\label{lem:cond-exog}
Under the data generating process $Y = f_Y(X_S, U_t, N_{Y,t})$ with $N_{Y,t} \perp (X, U_t)$, and conditional exogeneity $U_t \perp X_{S^c} \mid X_S$, the conditional distribution of $Y$ given $X$ depends only on $X_S$:
\[
    p^*(y \mid x) = p^*(y \mid x_S).
\]
In particular, $Y \perp X_{S^c} \mid X_S$.
\end{lemma}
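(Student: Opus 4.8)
The plan is to prove the equivalent statement that for every bounded measurable test function $g:\mathbb{R}\to\mathbb{R}$, the conditional expectation $\mathbb{E}[g(Y)\mid X]$ is a function of $X_S$ alone. This is equivalent to $p^*(\cdot\mid x)=p^*(\cdot\mid x_S)$ and immediately yields $Y\perp X_{S^c}\mid X_S$. I would phrase the whole argument through regular conditional distributions and test functions rather than densities, since the data-generating process guarantees a conditional density for $Y$ but not for the latent process $U_t$, so a disintegration formulation is safer than manipulating joint densities directly.

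First I would substitute the structural equation $Y=f_Y(X_S,U_t,N_{Y,t})$, which is measurable by the regularity assumption, to write $\mathbb{E}[g(Y)\mid X=x]=\int g\big(f_Y(x_S,u,n)\big)\,\mathrm{d}\mathcal{L}_{(U_t,N_{Y,t})\mid X=x}(u,n)$. The heart of the argument is then to factor the joint conditional law of $(U_t,N_{Y,t})$ given $X=x$ and to show it depends on $x$ only through $x_S$. Since $N_{Y,t}\perp(X,U_t)$, the regular conditional law of $N_{Y,t}$ given $(X,U_t)$ coincides with its marginal $\mathcal{L}_{N}$; disintegrating, the joint conditional law factors as $\mathrm{d}\mathcal{L}_{U_t\mid X=x}(u)\,\mathrm{d}\mathcal{L}_{N}(n)$. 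Next, conditional exogeneity $U_t\perp X_{S^c}\mid X_S$ gives $\mathcal{L}_{U_t\mid X=x}=\mathcal{L}_{U_t\mid X_S=x_S}$, so that the entire conditional law of $(U_t,N_{Y,t})$ given $X=x$ is constant in $x_{S^c}$.

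Substituting this factorization back, $\mathbb{E}[g(Y)\mid X=x]=\int g\big(f_Y(x_S,u,n)\big)\,\mathrm{d}\mathcal{L}_{U_t\mid X_S=x_S}(u)\,\mathrm{d}\mathcal{L}_{N}(n)$ is a measurable function of $x_S$ alone. By the tower property this equals $\mathbb{E}[g(Y)\mid X_S]$ almost surely. Letting $g$ range over a measure-determining class (bounded continuous functions, or indicators of a $\pi$-system generating the Borel sets) upgrades equality of these conditional expectations to equality of the regular conditional distributions, i.e. $p^*(y\mid x)=p^*(y\mid x_S)$, from which $Y\perp X_{S^c}\mid X_S$ follows at once.

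The main obstacle is the measure-theoretic factorization of $\mathcal{L}_{(U_t,N_{Y,t})\mid X}$: the two hypotheses act on different slices, as $N_{Y,t}\perp(X,U_t)$ controls the noise coordinate while conditional exogeneity only screens $U_t$ through $X_S$, and one must verify they combine into a product kernel that is genuinely constant in $x_{S^c}$. A careful disintegration and an appeal to the essential uniqueness of regular conditional probabilities handle this; once that factorization is in place, the substitution of the structural equation and the tower-property step are routine.
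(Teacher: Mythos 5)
Your proof is correct and follows essentially the same route as the paper's: substitute the structural equation, use $N_{Y,t}\perp(X,U_t)$ to pull the noise law out as a fixed marginal, and then invoke conditional exogeneity to replace $\mathcal{L}_{U_t\mid X=x}$ by $\mathcal{L}_{U_t\mid X_S=x_S}$, so that the conditional law of $Y$ given $X=x$ depends on $x$ only through $x_S$. The only difference is presentational — you work with bounded test functions and regular conditional distributions where the paper uses indicator functions of measurable sets $A$ — which adds measure-theoretic care but no new mathematical content.
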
}

\revision{\begin{proof}
By the DGP, $Y = f_Y(X_S, U_t, N_{Y,t})$. For any measurable set $A$,
\[
    P(Y \in A \mid X = x) = \int P(f_Y(x_S, U_t, n) \in A \mid X = x) \, dP_{N_{Y,t}}(n),
\]
where we used $N_{Y,t} \perp (X, U_t)$. By conditional exogeneity, $P(U_t \mid X = x) = P(U_t \mid X_S = x_S)$, so the integrand depends on $x$ only through $x_S$. Hence $P(Y \in A \mid X = x) = P(Y \in A \mid X_S = x_S)$.
\end{proof}}

\revision{\begin{lemma}[Zero score partial and conditional independence]\label{lem:zero-score}
Under the conditional density and score regularity assumption, the following are equivalent for each $j\in\{1,\ldots,d\}$:
\begin{enumerate}
    \item $\partial_{x_j}\ell^*(y,x)=0$ for almost all $(y,x)\in\mathbb{R}\times\Omega$;
    \item there exists a measurable $h$ such that $\ell^*(y,x)=h(y,x_{-j})$ almost everywhere;
    \item the conditional density $p^*(y\mid x)$ does not depend on $x_j$, i.e., $p^*(y\mid x)=\tilde p(y\mid x_{-j})$ almost everywhere for some measurable $\tilde p$;
    \item $Y\perp X_j\mid X_{-j}$.
\end{enumerate}
In particular, $H_j=\mathbb{E}[(\partial_{x_j}\ell^*(Y,X))^2]=0$ if and only if $Y\perp X_j\mid X_{-j}$.
\end{lemma}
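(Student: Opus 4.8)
The plan is to prove the four conditions equivalent in the cyclic order $(1)\Rightarrow(2)\Rightarrow(3)\Rightarrow(4)\Rightarrow(1)$, and then to deduce the ``in particular'' claim by comparing Lebesgue-null sets with joint-law-null sets. The first link, $(1)\Leftrightarrow(2)$, is essentially a direct invocation of Lemma~\ref{lem:zero-partial}. I would view $\ell^*$ as a function on the rectangle $\mathbb{R}\times\Omega\subset\mathbb{R}^{1+d}$, single out the coordinate $x_j$, and treat $(y,x_{-j})$ as the remaining variables. The regularity assumption supplies $\ell^*\in W^{1,2}_{\mathrm{loc}}(\mathbb{R}\times\Omega)\subset W^{1,1}_{\mathrm{loc}}(\mathbb{R}\times\Omega)$ (since $L^2\subset L^1$ locally), so Lemma~\ref{lem:zero-partial} applies verbatim and yields that $\partial_{x_j}\ell^*=0$ a.e. is equivalent to $\ell^*(y,x)=h(y,x_{-j})$ a.e. for some measurable $h$.

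For $(2)\Leftrightarrow(3)$ I would simply exponentiate: since $p^*(\cdot\mid x)$ is strictly positive on its support, $\ell^*=\log p^*$ is finite there and $\exp$ is a bijection, so a factorization $\ell^*(y,x)=h(y,x_{-j})$ holds a.e. iff $p^*(y\mid x)=\exp\!\big(h(y,x_{-j})\big)=:\tilde p(y\mid x_{-j})$ holds a.e. For $(3)\Leftrightarrow(4)$ I would use the density characterization of conditional independence. Assuming (3), integrating $p^*(y\mid x)=\tilde p(y\mid x_{-j})$ against $p(x_j\mid x_{-j})\,dx_j$ (a probability density in $x_j$) identifies $\tilde p(\cdot\mid x_{-j})$ with the genuine conditional density of $Y$ given $X_{-j}=x_{-j}$; hence $p^*(y\mid x)=p(y\mid x_{-j})$, which is exactly $Y\perp X_j\mid X_{-j}$. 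The converse is the standard fact that conditional independence forces the conditional law of $Y$ given $X$ to coincide with that given $X_{-j}$, recovering (3).

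The final equivalence $\{H_j=0\}\Leftrightarrow(4)$ is where the real care is needed. By construction $H_j=\mathbb{E}[(\partial_{x_j}\ell^*(Y,X))^2]$ integrates a nonnegative integrand against the joint law of $(Y,X)$, so $H_j=0$ iff $\partial_{x_j}\ell^*=0$ \emph{joint-almost-surely}, whereas condition~(1) is vanishing \emph{Lebesgue-almost-everywhere} on $\mathbb{R}\times\Omega$. I would bridge the two using the support and regularity assumptions: the law of $X$ has a density $p_X$ that is positive on the rectangle $\Omega$, and $p^*(\cdot\mid x)$ is strictly positive on its support, so the joint density $p^*(y\mid x)\,p_X(x)$ is positive precisely on the region where $\ell^*$ is finite. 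There the joint measure and Lebesgue measure are mutually absolutely continuous and hence share null sets, making ``joint-a.s.'' and ``Lebesgue-a.e.'' interchangeable and closing the equivalence.

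I expect the main obstacle to be exactly this last measure-theoretic bridge, specifically the behavior at the boundary of the support, where $\ell^*=\log p^*$ ceases to be finite. The clean way to handle it is to phrase every statement on the open support $\{(y,x):p^*(y\mid x)>0,\ x\in\Omega\}$, observe that its complement carries no joint mass (so it is invisible to $H_j$), and apply Lemma~\ref{lem:zero-partial} on the rectangle through the absolute-continuity-on-lines property already embedded in its proof. Once the null-set identification is in place, combining $\{H_j=0\}\Leftrightarrow(1)$ with the cycle $(1)\Leftrightarrow(4)$ yields $H_j=0\iff Y\perp X_j\mid X_{-j}$.
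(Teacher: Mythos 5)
Your proposal is correct and follows essentially the same route as the paper's proof: apply Lemma~\ref{lem:zero-partial} to $\ell^*$ on the rectangle $\mathbb{R}\times\Omega$, exponentiate to pass to the conditional density, and invoke the density factorization characterization of $Y\perp X_j\mid X_{-j}$. The only difference is that you explicitly handle the bridge between Lebesgue-a.e.\ vanishing of $\partial_{x_j}\ell^*$ and joint-law-a.s.\ vanishing (which is what $H_j=0$ actually gives) via mutual absolute continuity on the support; the paper's one-line proof silently identifies the two, so your extra care is a strict improvement rather than a departure.
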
}

\revision{\begin{proof}
Note that $H_j=0$ if and only if $\partial_{x_j}\ell^*(y,x)=0$ almost everywhere. By Lemma~\ref{lem:zero-partial}, this holds if and only if $\ell^*(y,x)=h(y,x_{-j})$ for some measurable $h$, which is equivalent to $p^*(y\mid x)=\exp(h(y,x_{-j}))$ depending on $x$ only through $x_{-j}$. This factorization of the conditional density is the standard characterization of $Y\perp X_j\mid X_{-j}$.
\end{proof}}

\revision{\begin{theorem}[Score-based characterization of lagged parents]\label{thm:score-parents}
Under the data generating process, conditional exogeneity, no instantaneous effects, Causal Markov and Faithfulness, and the conditional density and score regularity assumptions, the score gradient energy $H_j$ recovers the full lagged structural parent set:
\[
    H_j=0\;\iff\; j\notin S.
\]
In particular, if $k\in S$ then $H_k>0$.
\end{theorem}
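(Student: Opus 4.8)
The plan is to route both implications through the conditional-independence characterization of the score gradient energy already in hand, namely $H_j=0 \iff Y\perp X_j\mid X_{-j}$ from Lemma~\ref{lem:zero-score}. This reduces the theorem to showing that, under the stated hypotheses, $Y\perp X_j\mid X_{-j}$ holds exactly when $j\notin S$. The two directions then separate cleanly: ``$j\notin S\Rightarrow H_j=0$'' is the Markov half and rests on conditional exogeneity, whereas ``$j\in S\Rightarrow H_j>0$'' is the faithfulness half.

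For the forward direction I would suppose $j\notin S$. Lemma~\ref{lem:cond-exog} gives $p^*(y\mid x)=p^*(y\mid x_S)$, so the conditional density depends on $x$ only through the parent coordinates $x_S$; since $j\notin S$ it does not depend on $x_j$. By the equivalence (3)$\Leftrightarrow$(4) of Lemma~\ref{lem:zero-score} this is precisely $Y\perp X_j\mid X_{-j}$, and the same lemma then yields $H_j=0$. Equivalently, $\partial_{x_j}\ell^*\equiv 0$ almost everywhere, so $H_j=\mathbb{E}[(\partial_{x_j}\ell^*)^2]=0$ directly.

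For the reverse direction I would argue by contraposition: assume $H_j=0$ and deduce $j\notin S$. By Lemma~\ref{lem:zero-score}, $H_j=0$ forces $Y\perp X_j\mid X_{-j}$. Suppose toward a contradiction that $j\in S$, i.e.\ the lagged graph $\mathcal{G}^*$ contains the direct edge $X_j\to Y$. This is a single-edge path with no intermediate node, and the conditioning set $X_{-j}$ contains neither endpoint, so the edge is an active (unblocked) path; hence $X_j$ and $Y$ are $d$-connected given $X_{-j}$, regardless of whatever other paths exist. Faithfulness then forces $Y\not\perp X_j\mid X_{-j}$, contradicting the conditional independence just obtained. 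Therefore $j\notin S$, which is the contrapositive of $j\in S\Rightarrow H_j>0$.

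The main obstacle is the $d$-connection claim underlying the faithfulness step: one must verify that conditioning on all the remaining covariates cannot sever the direct parent--child link. This is where no-instantaneous-effects and lag-window-coverage do the real work. Since $Y=Y_t$ sits at time $t$ while every covariate in $X$ lies at strictly earlier times, $Y$ has no descendants inside $X$, so conditioning on $X_{-j}$ cannot activate any collider path terminating at $Y$; and the direct edge itself, having no mediator and neither endpoint in the conditioning set, is always active. Thus $d$-connection is guaranteed and faithfulness applies verbatim. The remaining measure-theoretic bookkeeping---that the weak-partial, density-factorization, and conditional-independence statements are equivalent almost everywhere under the regularity assumptions---is already discharged by Lemmas~\ref{lem:zero-partial} and \ref{lem:zero-score}.
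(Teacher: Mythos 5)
Your proposal is correct and follows essentially the same route as the paper's proof: both directions are reduced to the equivalence $H_j=0\iff Y\perp X_j\mid X_{-j}$ from Lemma~\ref{lem:zero-score}, with the ``$j\notin S$'' direction obtained from conditional exogeneity (you route it explicitly through Lemma~\ref{lem:cond-exog}, the paper through the Causal Markov property it derives from that same lemma) and the converse obtained by faithfulness applied to the unblockable direct edge $X_j\to Y$. Your extra remark that $Y$ has no descendants among the covariates is harmless but not needed, since a single-edge path has no non-endpoint vertices and hence cannot be blocked by any conditioning set excluding its endpoints.
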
}

\begin{proof}
\revision{\textbf{Claim 1:} $j\notin S \Rightarrow H_j=0$. By the Causal Markov property for the time-lagged graph, $Y_t\perp (\text{Past}\setminus \operatorname{Pa}(Y_t))\mid \operatorname{Pa}(Y_t)$. In particular, $Y\perp X_{S^c}\mid X_S$. This implies $Y\perp X_j\mid X_{-j}$ for any $j\in S^c$. Lemma~\ref{lem:zero-score} then yields $H_j=0$.}

\revision{\textbf{Claim 2:} $H_j=0 \Rightarrow j\notin S$. Suppose $H_j=0$. By Lemma~\ref{lem:zero-score}, this is equivalent to $Y\perp X_j\mid X_{-j}$. Consider the time-lagged causal graph in which $S$ indexes the direct lagged parents of $Y_t$ inside $X$. If $j\in S$, there is a directed edge $X_j\to Y$, and this edge is not blocked by conditioning on $X_{-j}$, so $X_j$ and $Y$ are d-connected given $X_{-j}$. Thus the graph does \emph{not} entail $Y\perp X_j\mid X_{-j}$. By Faithfulness, such a conditional independence cannot hold in the observational distribution if $j\in S$. Hence $j\notin S$. For any $k\in S$, the contrapositive implies that $Y\not\perp X_k\mid X_{-k}$ and therefore $H_k>0$.}
\end{proof}

\subsection{Special case: Homoscedastic additive Gaussian noise}
\label{sec:additive-gaussian}

\revision{We now specialize to the homoscedastic additive noise setting, where the score gradient energy reduces to a simple function of regression gradients. We use the regression-based gradient attributions and MSE objective in our implementations for simplicity.}

\paragraph{Additional assumption.}
\begin{itemize}
    \item \textbf{Homoscedastic additive Gaussian noise}: the structural equation for $Y_t$ takes the form
    \[
        Y_t = f(X_S) + \epsilon, \qquad \epsilon \sim \mathcal{N}(0,\sigma_0^2), \quad \epsilon \perp X,
    \]
    where $f:\mathbb{R}^{|S|}\to\mathbb{R}$ is a measurable function with $f\in W^{1,2}_{\mathrm{loc}}$ and $\sigma_0>0$ is a constant.
\end{itemize}

Under this model, the conditional distribution is $Y\mid X=x \sim \mathcal{N}(f(x_S), \sigma_0^2)$, with log-density
\[
    \ell^*(y,x) = - \frac{(y - f(x_S))^2}{2\sigma_0^2} + \mathrm{const}.
\]

Define the \emph{gradient energy} of the regression function by
\[
    G_j := \mathbb{E}\left[(\partial_{x_j} f(X_S))^2\right], \quad j=1,\ldots,d.
\]

\begin{proposition}[Score gradient energy under homoscedastic Gaussian noise]\label{prop:gaussian-homoscedastic}
Under the homoscedastic additive Gaussian noise model, the score gradient energy satisfies
\[
    H_j = \frac{G_j}{\sigma_0^2}.
\]
In particular, $H_j=0$ if and only if $G_j=0$.
\end{proposition}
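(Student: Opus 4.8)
The plan is to compute the conditional score in closed form, square it, and exploit the independence of the noise from the covariates to factor the resulting expectation. I would begin from the Gaussian log-density $\ell^*(y,x) = -\frac{(y-f(x_S))^2}{2\sigma_0^2} + \mathrm{const}$ and differentiate weakly in $x_j$. By the chain rule (valid for $W^{1,2}_{\mathrm{loc}}$ compositions under the stated regularity), the $j$-th score component is
\[
s_j(y,x) = \partial_{x_j}\ell^*(y,x) = \frac{(y-f(x_S))}{\sigma_0^2}\,\partial_{x_j} f(x_S).
\]

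Next I would substitute $y = Y$, $x = X$ and note that $Y - f(X_S) = \epsilon$ by the structural equation. Squaring gives $s_j(Y,X)^2 = \sigma_0^{-4}\,\epsilon^2\,(\partial_{x_j}f(X_S))^2$. Taking expectations, the key step is that $\epsilon \perp X$ makes $\epsilon^2$ independent of $(\partial_{x_j}f(X_S))^2$, which is a measurable function of $X_S$, so the expectation factorizes:
\[
H_j = \mathbb{E}[s_j(Y,X)^2] = \frac{1}{\sigma_0^4}\,\mathbb{E}[\epsilon^2]\,\mathbb{E}[(\partial_{x_j}f(X_S))^2] = \frac{1}{\sigma_0^4}\,\sigma_0^2\,G_j = \frac{G_j}{\sigma_0^2},
\]
using $\mathbb{E}[\epsilon^2] = \sigma_0^2$ and the definition of $G_j$. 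Since $\sigma_0 > 0$, the equivalence $H_j = 0 \iff G_j = 0$ is immediate.

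I would finish by recording the edge case and the finiteness bookkeeping. For $j \notin S$ the function $f$ does not involve the coordinate $x_j$, so $\partial_{x_j} f \equiv 0$ and both $G_j$ and $H_j$ vanish; the content of the identity is thus for $j \in S$. The only points requiring care are regularity rather than algebra: I must invoke $f \in W^{1,2}_{\mathrm{loc}}$ together with the rectangular-density assumption on the law of $X$ to guarantee that the weak partials exist, that $G_j = \mathbb{E}[(\partial_{x_j} f(X_S))^2]$ is finite, and that the weak chain rule used to obtain $s_j$ holds almost everywhere. These are exactly the hypotheses already placed on the score gradient energy, so no new assumption is needed. The main (and mild) obstacle is therefore justifying the factorization and the weak chain rule under minimal smoothness, not the computation itself.
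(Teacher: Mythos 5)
Your proof is correct and follows essentially the same route as the paper: compute the Gaussian score in closed form, substitute the structural residual, and factor the expectation using $\epsilon \perp X$ with $\mathbb{E}[\epsilon^2]=\sigma_0^2$ (the paper merely phrases this via the standardized residual $Z=\epsilon/\sigma_0$). The added remarks on the $j\notin S$ edge case and weak-chain-rule bookkeeping are fine but do not change the argument.
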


\begin{proof}
Differentiating $\ell^*(y,x)$ with respect to $x_j$ yields
\[
    \partial_{x_j} \ell^*(y,x) = \frac{y-f(x_S)}{\sigma_0^2}\cdot\partial_{x_j} f(x_S).
\]
Define the standardized residual $Z := (Y - f(X_S))/\sigma_0$. By construction, $Z\sim\mathcal{N}(0,1)$ and $Z\perp X$. Substituting $Y - f = \sigma_0 Z$ gives
\[
    \partial_{x_j} \ell^*(Y,X) = \frac{Z}{\sigma_0}\cdot\partial_{x_j} f(X_S).
\]
Taking expectations and using $Z\perp X$ and $\mathbb{E}[Z^2]=1$:
\[
    H_j = \mathbb{E}\left[\frac{Z^2}{\sigma_0^2}(\partial_{x_j} f)^2\right] = \mathbb{E}[Z^2]\cdot\mathbb{E}\left[\frac{(\partial_{x_j} f)^2}{\sigma_0^2}\right] = \frac{G_j}{\sigma_0^2}.
\]
Since $\sigma_0>0$, we have $H_j=0$ if and only if $G_j=0$. When we assume the noise is sampled from the standard Gaussian distribution, $H_j$ reduces to $G_j$.
\end{proof}

\begin{corollary}[Gradient characterization under homoscedastic Gaussian noise]\label{cor:gaussian-parents}
Under the homoscedastic additive Gaussian noise model together with all assumptions from Theorem~\ref{thm:score-parents}, the gradient energy $G_j$ recovers the full lagged structural parent set:
\[
    G_j=0 \;\iff\; j\notin S.
\]
In particular, if $k\in S$ then $G_k>0$.
\end{corollary}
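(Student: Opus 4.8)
The plan is to obtain the corollary as an immediate composition of the two results already established for this special case: Proposition~\ref{prop:gaussian-homoscedastic}, which equates the regression gradient energy with the score gradient energy, and Theorem~\ref{thm:score-parents}, which characterizes the parent set through the score gradient energy. Since both ingredients are in hand, the argument is a short chain of equivalences rather than a fresh derivation.

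First I would verify that the homoscedastic additive Gaussian model genuinely falls within the scope of Theorem~\ref{thm:score-parents}; this is the only real content to check. The conditional law is $Y \mid X = x \sim \mathcal{N}(f(x_S), \sigma_0^2)$, whose density is strictly positive on all of $\mathbb{R}$, and whose log-density $\ell^*(y,x) = -(y - f(x_S))^2/(2\sigma_0^2) + \mathrm{const}$ is a quadratic in $f$. Because $f \in W^{1,2}_{\mathrm{loc}}$ and $\sigma_0 > 0$, the map $(y,x) \mapsto \ell^*(y,x)$ inherits the required $W^{1,2}_{\mathrm{loc}}$ regularity with square-integrable weak partials, so the conditional-density-and-score-regularity hypothesis of the theorem is met. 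All remaining assumptions (conditional exogeneity, no instantaneous effects, Causal Markov, faithfulness, support and regularity) are imported verbatim from the statement of the corollary.

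Next I would invoke Proposition~\ref{prop:gaussian-homoscedastic} to write $H_j = G_j/\sigma_0^2$. Since $\sigma_0 > 0$ is a fixed constant, dividing by $\sigma_0^2$ is a strictly positive rescaling, so $G_j = 0$ if and only if $H_j = 0$. Then I would apply Theorem~\ref{thm:score-parents}, which gives $H_j = 0$ if and only if $j \notin S$. Chaining the two equivalences yields $G_j = 0 \iff H_j = 0 \iff j \notin S$, which is exactly the claimed characterization. The final ``in particular'' clause is just the contrapositive: if $k \in S$ then $G_k \neq 0$, and since $G_k = \mathbb{E}[(\partial_{x_k} f)^2] \ge 0$ by definition, this forces $G_k > 0$.

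Because every hard step---the weak-derivative characterization of conditional independence in Lemma~\ref{lem:zero-score}, the faithfulness direction in Theorem~\ref{thm:score-parents}, and the score/gradient identity in Proposition~\ref{prop:gaussian-homoscedastic}---has already been carried out, I do not expect any real obstacle. The one place to be slightly careful is the regularity check in the first step: one must confirm that squaring $f$ and composing with the coordinate projection $x \mapsto x_S$ preserves local $W^{1,2}$ membership, so that the theorem's score-regularity assumption is legitimately satisfied rather than merely assumed. Everything else is formal bookkeeping.
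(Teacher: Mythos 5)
Your argument matches the paper's proof exactly: invoke Proposition~\ref{prop:gaussian-homoscedastic} to get $H_j = G_j/\sigma_0^2$ with $\sigma_0>0$, conclude $H_j=0 \iff G_j=0$, and chain with Theorem~\ref{thm:score-parents}. Your added check that the Gaussian conditional law satisfies the theorem's score-regularity hypothesis is a welcome extra care the paper leaves implicit, but the route is the same.
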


\begin{proof}
By Proposition~\ref{prop:gaussian-homoscedastic}, $H_j = G_j/\sigma_0^2$ with $\sigma_0>0$. Thus $H_j=0$ if and only if $G_j=0$. Combining with Theorem~\ref{thm:score-parents} yields the result.
\end{proof}

\begin{remark}[Extensions]
\revision{Analogous closed-form expressions for $H_j$ can be derived for other noise models (e.g., heteroscedastic Gaussian, non-Gaussian additive noise) by computing the score $\partial_{x_j}\ell^*$ and taking expectations. In heteroscedastic settings, $H_j$ generally decomposes into terms capturing the sensitivity of both the conditional mean and variance to $X_j$, ensuring that parents affecting only higher moments are correctly identified.}
\end{remark}

\begin{remark}[Practical estimation of the score]
\revision{In practice, the choice of model for estimating $\ell^*(y,x)$ or $s_j(y,x)$ depends on prior knowledge about the conditional distribution. When Gaussian noise is assumed, minimizing mean squared error yields the conditional mean $f^*(x)$, and $G_j$ can be estimated from the squared gradients of the learned regressor. For categorical outcomes, cross-entropy loss corresponds to the negative log-likelihood, and the score is obtained by differentiating the logits with respect to $x$. When the conditional distribution is unknown or highly complex, one may employ flexible density estimators such as mixture density networks, normalizing flows \citep{rezende2015variational,papamakarios2021normalizing}, or continuous normalizing flows trained via flow matching \citep{lipman2022flow}.}
\end{remark}

\revision{For computational convenience and implementation simplicity, we consistently employ MSE loss as the training objective and use the regression gradient-based attributions as the score gradient energy, assuming exogenous input sampled from a standard Gaussian distribution.}

\subsection{Attention LRP as a surrogate for gradient energy} \label{sec:attention-lrp-as-a-surrogate-for-gradient-energy}
Layer-wise Relevance Propagation (LRP) decomposes a model's output $f(x)$ into relevance scores assigned to input coordinates. For efficiency and simplicity, we adopt the Input$\times$Gradient formulation of $\varepsilon$-LRP, which expresses LRP as a single chain of Jacobian–vector products (one backward pass) with small, local modifications to the backward rule at nonlinearities and at attention/normalization layers. This implementation is equivalent to $\varepsilon$-LRP up to a layer-wise rescaling and closely follows the efficient Attention-LRP formulation used for transformers \citep{pmlr-v235-achtibat24a}.

Concretely, for a trained forecaster $\hat f$ and a scalar prediction $z:=\hat f(x)$ (e.g., the mean for regression or a logit/probability for classification), we define per-sample relevance by
\[
    R(x)\;:=\; x\;\odot\; \widetilde{\nabla}_{x} z,
\]
where $\widetilde{\nabla}_{x}$ denotes a gradient computed with the modified local Jacobians described below. Aggregating coordinates gives a global score
\[
    \tilde G_j\;:=\;\mathbb{E}\big[\,|R_j(X)|\,\big],
\]
used as a monotone proxy for $G_j=\mathbb{E}[(\partial_{x_j} f^*(X))^2]$.

\paragraph{Core (Input$\times$Gradient) LRP equations.}
For computational efficiency, we use the gradient-input reformulation in attention-aware LRP \citep{pmlr-v235-achtibat24a}:

\begin{equation}
\label{eq:ig-base}
R(x) \;=\; x\;\odot\; \Big( J_1\, J_2\, \cdots J_L\, e_i \Big)
\qquad\text{(Input$\times$Gradient with modified local Jacobians).}
\tag{IG-1}
\end{equation}

The same chain-of-Jacobian idea applies to attention and normalization layers in transformers. In practice, this yields LRP attributions in a single backward pass, after which token-level relevances are aggregated to $\tilde G_j$ as above.

\subsection{\revision{The relationship between intervention-based effects and gradient-based graph attributions}}

\revision{Interventions are often adopted in real-world causal inference scenarios to measure causal effects, and they serve as a gold standard for causal relationships. We intervene on each input coordinate by one standard deviation and measure the average effect on predictions. We analyze the relationship between intervention effects and gradient-based attributions on both linear and nonlinear datasets (see Figure~\ref{fig:intervention-edge-linear} and Figure~\ref{fig:intervention-edge-nonlinear}), and find that they are highly correlated. When the model learns the causal structure more accurately (e.g., in the linear case), the two measures are more aligned. This supports consistency between the model's learned dependency structure and the causal graph extracted from a decoder-only transformer. One can also use perturbation tests to recover the causal graph learned by the transformer, instead of relying solely on gradient attributions.}

\begin{figure}[htbp]
    \centering
    \includegraphics[width=\textwidth]{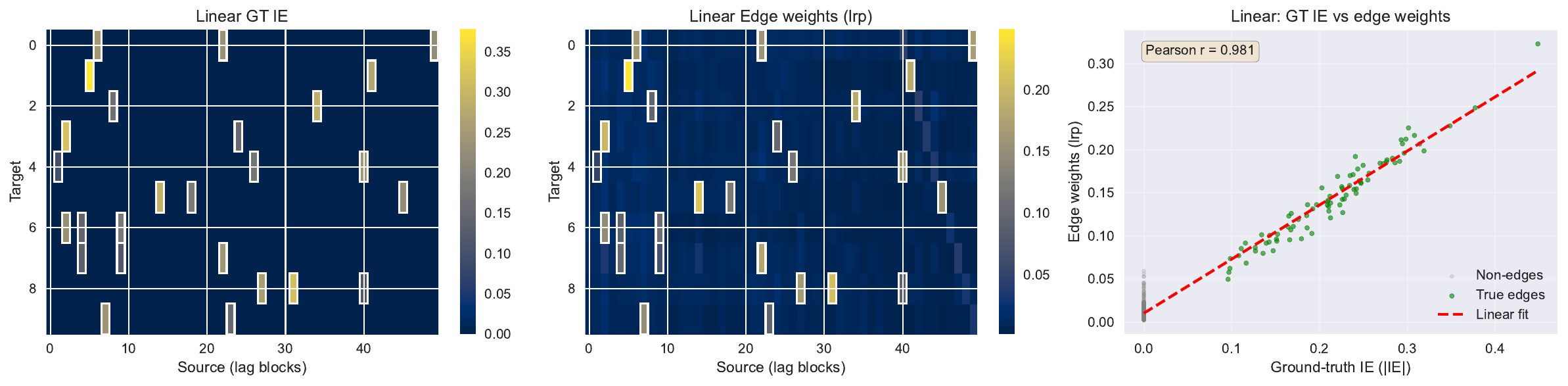}
    \caption{\revision{\textbf{Intervention effects and gradient-based attributions in the linear case.} The intervention effect is strongly correlated with the relevance score in the linear scenario.}}
    \label{fig:intervention-edge-linear}
\end{figure}

\begin{figure}[htbp]
    \centering
    \includegraphics[width=\textwidth]{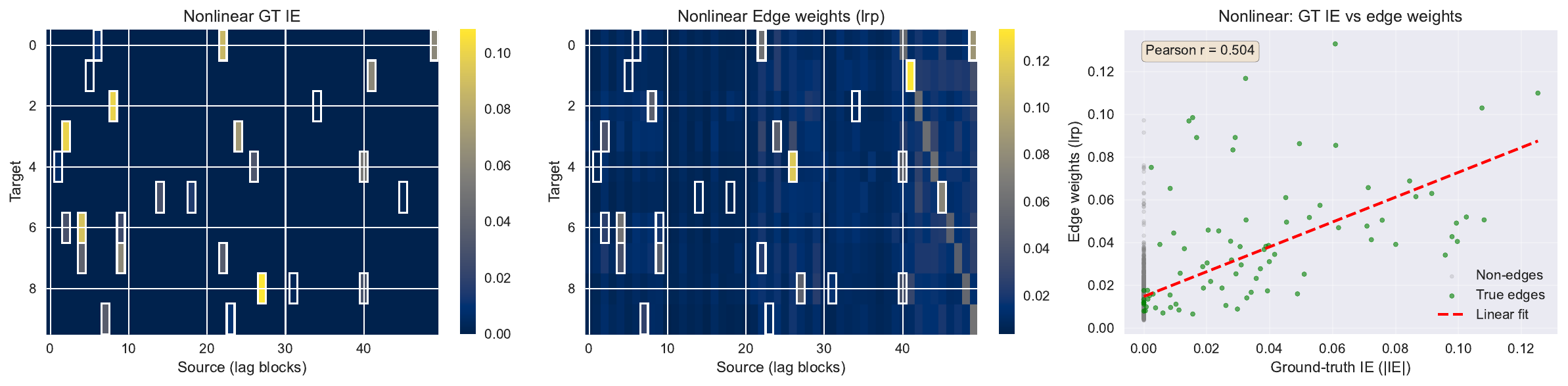}
    \caption{\revision{\textbf{Intervention effects and gradient-based attributions in the nonlinear case.} The intervention effect is strongly correlated with the relevance score in the nonlinear scenario.}}
    \label{fig:intervention-edge-nonlinear}
\end{figure}

\subsection{Experiment setups} \label{sec:experiment-setups}

\subsubsection*{Data generation and simulation} \label{sec:data-generation-and-simulation}

\paragraph{Simulator.}
We use the CDML-NeurIPS2020 structural time-series simulator to sample datasets \citep{dgp_ts}. We use a linear baseline and multiple variants along several dimensions such as number of variables, maximum lag, noise type, non-stationarity, and latent variables. For each variant, we change only the property of interest relative to the linear baseline and evaluate multiple sample sizes to study scaling behavior ($5\times10^4$, $10^5$, $10^6$).

\paragraph{Variables and lags.}
We consider systems with \(N\) observed variables and maximum lag \(K\). We disable instantaneous effects and set the transition probability to 0.3. Latent and noise autoregression are set to 0 unless noted. We use \(N \in \{10,25,50\}\) to study high-dimensional cases.

\paragraph{Control graph density via expected in-degree.}
To obtain comparable sparsity across \(N\) and \(K\), we specify an expected in-degree \(E_{\mathrm{in}}=3\) per node (aggregated across all parent candidates). 

\paragraph{Structural functions and nonlinearity.}
We control functional complexity using the following forms (the first three are additive-noise models): (1) piecewise linear (PL), a mixture of linear and piecewise-linear functions; (2) monotonic functions implemented as sums of sigmoids; (3) MLP (add), a multi-layer perceptron with additive noise injection; and (4) MLP (concat), an MLP that concatenates noise with inputs.

\paragraph{Noise types.}
We consider three noise types: Gaussian (in the linear baseline), Uniform, and Mixed. The mixed noise is a fixed mixture over distributions [Gaussian, Uniform, Laplace, Student-$t$]. We also study unequal-variance noise with a small variance range from 0.5 to 5 and a large range from 0 to 10.

\paragraph{Non-stationarity.}
To study how different approaches behave under time-varying causal structure, we partition the sequence into \(S\) contiguous segments (\(S \in \{2,5,10\}\)) and independently generate each segment with a randomly sampled graph. We construct two settings. In the regular setting, each domain has the same maximum lag (5) and the data-generating processes are linear. In the extreme setting, domains may have different maximum lags (up to 5) and the data-generating processes use monotonic functions.

\paragraph{Latent variables.}
We examine robustness in the presence of latent variables. We set the number of latent variables to $N_{\mathrm{lat}} \in \{3,5,10\}$.

\subsection{Training details and model architecture} \label{sec:training-details}
We train autoregressive transformers on lag-\(K\) windows after per-variable z-score normalization. We use an embedding dimension of 64, 4 attention heads, and either 1 (``shallow'') or 4 (``deep'') layers with pre-LayerNorm, residual connections, and a 2-layer ReLU feed-forward network, together with causal masking and node/time embeddings. Models are optimized with Adam (learning rate $10^{-3}$, batch size 256) under an MSE objective. We use official implementations for PCMCI \citep{runge2020pcmci+}, VAR-LiNGAM \citep{hyvarinen2010varlingam}, TCDF \citep{nauta2019tcdf}, NTS-NOTEARS \citep{sun2021nts}, and implementations for Granger causality from a collection repository of time-series causal discovery algorithms [Anonymous link]. The random guess baseline samples edges i.i.d.\ from a Bernoulli distribution matching the average in-degree to generate a lagged causal graph. Unless otherwise stated, we use default hyperparameters from each official implementation.

\subsection{Compute resources} \label{sec:compute-resources}
All transformer experiments are implemented in PyTorch and executed in FP32 precision on a single NVIDIA A100 GPU. Experiments that exceed six hours of runtime, including both our transformer approach and baseline methods, are terminated and classified as timeouts.

\subsection{Discussions}
\vspace{-0.5mm}

\paragraph{\revision{Transformer-based causal discovery.}}\label{sec:transformer-based-cd}

\revision{Recently, the transformer architecture has been proposed as a powerful alternative for discovering causal structures from data. CausalFormer combines additional modules such as multi-kernel causal convolution and a decomposition-based detector to extract the global causal graph from time-series data \citep{kong2024causalformer}. Symmetry-aware transformers preserve temporal asymmetry for causal discovery in financial time series \citep{zheng2025symmetry}. Besides unsupervised causal discovery, transformers are applied to learn a broad range of predefined causal relationships with synthetic data \citep{ke2022learning} and domain-specific causal graphs with interventional data \citep{wang2022learning} in a supervised manner. BCNP (Bayesian Causal Neural Process) meta-learns Bayesian networks by training a transformer-like encoder-decoder architecture on synthetic data to learn priors and adjusting the posterior at test time \citep{dhir2024meta}. In this work, we establish the connection between prediction and structure identification through transformers, with theoretical guarantees that include required conditions and concrete representations for identification. Without extra modifications, a decoder-only transformer naturally serves as a scalable causal learner in an unsupervised manner.}

\paragraph{\revision{Scalability in causal discovery.}}\label{sec:scalability-cd}
\revision{Scalability is crucial for causal discovery algorithms in real-world scenarios. Factors such as sample size, number of variables, graph density, functional forms, and data quality (e.g., latent confounders and missing values) all put pressure on scalability. Previous search methods tackle combinatorial challenges in large graphs via computational optimization, such as device acceleration \citep{zarebavani2019cupc, akinwande2024acceleratedlingam} and algorithmic techniques like divide-and-conquer and dynamic programming \citep{shah2024causal,ramsey2017million,hyttinen2017constraint}. In contrast, continuous-optimization and Neural Granger methods \citep{pamfil2020dynotears, nauta2019tcdf,tank2018neuralgranger} recover graphs at model convergence within a fixed optimization budget, and are not strictly limited by combinatorial search. However, these methods are trained on a homogeneous dataset and recover a corresponding population-level graph; they do not naturally accommodate heterogeneity and are often jointly optimized with heuristics that are sensitive to hyperparameters (e.g., $\ell_1$ penalties). By connecting to the prediction-first framework through transformers, we should expect in the long run: (1) large-scale causal discovery models trained via other downstream tasks with causal auxiliary losses, such as forecasting; and (2) increasing use of foundation models for causal discovery, potentially capturing causal relationships across broad data distributions with few-shot or minimal fine-tuning. Advances in tokenization may further improve scalability by compressing high-dimensional inputs more effectively.}

\paragraph{\revision{Structural causality and Granger causality.}}
\revision{Our framework is grounded in \emph{structural causal models} (SCMs), where $X_j \to Y$ means $X_j$ appears in the structural equation for $Y$—a mechanistic notion independent of statistical tests \citep{pearl2009causality,peters2017elements}. \citet{white2010granger} linked Granger and structural causality: under conditional exogeneity ($U_t \perp X_{S^c} \mid X_S$), Granger noncausality equals \emph{structural noncausality almost surely}. However, without faithfulness, edge cases exist where true causes are undetectable, falling short of exact identification. We strengthen this in three ways. First, by imposing faithfulness, we obtain an \emph{exact} characterization: $H_j = 0 \iff X_j \notin S$ (Theorem~\ref{thm:score-parents}). Second, we provide a concrete criterion, the score gradient energy $H_j = \mathbb{E}[(\partial_{x_j} \log p(Y|X))^2]$, directly computable from learned densities. Third, we connect this with transformers, enabling scalable causal discovery. Under standard Gaussian noise, $H_j$ reduces to $G_j$ (Corollary~\ref{cor:gaussian-parents}), connecting to standard MSE training. Crucially, the logical direction is from structural causality to predictive dependence: when Granger-style criteria succeed, they do so as a consequence of structural theory under identifying assumptions—not as an independent foundation. Thus, structural identification and distributional Granger causality converge under our assumptions, but the guarantees derive from the SCM, with Granger-style dependence as a consequence rather than a foundation.}

\paragraph{\revision{Relationships to previous theory results.}}\label{sec:relationship-theory}
\revision{Classical Granger causality \citep{granger1969} tests predictive improvement in linear VARs but lacks structural interpretation. Subsequent work pursued structural identification through diverse assumptions: LiNGAM \citep{hyvarinen2010varlingam} achieves full DAG identification via non-Gaussianity but requires linearity; PCMCI \citep{runge2020pcmci+} performs conditional independence testing under Causal Markov, faithfulness, and causal sufficiency; DYNOTEARS \citep{pamfil2020dynotears} enables continuous optimization for linear time-series DAGs; Neural Granger Causality \citep{tank2018neuralgranger} extends to nonlinear functions but establishes only sufficient conditions. The closest antecedent is \cite{white2010granger}, who proved under conditional exogeneity ($D^t \perp U^t \mid Y^{t-1}, X^t$) that Granger non-causality is equivalent to structural non-causality almost surely—that is, structural causality may exist on measure-zero sets without manifesting in conditional distributions. Our Theorem 1 strengthens this by imposing faithfulness to eliminate such pathological cases, yielding an exact characterization: $H_j = 0 \Leftrightarrow j \notin S$. Moreover, while \cite{white2010granger} relies on conditional independence testing, we provide a computable criterion, score gradient energy, that directly connects structural identification to the prediction objective, enabling practical extraction via gradient-based attribution in transformers.}

\paragraph{The difficulty of non-convex optimization.} \label{sec:the-difficulty-of-non-convex-optimization}


Traditional continuous-optimization approaches to causal discovery struggle with non-convex loss landscapes. Even under identifiability and with the correct objective, nonconvexity resulting from unequal noise variances and nonlinearities can make structure recovery nearly intractable; outcomes hinge on fragile initialization, especially with limited homogeneous data \citep{ng2024structure}. By contrast, large-scale transformer pretraining operates in a different regime: overparameterized networks have benign landscapes with many global minima \citep{du2019gradient}; in high dimensions, bad local minima are rare, while saddle points dominate \citep{kawaguchi2016deep}; and the stochasticity of SGD helps escape saddles and favors flatter, more generalizable regions \citep{jin2017escape}. This geometry enables transformers to function as scalable causal learners, effectively sidestepping the non-convex barriers that constrain classical methods.

\vspace{-0.5mm}

\paragraph{The role of prediction objective.} \label{sec:the-role-of-prediction-objective}

\revision{A Gaussian likelihood (equivalently, an MSE loss) corresponds to assuming additive homoskedastic Gaussian observation noise, but richer likelihoods can better capture heteroskedastic or multimodal dynamics and sharpen attribution.} Objectives should match the data distribution and exogenous noise. For instance, adding degrees of freedom can accommodate unequal noise variances in highly heteroskedastic data. According to our identifiability results, promising alternatives include flow-matching and diffusion objectives, as well as quantile and energy-based losses; these better model stochasticity and complex distributions. As predictive fidelity improves, the implicitly learned structure should become more accurate.

\vspace{-0.5mm}

\paragraph{The need for better structure priors.} 

Our experiments indicate that vanilla decoder-only transformers are sample-hungry for recovering correct structures from a single nonlinear generator, and heterogeneous mixtures are harder—evidence of a weak inductive bias for causal structure. Causal theory offers mature priors to close this gap. Independent Causal Mechanisms and minimality enforce mechanism independence and modular factorization, yielding cross-environment invariances and improved sample efficiency \citep{peters2017elements,huang2020causal}. Sparsity further reduces the search space, making learning more tractable in noisy settings \citep{ng2020role,zheng2018notears,perry2022causal}. Recent large-language model architectures echo these ideas: gated and block-sparse attention instantiate sparsity and modularity, mitigating spurious context coupling and improving long-context retrieval and robustness to distribution shift \citep{yuan2025native,lu2025moba}. Finally, scalable, native modeling of latent variables and instantaneous effects broadens the class of structures beyond lagged processes. Integrating such priors should improve efficiency, generalizability, and robustness.


\subsection{Additional Experiments}

\subsubsection{\revision{Can current time-series foundation models recover structure in a zero-shot way?}}

\revision{Here, we explore whether current time-series foundation models for forecasting can support zero-shot causal discovery. We adopt Chronos2 \cite{ansari2025chronos}, a recent state-of-the-art time-series forecasting model for both univariate and multivariate settings. We test it on three linear datasets for simplicity, using the same forecasting and graph extraction procedures as our vanilla transformer. Due to Chronos2's patch-based input/output, we can use much longer contexts (predict the next patch of 16 time steps given the previous 20 patches). We find that using a larger context length better matches the training distribution and improves forecasting accuracy. In Figure~\ref{fig:foundation-model-zero-shot}, we see that zero-shot forecasting accuracy is good but not perfect, and the inferred structure is suboptimal. We attribute this mainly to two reasons: first, raw time series are less structured, noisier, and more complex than modalities like language, making it harder to learn generalizable inductive biases; second, as we show in the non-stationary experiments, learning correct structures from mixed data is not very data-efficient. Even with large-scale pretraining, the effective sample size for learning diverse dynamics can still be insufficient. We find that fine-tuning on domain-specific data improves both forecasting and causal discovery. We also find that variable identity modeling is important but lacking in current multivariate foundation models: adding a simple node-embedding layer helps the model recognize variables, benefiting forecasting and structure recovery. Even adding randomly initialized node embeddings (without fine-tuning) helps the model infer more accurate causal structure. Overall, current time-series foundation models can serve as noisy zero-shot discoverers, but may not yet be practical for real-world time-series applications. With advances in time-series tokenization and learning methods for more universal inductive bias, we may eventually see robust zero-shot forecasting and causal discovery capabilities from foundation models.}

\begin{figure}
    \centering
    \includegraphics[width=0.8\textwidth]{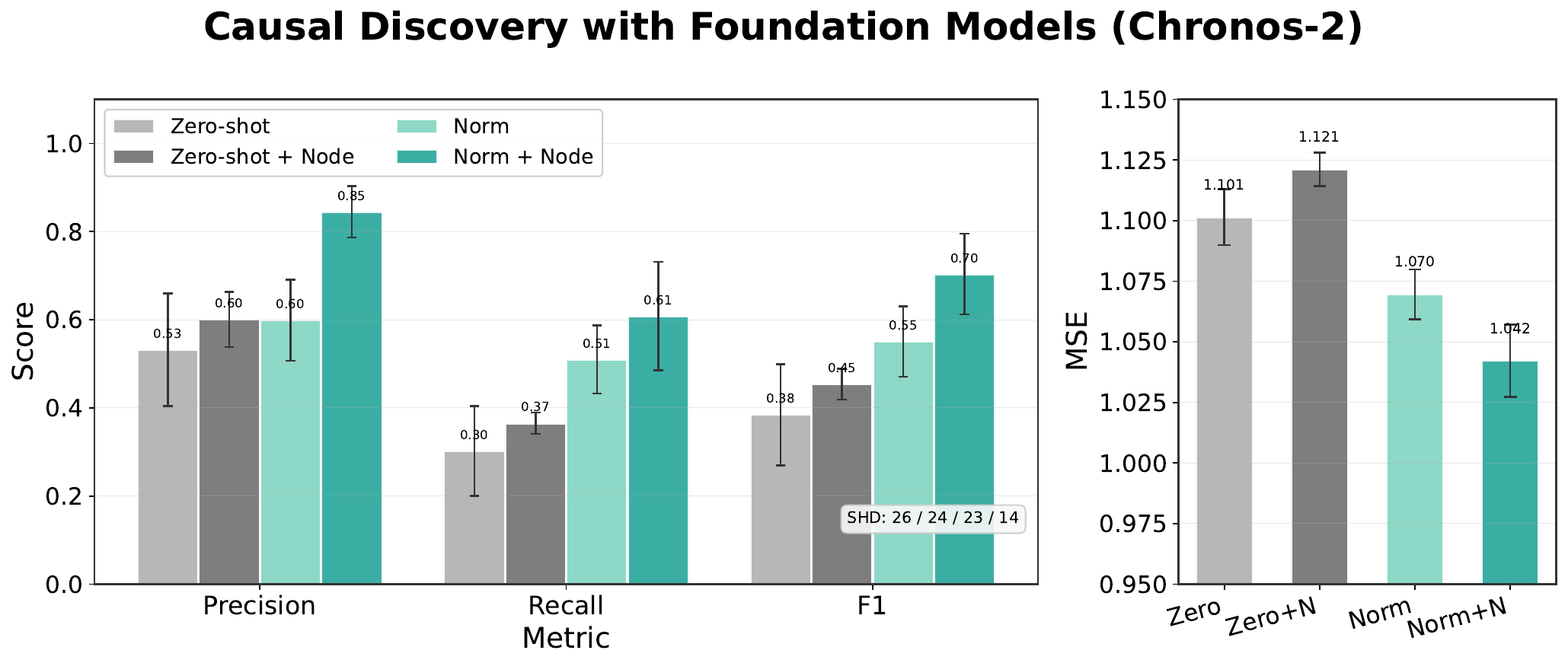}
    \caption{\revision{\textbf{Structure recovery with Chronos2 on linear data.} In zero-shot mode, Chronos2 uses a randomly initialized node-embedding layer for forecasting and structure extraction. Under parameter-efficient fine-tuning, only normalization layers and the new node-embedding layer are trained to adapt to unseen data and distinguish variables.}}

    \label{fig:foundation-model-zero-shot}
\end{figure}

\subsubsection{\revision{Performance under non-stationary settings with realistic minimal changes.}} \label{sec:domain-minimal-changes}

\revision{In real-world settings, changes across domains are often minimal and gradual. We construct a setting in which a small part of the structure is randomly rewired compared to the previous regime. We find that data efficiency is much higher than in the randomly sampled setting in both regular and extreme regimes (see paragraph~\ref{sec:non-stationary-settings}). Note that we do not inject any explicit prior or constraint about minimal changes into the architecture, and we expect the approach to become even more data-efficient if such priors are incorporated natively into the transformer.}

\begin{figure}[htbp]
    \centering
    \includegraphics[width=0.8\textwidth]{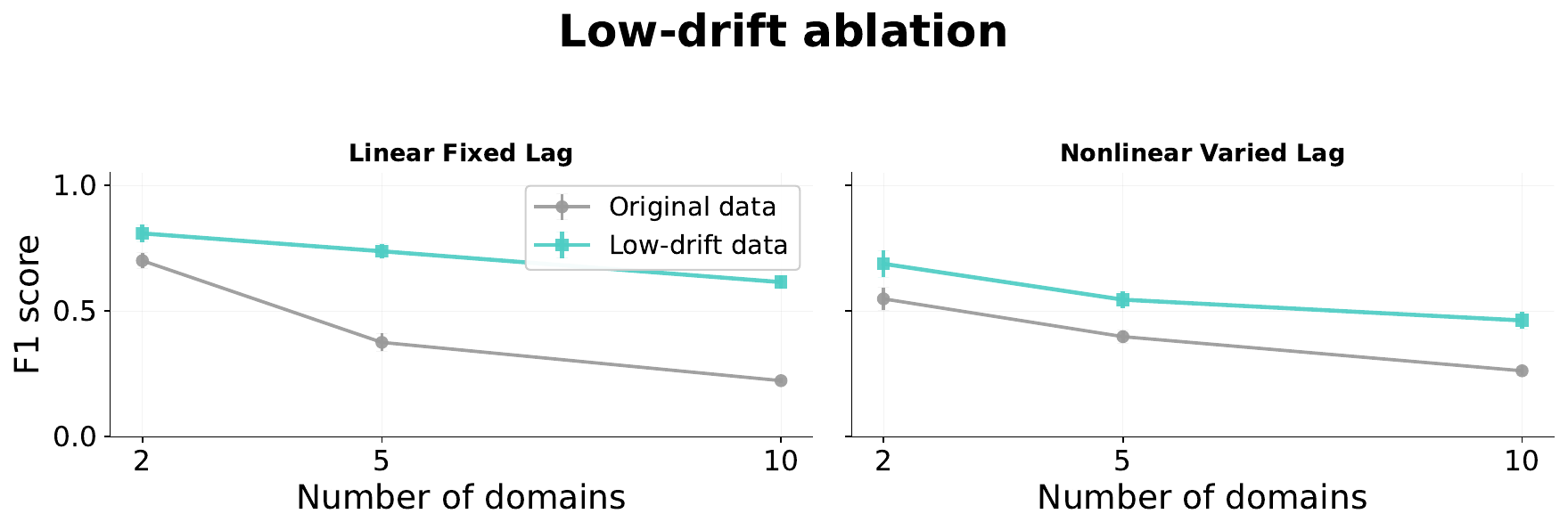}
    \caption{\revision{\textbf{Performance under non-stationary settings with minimal changes across regimes.} Comparison of F1 scores between randomly sampled regime changes (original data) and minimal changes (low-drift data) where only a small part of the structure is rewired. The minimal changes setting is more realistic and our approach shows much higher data efficiency in both linear and nonlinear non-stationary scenarios, demonstrating the applicability of our approach in real-world settings.}}
    \label{fig:domain-minimal-changes}
\end{figure}

\subsubsection{\revision{Experiment on real-world datasets}}

\revision{To examine the feasibility and effectiveness of our forecasting-based framework, we compare the decoder-only transformer (DOT) with a series of time-series causal discovery methods, including constraint-based and Granger-causality-based methods, on the CausalTime dataset \citep{cheng2023causaltime}. The CausalTime benchmark includes three datasets and corresponding prior graphs for real-world air quality, traffic, and medical scenarios. Since only summary graphs are annotated, we aggregate the computed relevance scores and directly compute AUROC and AUPRC against the summary graph, without binarizing into discrete causal graphs. Recognizing the data-efficiency challenge in the forecasting-to-discovery framework of decoder-only transformers, we use a low-complexity variant (one head, one layer, and a vector input instead of tokenized inputs for multivariate data) and add a sparsity constraint on gradients to learn graphs more efficiently. We also include a simple two-layer MLP as an alternative predictor, using the same gradient-based structure extraction procedure. Similar to our CNN-ablation findings, the MLP's lower complexity and reduced parameter uncertainty can improve data efficiency when sample sizes are limited, achieving the best performance on the Medical dataset and competitive results on the others. Although DOT is optimized for prediction rather than structure learning, it surpasses most traditional causal discovery methods and ranks among the top three on the air-quality and traffic datasets. This suggests the potential of a prediction-first strategy for causal discovery in the real world. By unifying prediction and structure learning, the objective becomes clearer and easier to optimize.}

\begin{table*}[t]
\centering
\caption{\revision{Benchmark results on three datasets. The best results are \textbf{bolded}, the second-best are \underline{underlined}, and the third-best are marked with a dagger ($^\dagger$). We take the baseline results reported in \citep{zhou2024jacobian}. The result is reported based on five runs with different random seeds.}}
\label{tab:benchmark_results}
\resizebox{\textwidth}{!}{%
\begin{tabular}{lcccccc}
\toprule
\textbf{Methods} & \multicolumn{2}{c}{\textbf{AQI}} & \multicolumn{2}{c}{\textbf{Traffic}} & \multicolumn{2}{c}{\textbf{Medical}} \\
\cmidrule(lr){2-3} \cmidrule(lr){4-5} \cmidrule(lr){6-7}
 & \textbf{AUROC} & \textbf{AUPRC} & \textbf{AUROC} & \textbf{AUPRC} & \textbf{AUROC} & \textbf{AUPRC} \\
\midrule
GC & $0.4538 \pm 0.0377$ & $0.6347 \pm 0.0158$ & $0.4191 \pm 0.0310$ & $0.2789 \pm 0.0018$ & $0.5737 \pm 0.0338$ & $0.4213 \pm 0.0281$ \\
SVAR & $0.6225 \pm 0.0406$ & $0.7903 \pm 0.0175^\dagger$ & $0.6329 \pm 0.0047$ & $0.5845 \pm 0.0021$ & $0.7130 \pm 0.0188$ & $0.6774 \pm 0.0358$ \\
N.NTS & $0.5729 \pm 0.0229$ & $0.7100 \pm 0.0228$ & $0.6329 \pm 0.0335$ & $0.5770 \pm 0.0542$ & $0.5019 \pm 0.0682$ & $0.4567 \pm 0.0162$ \\
PCMCI & $0.5272 \pm 0.0744$ & $0.6734 \pm 0.0372$ & $0.5422 \pm 0.0737$ & $0.3474 \pm 0.0581$ & $0.6991 \pm 0.0111$ & $0.5082 \pm 0.0177$ \\
Rhino & $0.6700 \pm 0.0983$ & $0.7593 \pm 0.0755$ & $0.6274 \pm 0.0185$ & $0.3772 \pm 0.0093$ & $0.6520 \pm 0.0212$ & $0.4897 \pm 0.0321$ \\
CUTS & $0.6013 \pm 0.0038$ & $0.5096 \pm 0.0362$ & $0.6238 \pm 0.0179$ & $0.1525 \pm 0.0226$ & $0.3739 \pm 0.0297$ & $0.1537 \pm 0.0039$ \\
CUTS+ & $0.8928 \pm 0.0213^\dagger$ & \underline{$0.7983 \pm 0.0875$} & $0.6175 \pm 0.0752$ & $\mathbf{0.6367 \pm 0.1197}$ & \underline{$0.8202 \pm 0.0173$} & $0.5481 \pm 0.1349$ \\
NGC & $0.7172 \pm 0.0076$ & $0.7177 \pm 0.0069$ & $0.6032 \pm 0.0056$ & $0.3583 \pm 0.0495$ & $0.5744 \pm 0.0096$ & $0.4637 \pm 0.0121$ \\
NGM & $0.6728 \pm 0.0164$ & $0.4786 \pm 0.0196$ & $0.4660 \pm 0.0144$ & $0.2826 \pm 0.0098$ & $0.5551 \pm 0.0154$ & $0.4697 \pm 0.0166$ \\
LCCM & $0.8565 \pm 0.0653$ & $\mathbf{0.9260 \pm 0.0246}$ & $0.5545 \pm 0.0254$ & $0.5907 \pm 0.0475^\dagger$ & $0.8013 \pm 0.0218^\dagger$ & \underline{$0.7554 \pm 0.0235$} \\
eSRU & $0.8229 \pm 0.0317$ & $0.7223 \pm 0.0317$ & $0.5987 \pm 0.0192$ & $0.4886 \pm 0.0338$ & $0.7559 \pm 0.0365$ & $0.7352 \pm 0.0600^\dagger$ \\
SCGL & $0.4915 \pm 0.0476$ & $0.3584 \pm 0.0281$ & $0.5927 \pm 0.0553$ & $0.4544 \pm 0.0315$ & $0.5019 \pm 0.0224$ & $0.4833 \pm 0.0185$ \\
TCDF & $0.4148 \pm 0.0207$ & $0.6527 \pm 0.0087$ & $0.5029 \pm 0.0041$ & $0.3637 \pm 0.0048$ & $0.6329 \pm 0.0384$ & $0.5544 \pm 0.0313$ \\
JRNGC-F & $\mathbf{0.9279 \pm 0.0011}$ & $0.7828 \pm 0.0020$ & $\mathbf{0.7294 \pm 0.0046}$ & \underline{$0.5940 \pm 0.0067$} & $0.7540 \pm 0.0040$ & $0.7261 \pm 0.0016$ \\
\midrule
DOT (Ours) & $0.8655 \pm 0.0064$ & $0.6990 \pm 0.0104$ & \underline{$0.7025 \pm 0.0317$} & $0.5703 \pm 0.0210$ & $0.7237 \pm 0.0175$ & $0.6509 \pm 0.0216$ \\
MLP (Ours) & \underline{$0.9133 \pm 0.0022$} & $0.7523 \pm 0.0047$ & $0.6345 \pm 0.0155^\dagger$ & $0.4811 \pm 0.0075$ & $\mathbf{0.8717 \pm 0.0080}$ & $\mathbf{0.8256 \pm 0.0074}$ \\
\bottomrule
\end{tabular}%
}
\end{table*}

\subsubsection{\revision{Sensitivity to k in top-k binarization}} \label{sec:top-k-sensitivity}

\revision{Below, we show sensitivity to different values of $k$ in graph binarization. We find that row-wise top-$k$ exhibits a trade-off between precision and recall as $k$ increases (see Figure~\ref{fig:top-k-sensitivity}). Smaller $k$ emphasizes precision, while larger values capture more potential parents. In the linear case, where the model has essentially learned the true structure, choosing $k$ far below or above the true expected in-degree can significantly affect the outcome. In the nonlinear case, where the model has not fully learned the correct structure, the choice of $k$ has a smaller effect on F1. Note that we use row-wise top-$k$ binarization as a simple selection rule for benchmarking, implicitly assuming each variable has a similar number of parents. This can be replaced with other priors for more robust binarization. For example, one may use global top-$k$ when in-degrees vary across variables; this tends to be more stable with respect to the choice of $k$ (see Figure~\ref{fig:global-top-k-sensitivity}). In practice, visualizing the heatmap of edge scores (as in our uncertainty analysis) can help choose parents based on task requirements (precision vs.\ recall).}

\begin{figure}
    \centering    
    \includegraphics[width=0.8\textwidth]{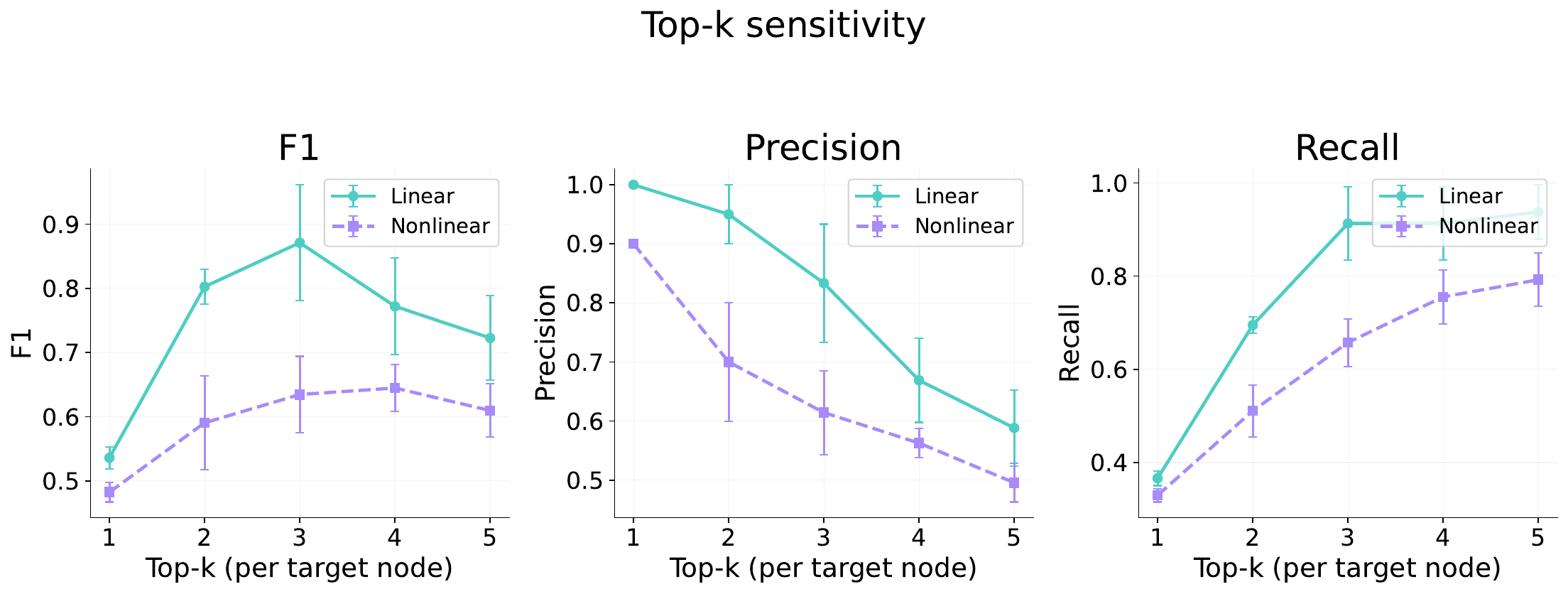}
    \caption{\revision{\textbf{Sensitivity to $k$ in top-$k$ binarization.} F1 scores of different $k$ choices (1-5) on the linear and nonlinear datasets (the true expected in-degree is $3$).}}
    \label{fig:top-k-sensitivity}
\end{figure}

\begin{figure}
    \centering    
    \includegraphics[width=0.8\textwidth]{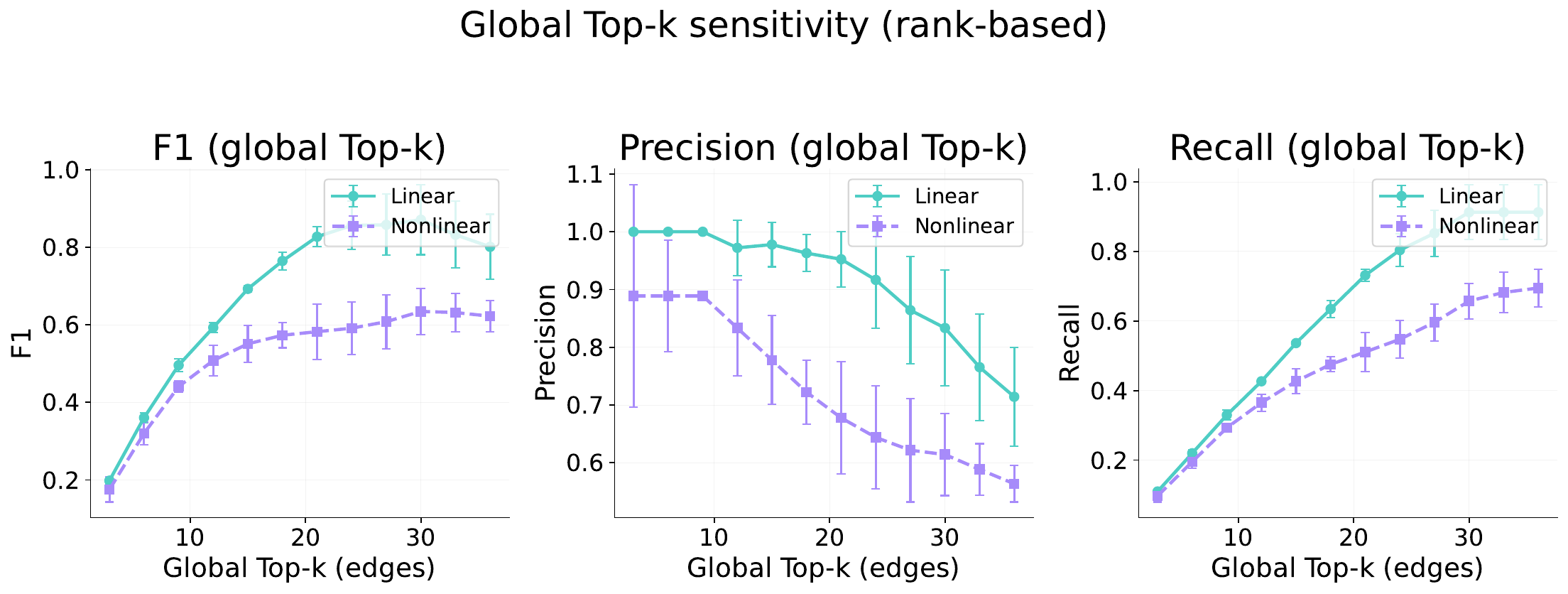}
    \caption{\revision{\textbf{Sensitivity to $k$ in global top-$k$ binarization.} F1 scores of different global $k$ choices (3-36) on the linear and nonlinear datasets (the true expected in-degree is $3$).}}
    \label{fig:global-top-k-sensitivity}
\end{figure}

\begin{figure}
    \centering
    \includegraphics[width=0.8\textwidth]{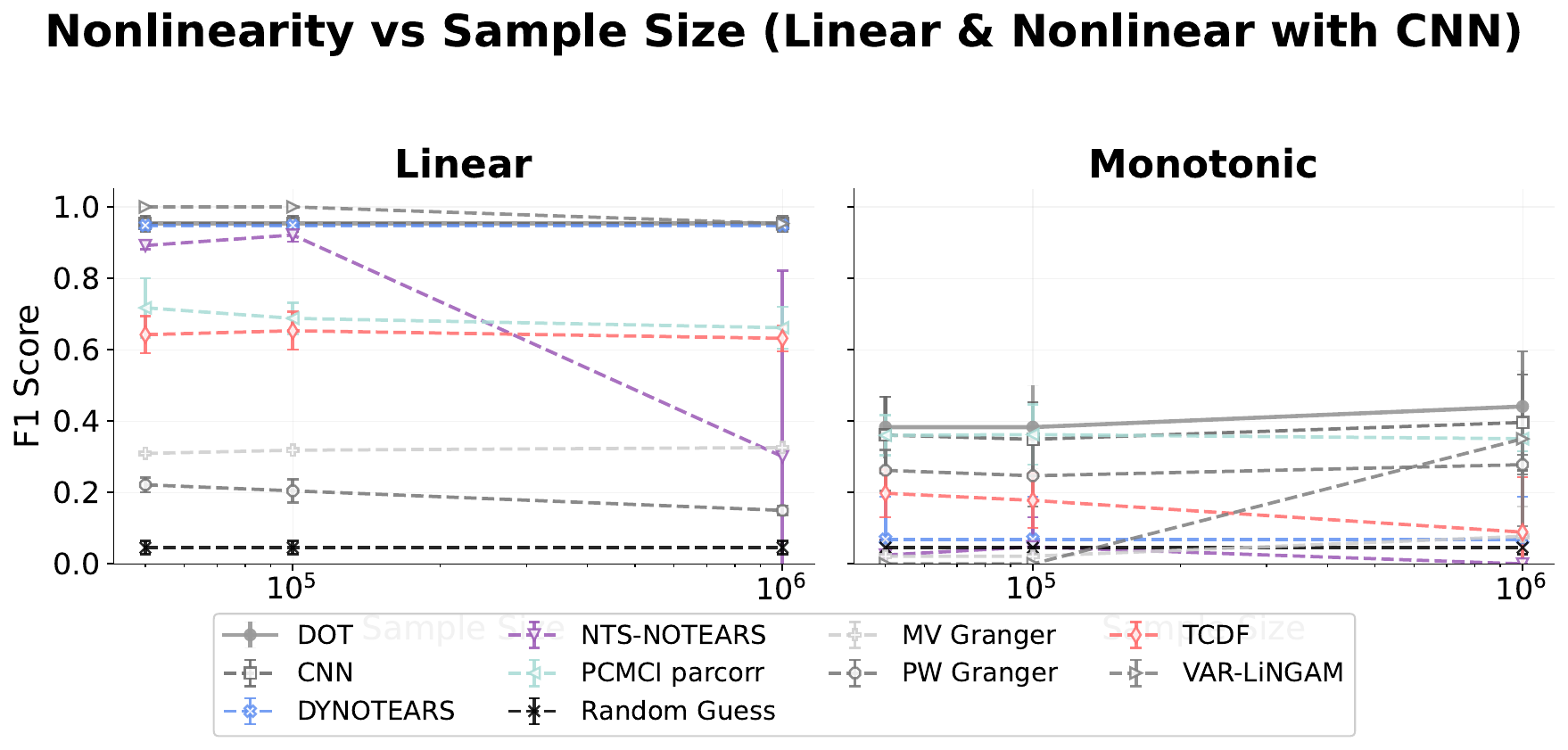}
    \caption{\revision{\textbf{Convolutional Neural Network (CNN) as an alternative to the transformer.} A 1D CNN is adopted in the same procedure as an alternative predictor (without teacher forcing; it only predicts the next token given the full lagged window).}}
    \label{fig:cnn-model-alternative}
\end{figure}

\subsubsection{\revision{Convolutional Neural Network as Model Alternative}}

\revision{Our identifiability results apply to any predictor class that can fit the conditional distribution well enough, which in turn enables structure identification via gradients. We use a simple 1D convolutional neural network (CNN) as an alternative predictor. It predicts the next time point from the full lagged window (without teacher forcing). We find that it performs comparably to the deep transformer in many settings (see Figure~\ref{fig:cnn-model-alternative}). This echoes the pre-Vision-Transformer era, when ResNets often performed best with limited training data. When data are insufficient or the signal-to-noise ratio is low, smaller models with stronger inductive biases and lower parameter uncertainty can be more data-efficient for forecasting and structure learning. We emphasize transformers for their long-run scaling potential and their ability to handle multimodal inputs.}

\subsubsection{\revision{More results on uncertainty analysis.}} \label{sec:additional-uncertainty-analysis}

\revision{Here we show additional results on uncertainty analysis. We use a linear Gaussian dataset and a nonlinear (sigmoid) dataset with $5$ variables and $2$ lags for ease of analysis and visualization. We consider two measures: (i) raw relevance scores and (ii) row-wise ranks of relevance scores. The rank-based, quantized measure is more stable and comparable across variable pairs. We also use the mean-over-standard-deviation of ranks as a combined metric capturing both strength and uncertainty. For binarization, we compare row-wise top-$k$ (select top-$k$ parents per target) and global top-$k$ (select top-$k$ edges overall). For row-wise top-$k$, we use $k=3$ per target; for global top-$k$, we use $k=15$ overall.}

\revision{The standard deviation of raw scores is hard to interpret and not directly comparable across variable pairs; in particular, larger mean scores often have larger variance. We therefore compute uncertainty using row-wise ranks, which aligns with the intuition that true causal connections should be more stable than false ones. Edges with high mean rank and low rank standard deviation are high-confidence and often true. We also consider the mean-over-standard-deviation of ranks as a combined existence metric. For graphs with varying degrees, row-wise top-$k$ is a hard truncation that can both miss true edges and include spurious ones; in such cases, global top-$k$ can be more robust by selecting the most dominant edges across the whole system. In both linear and nonlinear settings, rank-based measures are less noisy and can outperform raw continuous scores. However, in our experiments we do not observe advantages of combining mean and variance with both global and row-wise top-$k$ compared to using mean ranks alone. We leave more principled calibration and uncertainty-aware extraction strategies to future work.}

\begin{figure}[htbp]
    \centering
    \begin{subfigure}[b]{0.49\textwidth}
        \centering
        \includegraphics[width=\textwidth]{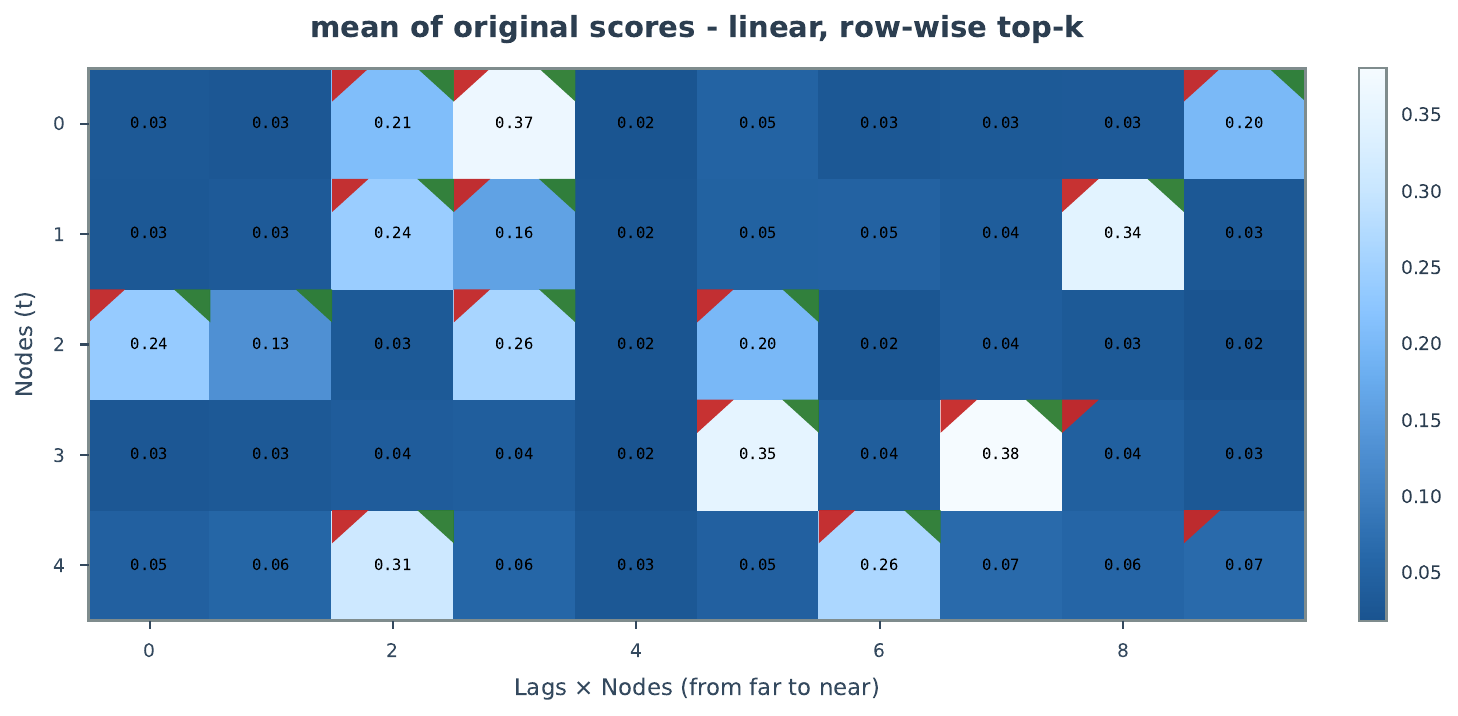}
    \end{subfigure}
    \begin{subfigure}[b]{0.49\textwidth}
        \centering
        \includegraphics[width=\textwidth]{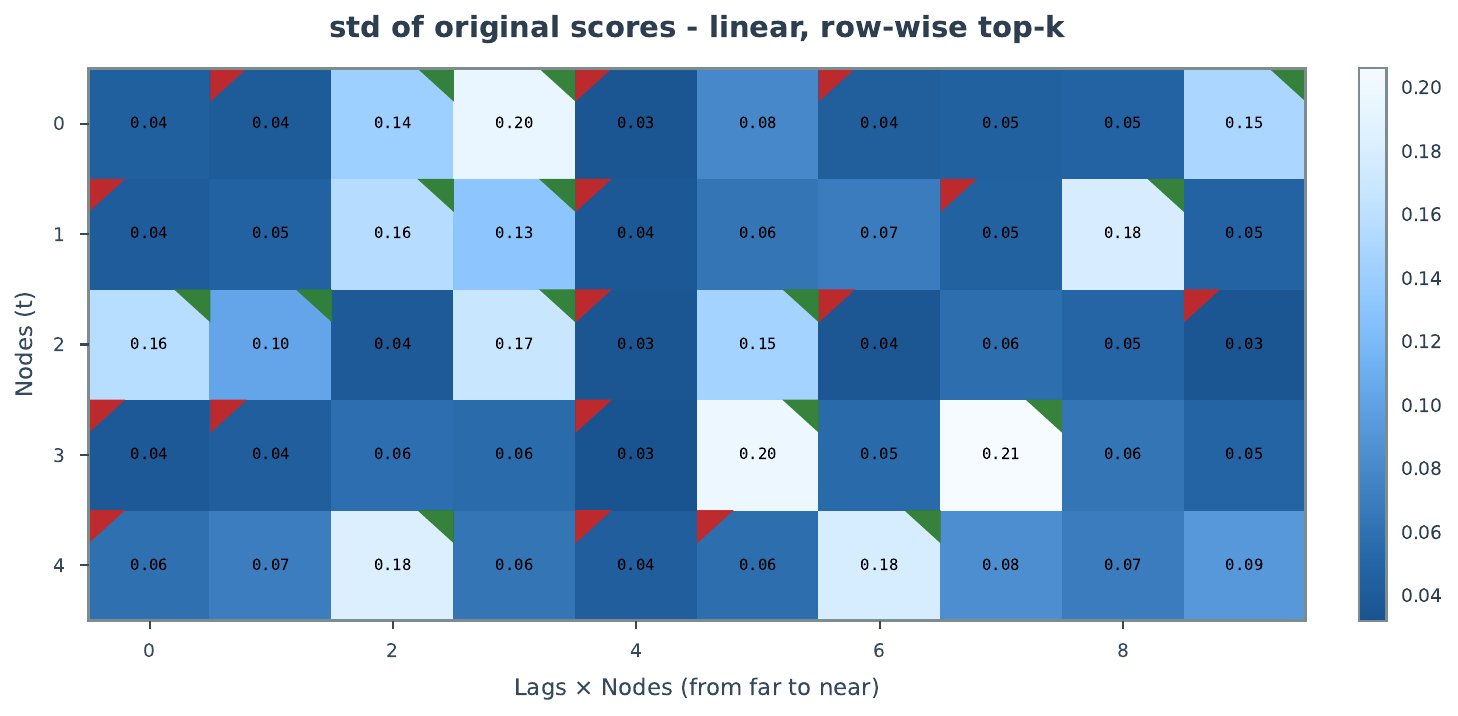}
    \end{subfigure}
    \caption{\revision{\textbf{Mean and standard deviation of relevance scores in the linear setting:} \textbf{(A)} Heatmap showing the mean of edge attributions across all samples. \textbf{(B)} Heatmap showing the standard deviation of edge attributions across all samples. The top-left red triangle means that model predicts there is a causal edge and top-right green triangle means that there is a true edge between the two variables.}}
    \label{fig:uncertainty-heatmaps-linear-scores}
\end{figure}

\begin{figure}[htbp]
    \centering
    \begin{subfigure}[b]{0.49\textwidth}
        \centering
        \includegraphics[width=\textwidth]{figures/uncertainty/mean_sorted_lrp.pdf}
    \end{subfigure}
    \begin{subfigure}[b]{0.49\textwidth}
        \centering
        \includegraphics[width=\textwidth]{figures/uncertainty/std_sorted_lrp.pdf}
    \end{subfigure}
    \caption{\revision{\textbf{Mean and standard deviation of row-wise rankings in the linear setting:} \textbf{(A)} Heatmap showing the mean of the ranking of the edge attributions across all samples. \textbf{(B)} Heatmap showing the standard deviation of the ranking of the edge attributions across all samples. The top-left red triangle means that model predicts there is a causal edge and top-right green triangle means that there is a true edge between the two variables.}}
    \label{fig:uncertainty-heatmaps-linear-ranking}
\end{figure}

\begin{figure}[htbp]
    \centering
    \begin{subfigure}[b]{0.49\textwidth}
        \centering
        \includegraphics[width=\textwidth]{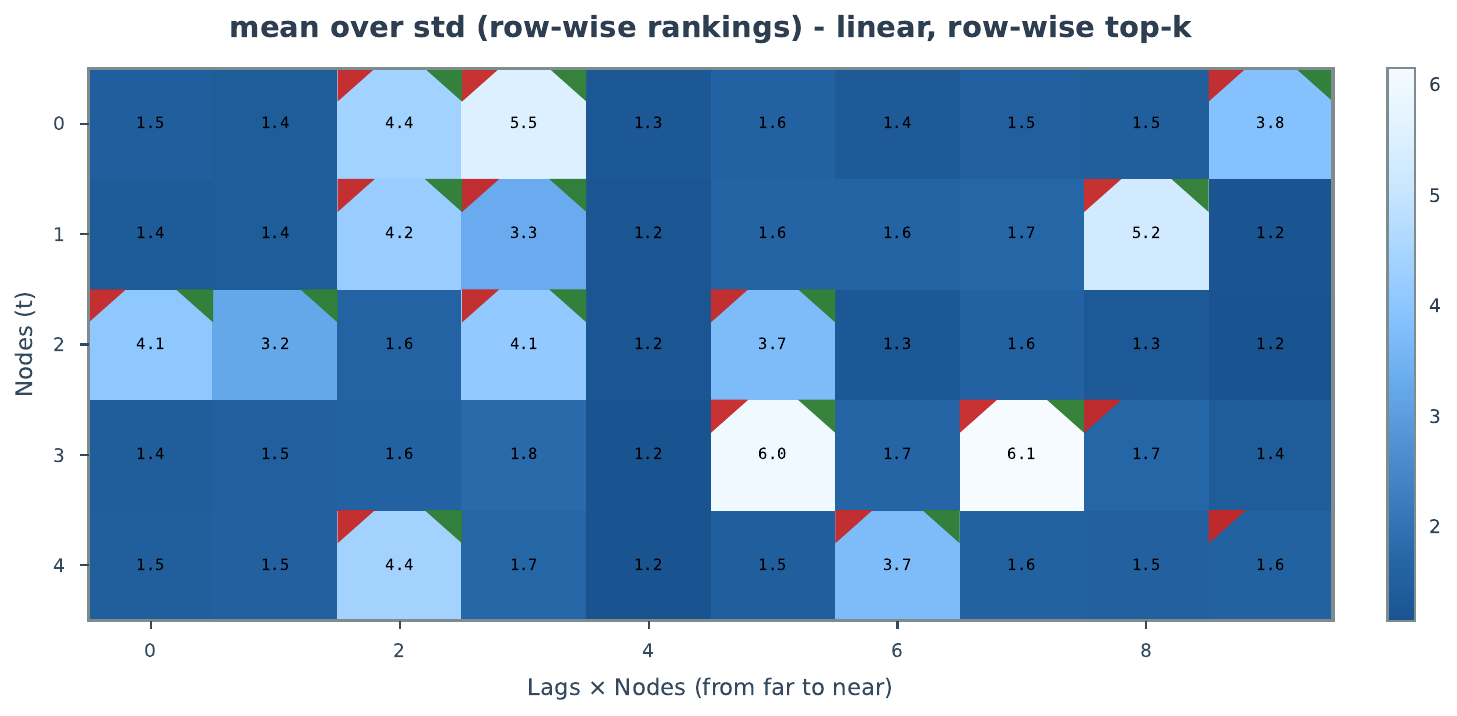}
    \end{subfigure}
    \begin{subfigure}[b]{0.49\textwidth}
        \centering
        \includegraphics[width=\textwidth]{figures/uncertainty/mean_over_std_sorted_global_top15_lrp.pdf}
    \end{subfigure}
    \caption{\revision{\textbf{Row-wise top-k and global top-k in the linear setting:} \textbf{(A)} Heatmap showing the mean over standard deviation of the ranking of the edge attributions across all samples. \textbf{(B)} Heatmap showing the standard deviation of the ranking of the edge attributions across all samples. The global top-k select more accurate causal edges than the row-wise top-k. The top-left red triangle means that model predicts there is a causal edge and top-right green triangle means that there is a true edge between the two variables.}}
    \label{fig:uncertainty-heatmaps-linear-global-top15}
\end{figure}

\begin{figure}[htbp]
    \centering
    \begin{subfigure}[b]{0.49\textwidth}
        \centering
        \includegraphics[width=\textwidth]{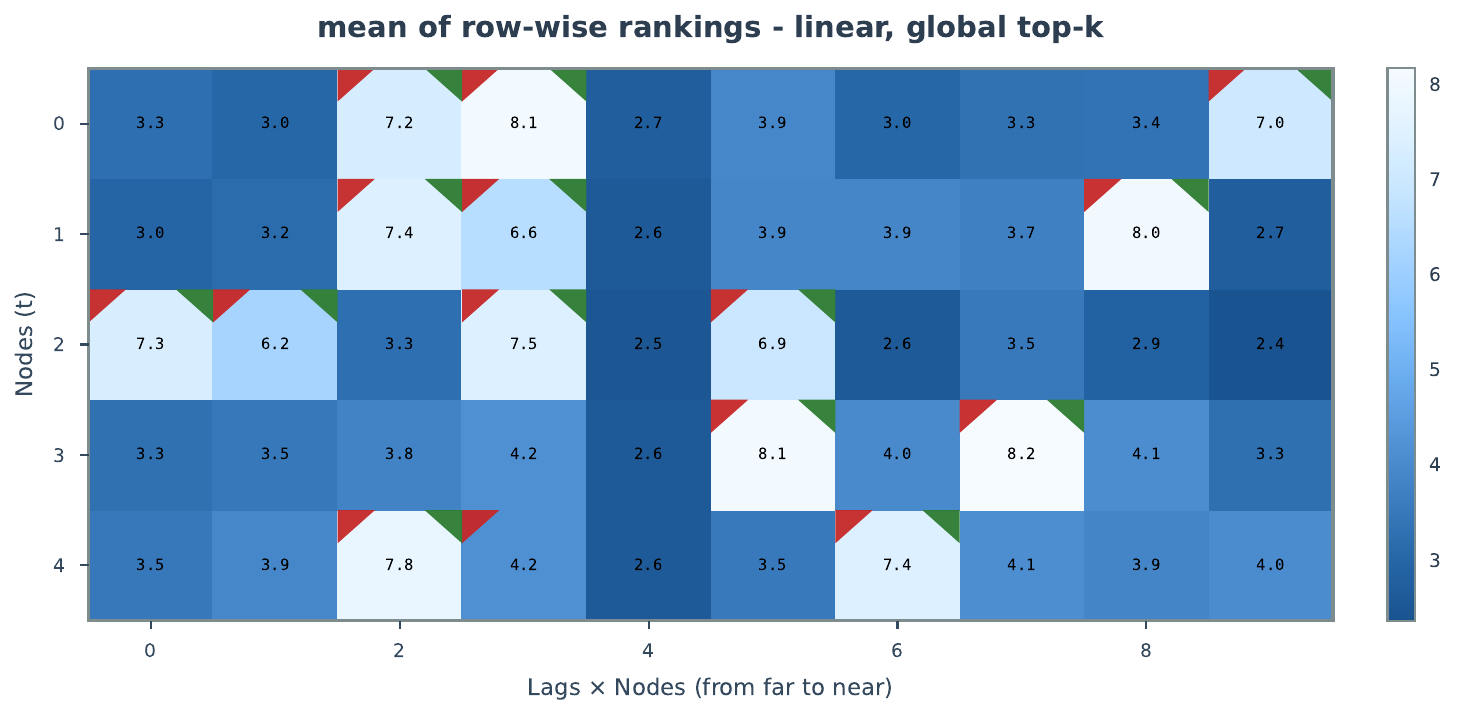}
    \end{subfigure}
    \begin{subfigure}[b]{0.49\textwidth}
        \centering
        \includegraphics[width=\textwidth]{figures/uncertainty/mean_over_std_sorted_global_top15_lrp.pdf}
    \end{subfigure}
    \caption{\revision{\textbf{Global top-k based on mean of rankings and mean over standard deviation of rankings in the linear setting:} \textbf{(A)} Heatmap showing the mean of the ranking of the edge attributions across all samples and predictions selected by the global top-k. \textbf{(B)} Heatmap showing the mean over standard deviation of the ranking of the edge attributions across all samples and predictions selected by the global top-k. The top-left red triangle means that model predicts there is a causal edge and top-right green triangle means that there is a true edge between the two variables.}}
    \label{fig:uncertainty-heatmaps-linear-global-top15-combined}
\end{figure}

\begin{figure}[htbp]
    \centering
    \begin{subfigure}[b]{0.49\textwidth}
        \centering
        \includegraphics[width=\textwidth]{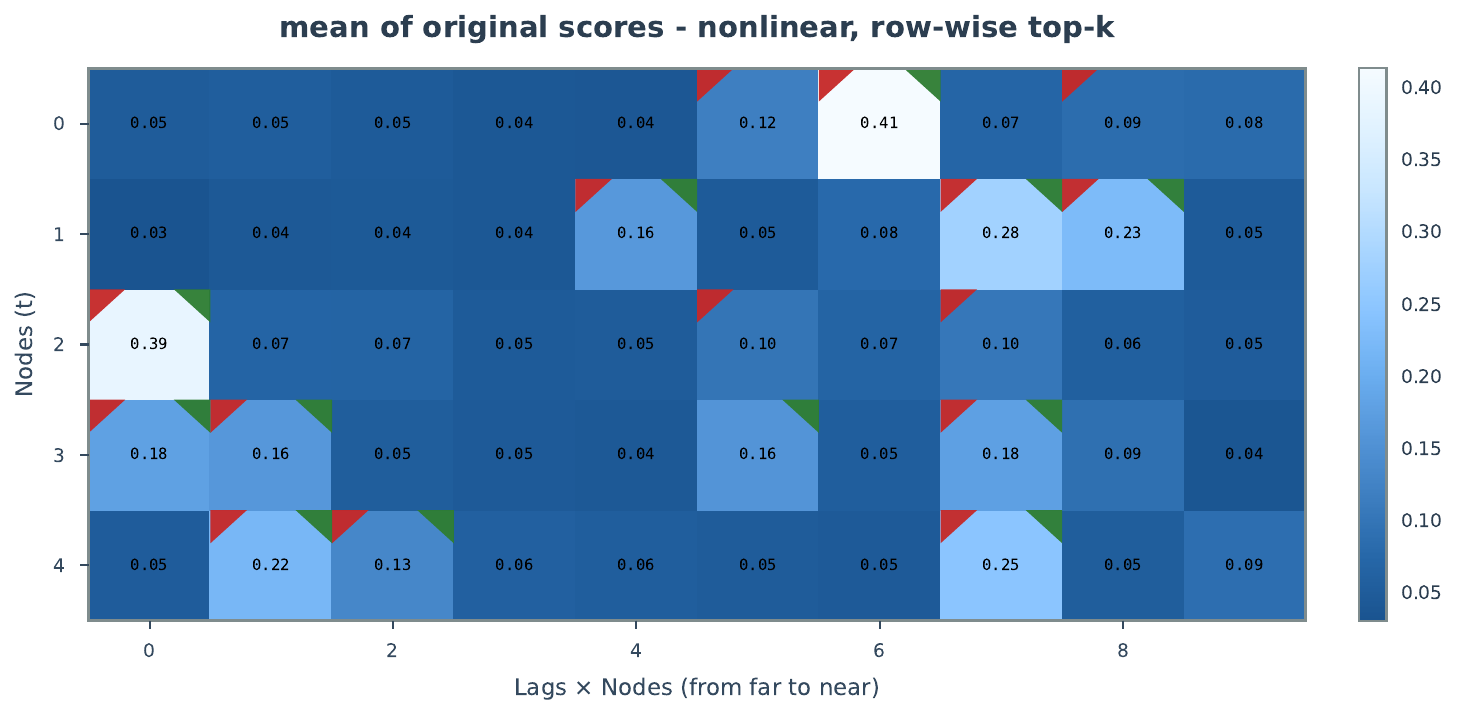}
    \end{subfigure}
    \begin{subfigure}[b]{0.49\textwidth}
        \centering
        \includegraphics[width=\textwidth]{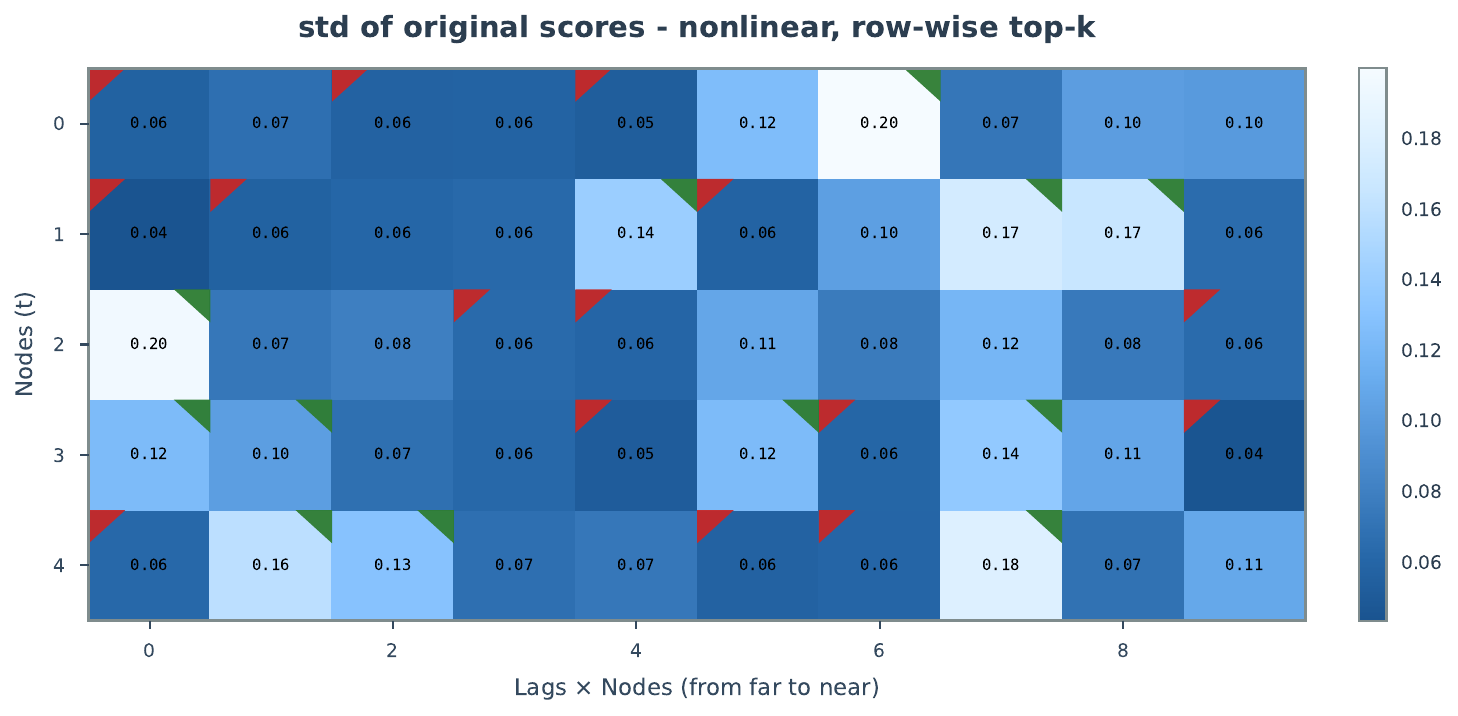}
    \end{subfigure}
    \caption{\revision{\textbf{Mean and standard deviation of relevance scores in the nonlinear setting:} \textbf{(A)} Heatmap showing the mean of edge attributions across all samples. \textbf{(B)} Heatmap showing the standard deviation of edge attributions across all samples. The top-left red triangle means that model predicts there is a causal edge and top-right green triangle means that there is a true edge between the two variables.}}
    \label{fig:uncertainty-heatmaps-nonlinear-scores}
\end{figure}

\begin{figure}[htbp]
    \centering
    \begin{subfigure}[b]{0.49\textwidth}
        \centering
        \includegraphics[width=\textwidth]{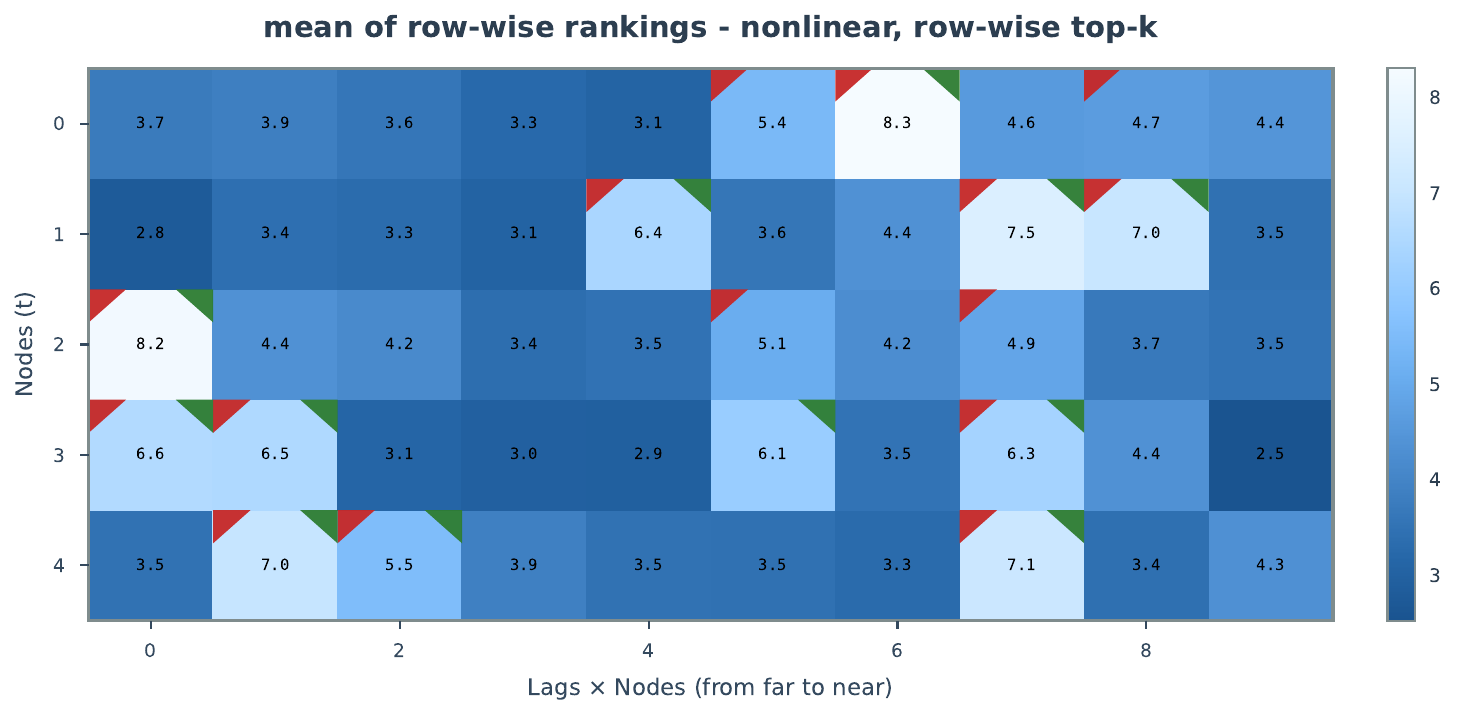}
    \end{subfigure}
    \begin{subfigure}[b]{0.49\textwidth}
        \centering
        \includegraphics[width=\textwidth]{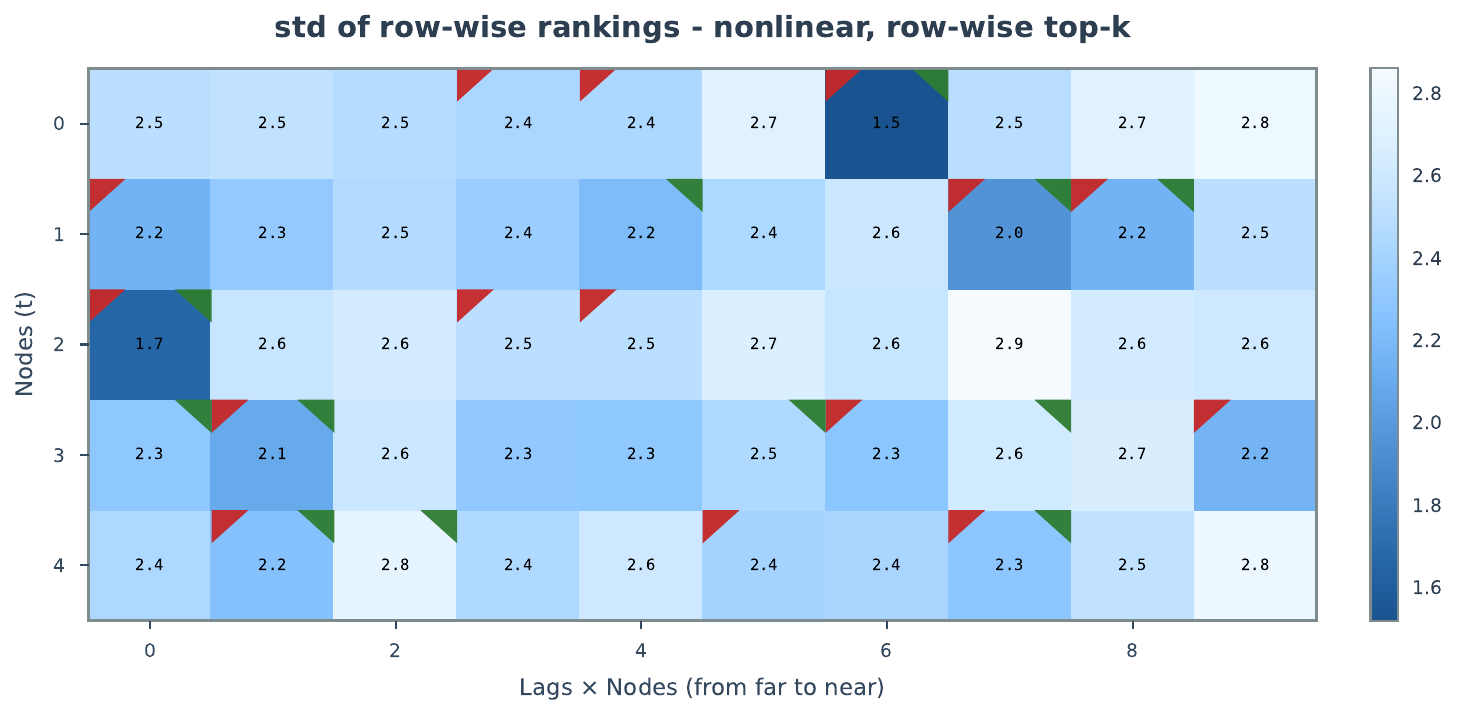}
    \end{subfigure}
    \caption{\revision{\textbf{Mean and standard deviation of row-wise rankings in the nonlinear setting:} \textbf{(A)} Heatmap showing the mean of the ranking of the edge attributions across all samples. \textbf{(B)} Heatmap showing the standard deviation of the ranking of the edge attributions across all samples. The top-left red triangle means that model predicts there is a causal edge and top-right green triangle means that there is a true edge between the two variables.}}
    \label{fig:uncertainty-heatmaps-nonlinear-ranking}
\end{figure}

\begin{figure}[htbp]
    \centering
    \begin{subfigure}[b]{0.49\textwidth}
        \centering
        \includegraphics[width=\textwidth]{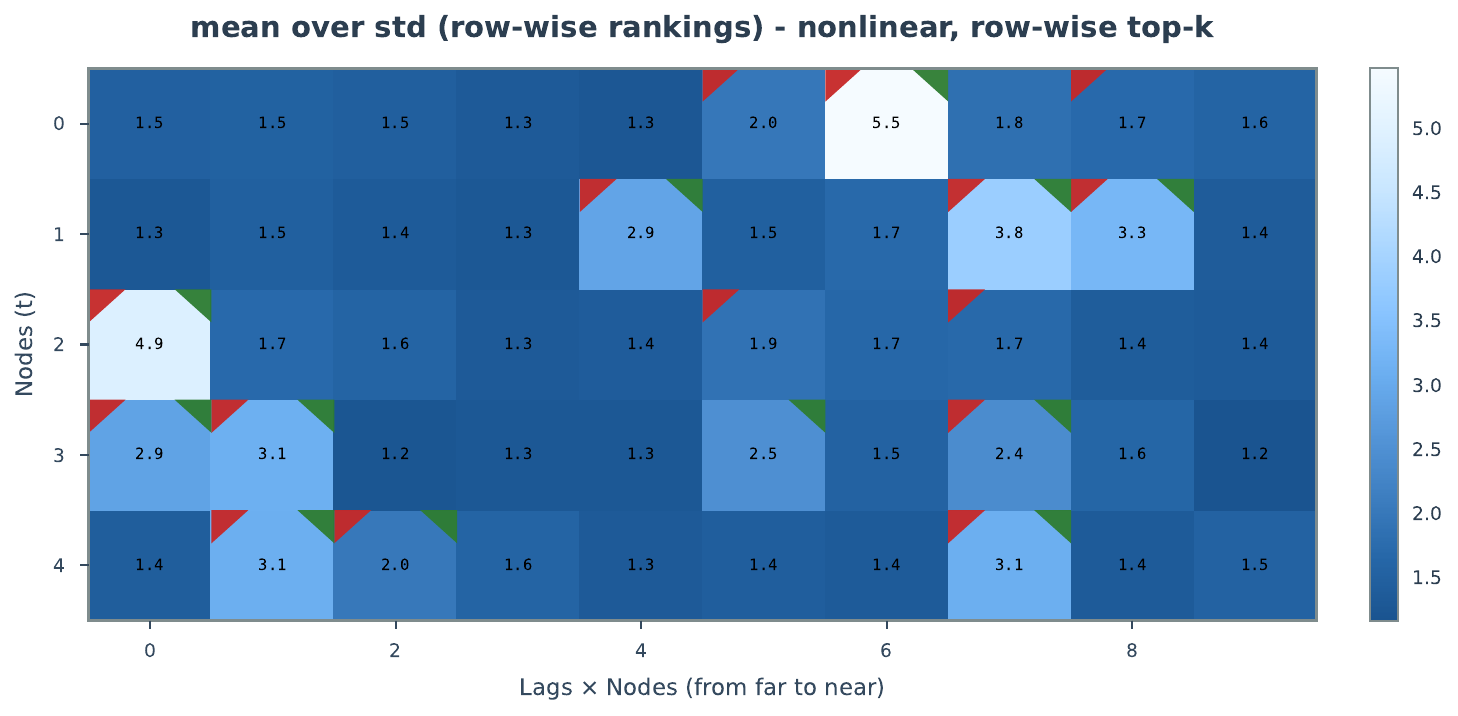}
    \end{subfigure}
    \begin{subfigure}[b]{0.49\textwidth}
        \centering
        \includegraphics[width=\textwidth]{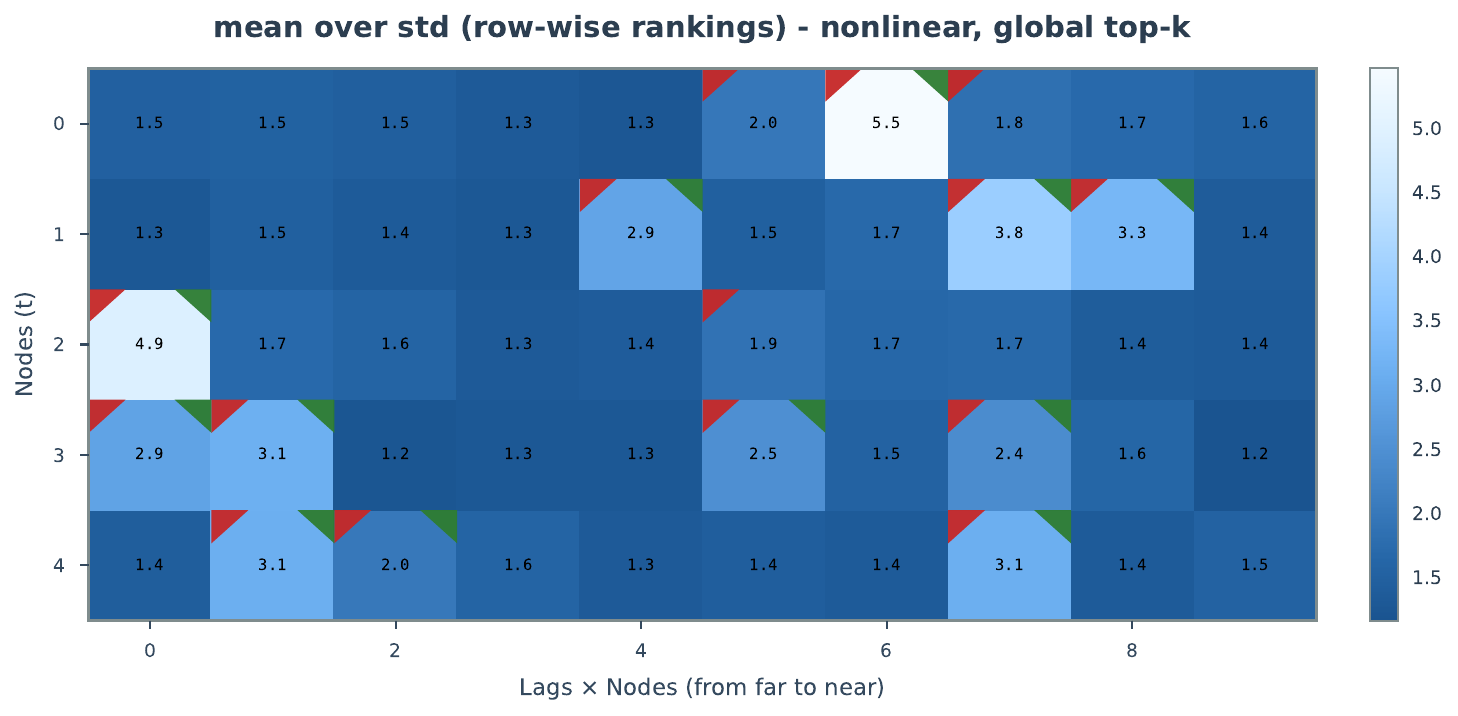}
    \end{subfigure}
    \caption{\revision{\textbf{Row-wise top-k and global top-k in the nonlinear setting:} \textbf{(A)} Heatmap showing the mean over standard deviation of the ranking of the edge attributions across all samples. \textbf{(B)} Heatmap showing the standard deviation of the ranking of the edge attributions across all samples. The global top-k select more accurate causal edges than the row-wise top-k. The top-left red triangle means that model predicts there is a causal edge and top-right green triangle means that there is a true edge between the two variables.}}
    \label{fig:uncertainty-heatmaps-nonlinear-global-top15}
\end{figure}

\begin{figure}[htbp]
    \centering
    \begin{subfigure}[b]{0.49\textwidth}
        \centering
        \includegraphics[width=\textwidth]{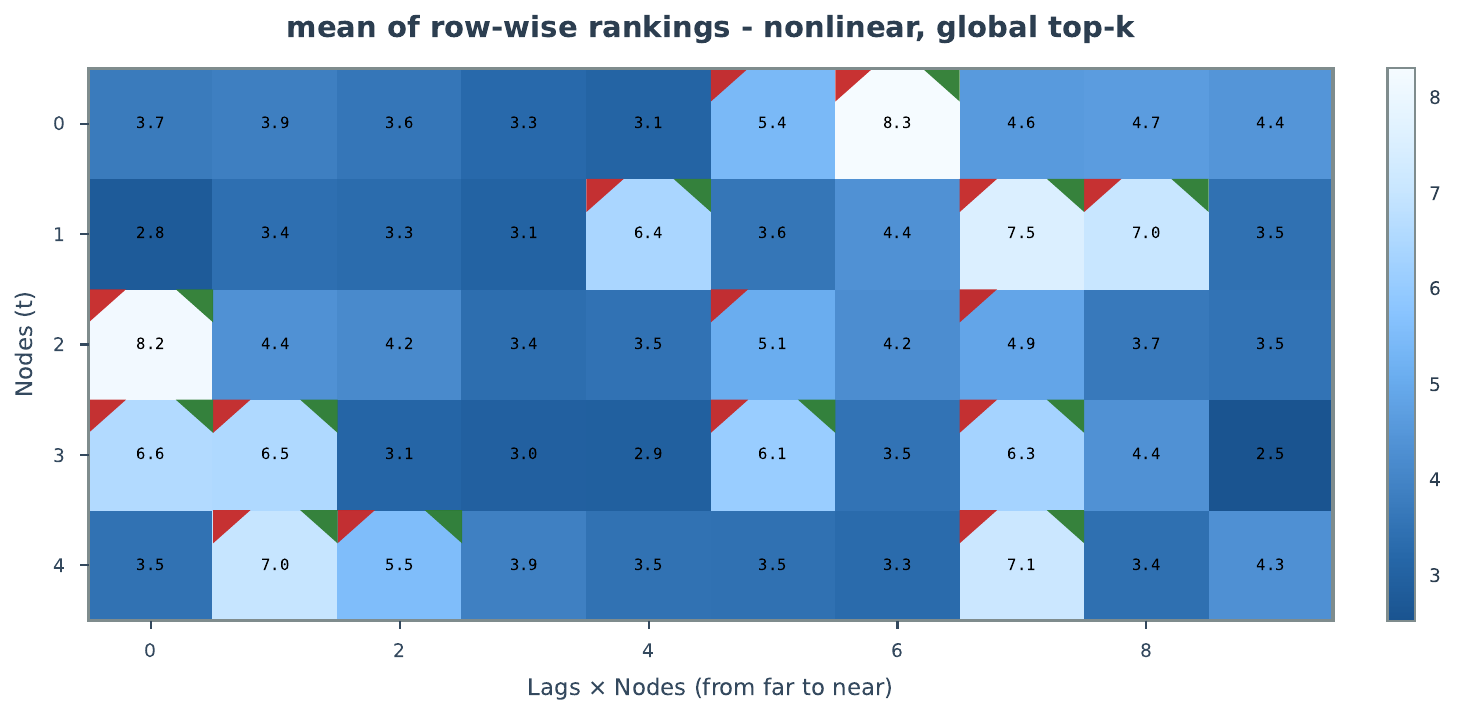}
    \end{subfigure}
    \begin{subfigure}[b]{0.49\textwidth}
        \centering
        \includegraphics[width=\textwidth]{figures/uncertainty_nonlinear/mean_over_std_sorted_global_top15_lrp.pdf}
    \end{subfigure}
    \caption{\revision{\textbf{Global top-k based on mean of rankings and mean over standard deviation of rankings in the nonlinear setting:} \textbf{(A)} Heatmap showing the mean of the ranking of the edge attributions across all samples and predictions selected by the global top-k. \textbf{(B)} Heatmap showing the mean over standard deviation of the ranking of the edge attributions across all samples and predictions selected by the global top-k. The top-left red triangle means that model predicts there is a causal edge and top-right green triangle means that there is a true edge between the two variables.}}
    \label{fig:uncertainty-heatmaps-nonlinear-global-top15-combined}
\end{figure}

\FloatBarrier

\revision{We show histograms of edge strengths in both linear and nonlinear settings. In both cases, predictions miss a small portion of edges with low strength. In nonlinear settings, predicted strengths are less uniform; this concentration can lead to both false positives and missed true edges. We also find that the ratio of medium-strength edges (0.1--0.3) at lag $1$ is higher than at other lags. This suggests that the transformer tends to assign higher attribution to the most recent time steps and less to earlier ones, even when earlier lags contain true parents. This is largely due to causal masking and the inductive bias of self-attention, which make the model more likely to focus on recent tokens.}

\begin{figure}[htbp]
    \centering
    \begin{subfigure}[b]{0.32\textwidth}
        \centering
        \includegraphics[width=\textwidth]{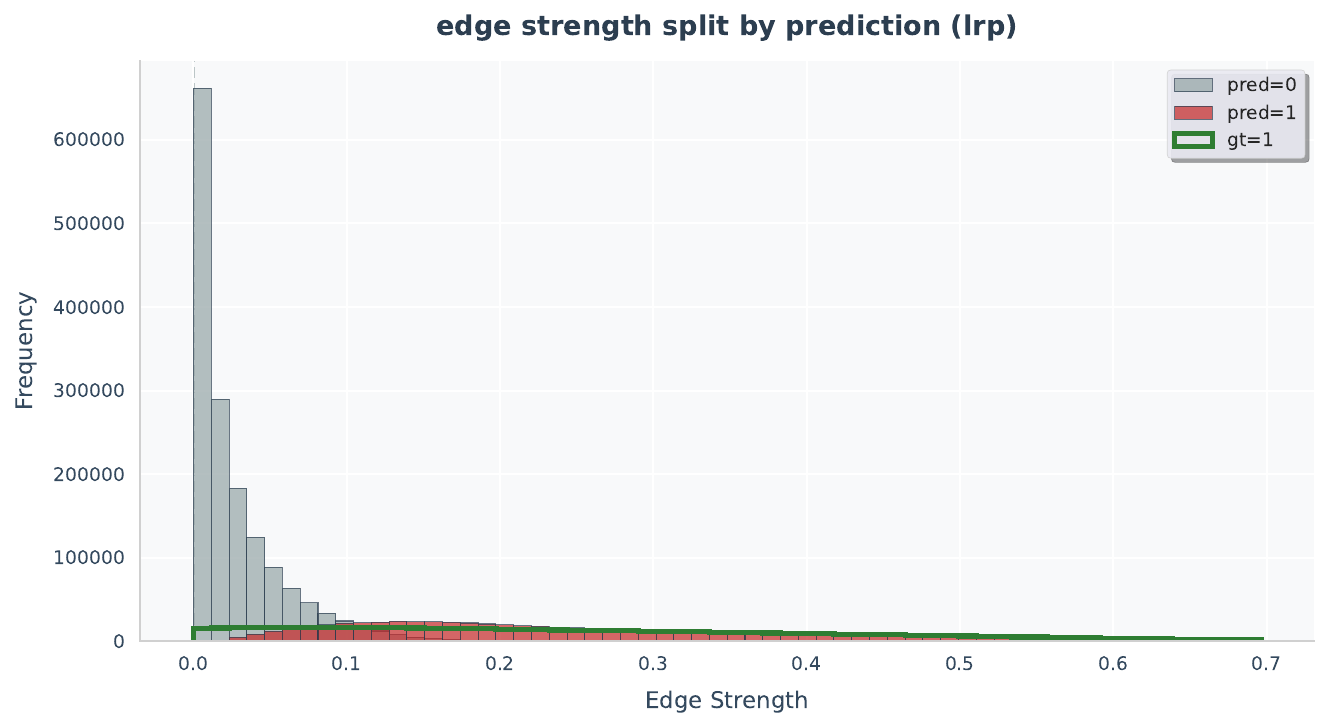}
    \end{subfigure}
    \begin{subfigure}[b]{0.32\textwidth}
        \centering
        \includegraphics[width=\textwidth]{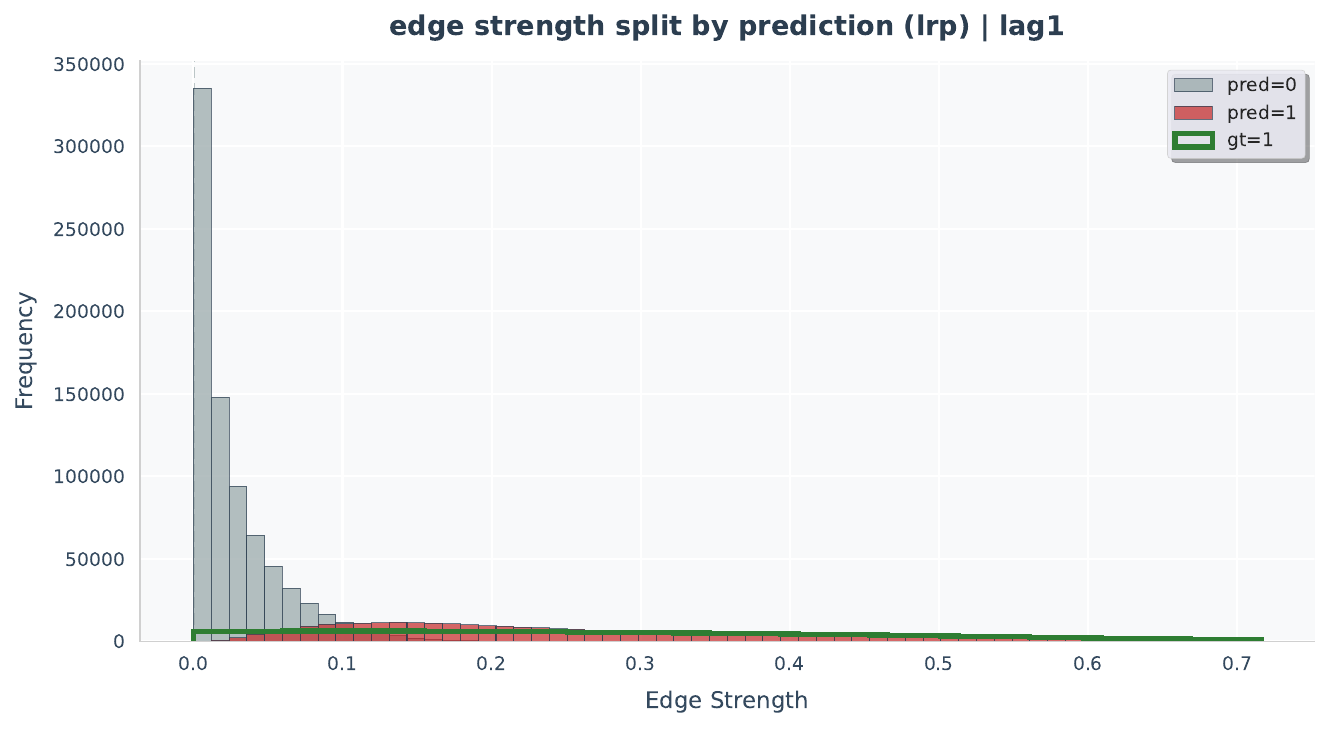}
    \end{subfigure}
    \begin{subfigure}[b]{0.32\textwidth}
        \centering
        \includegraphics[width=\textwidth]{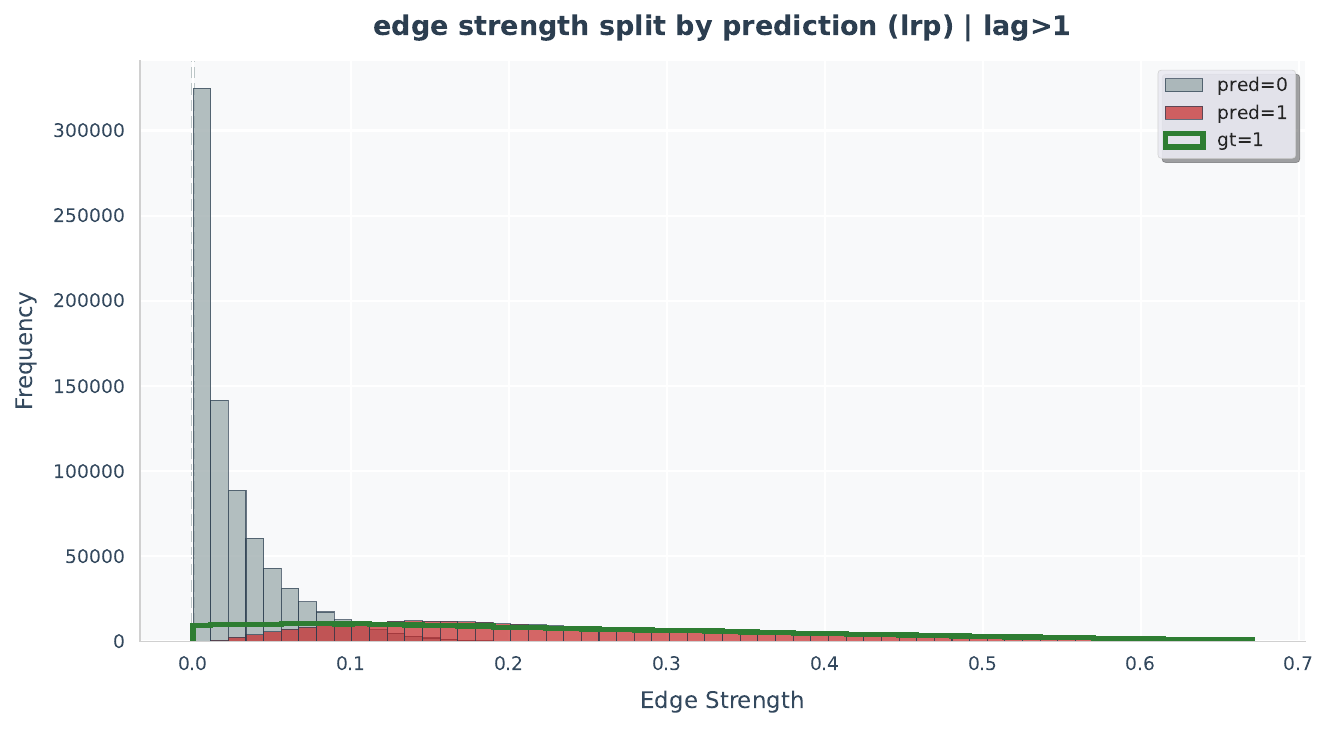}
    \end{subfigure}
    \caption{\revision{\textbf{Histograms of edge strengths in the linear setting:} \textbf{(A)} Histogram of edge strengths in the linear setting. \textbf{(B)} Histogram of edge strengths in the linear setting with lag 1. \textbf{(C)} Histogram of edge strengths in the linear setting with lag larger than 1.}}
    \label{fig:uncertainty-hist-linear-edge-strength}
\end{figure}

\begin{figure}[htbp]
    \centering
    \begin{subfigure}[b]{0.32\textwidth}
        \centering
        \includegraphics[width=\textwidth]{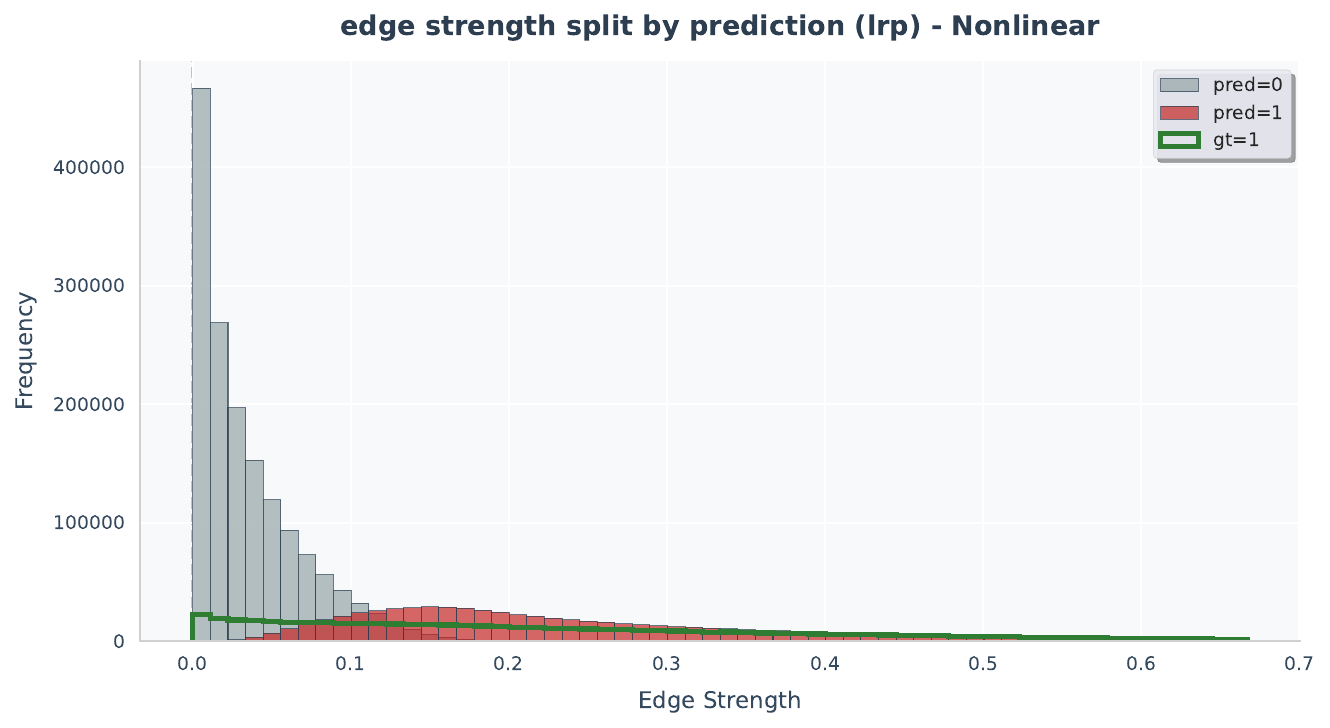}
    \end{subfigure}
    \begin{subfigure}[b]{0.32\textwidth}
        \centering
        \includegraphics[width=\textwidth]{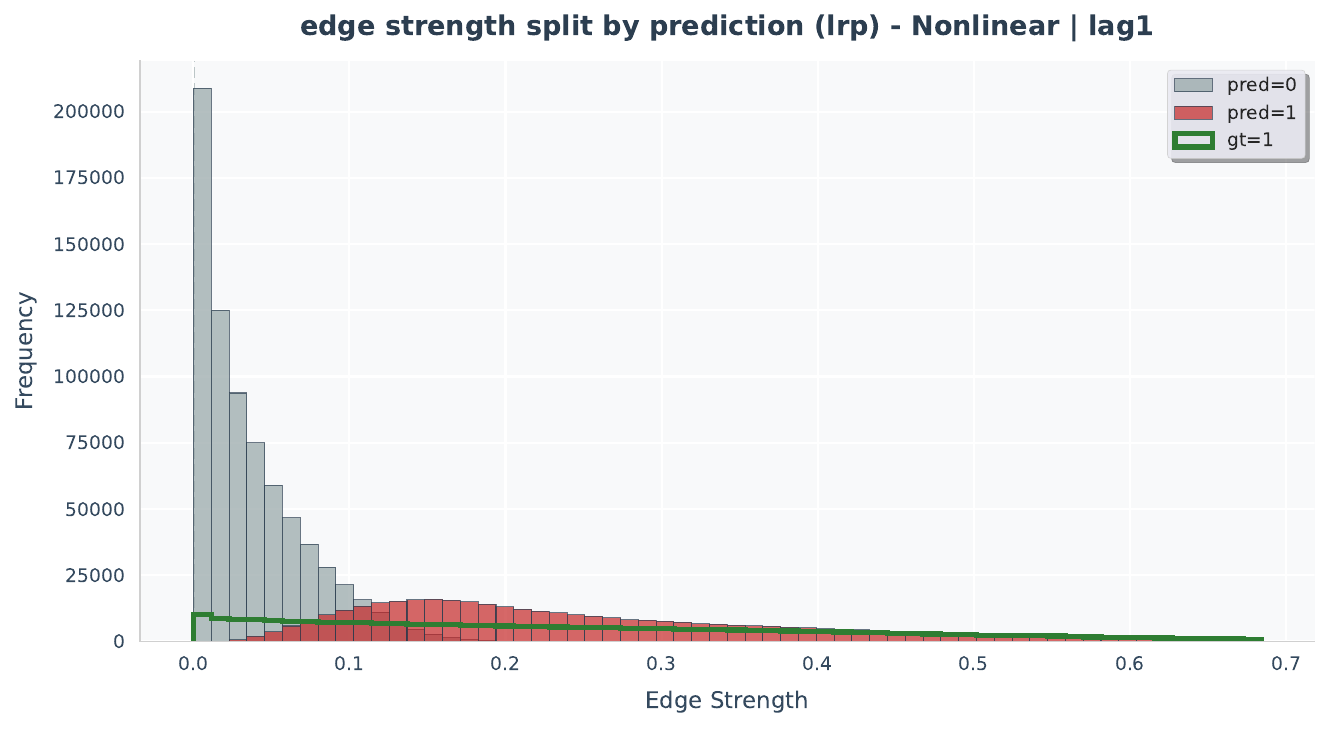}
    \end{subfigure}
    \begin{subfigure}[b]{0.32\textwidth}
        \centering
        \includegraphics[width=\textwidth]{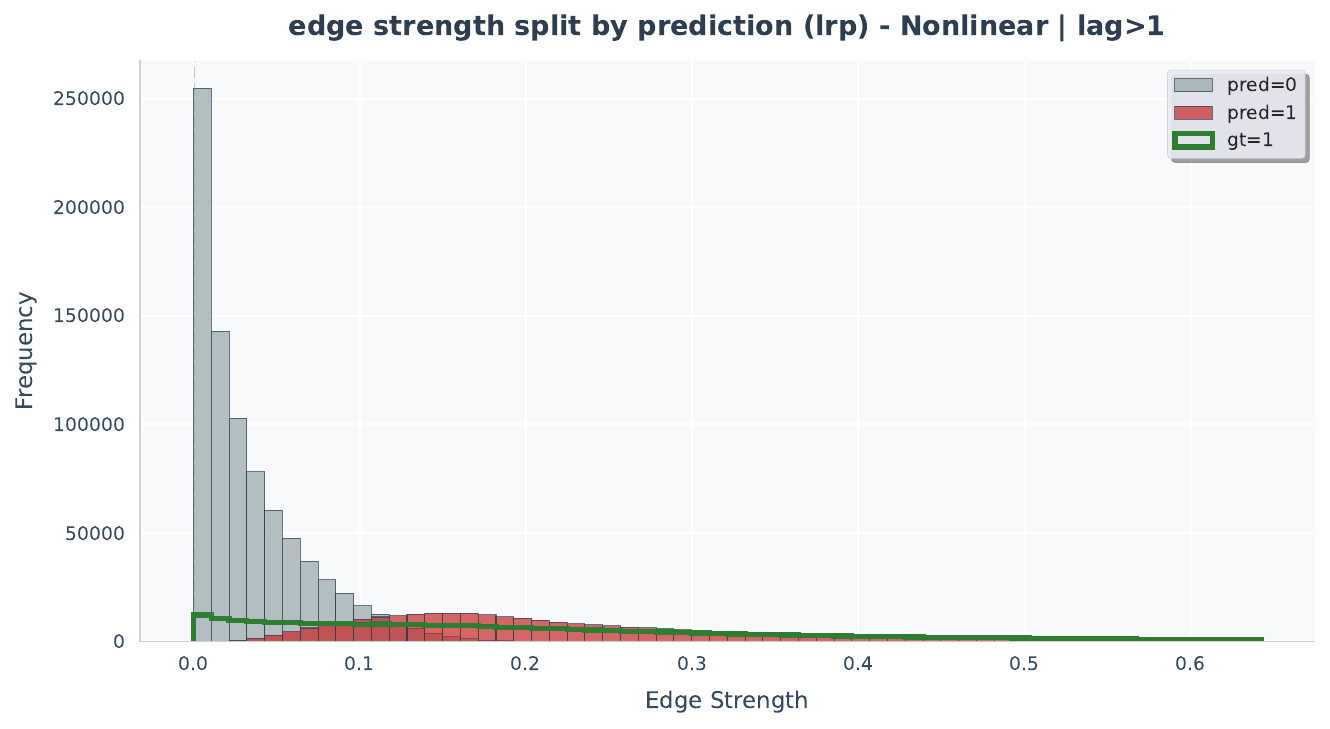}
    \end{subfigure}
    \caption{\revision{\textbf{Histograms of edge strengths in the nonlinear setting:} \textbf{(A)} Histogram of edge strengths in the nonlinear setting. \textbf{(B)} Histogram of edge strengths in the nonlinear setting with lag 1. \textbf{(C)} Histogram of edge strengths in the nonlinear setting with lag larger than 1.}}
    \label{fig:uncertainty-hist-nonlinear-edge-strength}
\end{figure}

\subsubsection{\revision{Experiment results based on precision, recall and SHD}} \label{sec:additional-metrics}
\revision{Here we show all synthetic experiment results based on precision, recall, F1 score and structural Hamming distance (SHD).}

\begin{table}[htbp]
\centering
\scriptsize
\caption{\revision{High-dimensional (Linear): P/R/F1/SHD (Base: n=10) (`-' indicates results not completed due to numerical instability or timeout.)}}
\label{tab:highdim_linear_metrics}
\resizebox{\textwidth}{!}{%
\begin{tabular}{lcccccccccccccccc}
\toprule
Method & \multicolumn{4}{c}{Base} & \multicolumn{4}{c}{n=25} & \multicolumn{4}{c}{n=50} & \multicolumn{4}{c}{n=100} \\
\cmidrule(lr){2-17}
 & P & R & F1 & SHD & P & R & F1 & SHD & P & R & F1 & SHD & P & R & F1 & SHD \\
\midrule
DOT & 0.91$\pm$0.03 & \textbf{1.00$\pm$0.00} & \underline{0.95$\pm$0.02} & \underline{2.7$\pm$1.0} & 0.84$\pm$0.03 & 0.98$\pm$0.02 & 0.91$\pm$0.02 & 12.7$\pm$3.1 & 0.84$\pm$0.04 & 0.99$\pm$0.01 & 0.91$\pm$0.02 & 24.3$\pm$6.0 & 0.94$\pm$0.01 & 1.00$\pm$0.00 & \underline{0.97$\pm$0.00} & \underline{18.3$\pm$2.1} \\
DYNOTEARS & \textbf{1.00$\pm$0.00} & 0.90$\pm$0.02 & 0.95$\pm$0.01 & 2.7$\pm$0.5 & \textbf{1.00$\pm$0.00} & 0.85$\pm$0.04 & \underline{0.92$\pm$0.02} & \underline{10.0$\pm$2.6} & \textbf{1.00$\pm$0.00} & 0.89$\pm$0.02 & \textbf{0.94$\pm$0.01} & \textbf{14.0$\pm$2.6} & \textbf{1.00$\pm$0.00} & 0.87$\pm$0.01 & 0.93$\pm$0.01 & 36.3$\pm$3.8 \\
MV Granger & 0.20$\pm$0.00 & 0.78$\pm$0.09 & 0.32$\pm$0.01 & 91.3$\pm$7.4 & 0.20$\pm$0.00 & 0.75$\pm$0.02 & 0.32$\pm$0.00 & 209.3$\pm$4.0 & 0.20$\pm$0.00 & 0.77$\pm$0.02 & 0.32$\pm$0.00 & 423.7$\pm$32.6 & 0.20$\pm$0.00 & 0.77$\pm$0.01 & 0.32$\pm$0.00 & 942.7$\pm$2.9 \\
NTS-NOTEARS & 0.78$\pm$0.44 & 0.64$\pm$0.37 & 0.70$\pm$0.40 & 9.7$\pm$9.9 & 0.00 & 0.00 & 0.00 & 64.3$\pm$2.3 & 0.00 & 0.00 & 0.00 & 127.7$\pm$7.1 & 0.33$\pm$0.58 & 0.26$\pm$0.45 & 0.29$\pm$0.50 & 210.3$\pm$126.8 \\
PCMCI parcorr & 0.53$\pm$0.07 & \underline{1.00$\pm$0.00} & 0.69$\pm$0.06 & 25.1$\pm$6.5 & 0.29$\pm$0.02 & \textbf{1.00$\pm$0.00} & 0.45$\pm$0.02 & 160.7$\pm$12.4 & 0.17$\pm$0.01 & \textbf{1.00$\pm$0.00} & 0.29$\pm$0.01 & 624.7$\pm$7.8 & 0.10$\pm$0.00 & \textbf{1.00$\pm$0.00} & 0.18$\pm$0.00 & 2500.0$\pm$59.6 \\
PW Granger & 0.11$\pm$0.02 & 0.88$\pm$0.03 & 0.19$\pm$0.04 & 211.7$\pm$54.9 & 0.08$\pm$0.01 & 0.90$\pm$0.04 & 0.15$\pm$0.02 & 652.0$\pm$58.7 & 0.06$\pm$0.00 & 0.88$\pm$0.02 & 0.11$\pm$0.00 & 1840.0$\pm$217.9 & 0.05$\pm$0.00 & 0.90$\pm$0.01 & 0.09$\pm$0.00 & 5191.7$\pm$215.0 \\
Random Guess & 0.04$\pm$0.02 & 0.05$\pm$0.02 & 0.05$\pm$0.02 & 55.7$\pm$3.9 & 0.01$\pm$0.02 & 0.01$\pm$0.02 & 0.01$\pm$0.02 & 125.0$\pm$13.5 & 0.01$\pm$0.01 & 0.01$\pm$0.01 & 0.01$\pm$0.01 & 249.0$\pm$13.0 & 0.00$\pm$0.00 & 0.00$\pm$0.00 & 0.00$\pm$0.00 & 574.3$\pm$32.0 \\
TCDF & \underline{1.00$\pm$0.00} & 0.47$\pm$0.05 & 0.64$\pm$0.04 & 14.3$\pm$0.9 & 0.99$\pm$0.02 & 0.49$\pm$0.03 & 0.65$\pm$0.03 & 33.3$\pm$1.2 & \underline{0.91$\pm$0.02} & 0.62$\pm$0.04 & 0.73$\pm$0.03 & 57.0$\pm$8.7 & \underline{0.99$\pm$0.00} & 0.56$\pm$0.02 & 0.72$\pm$0.02 & 125.0$\pm$7.0 \\
VAR-LiNGAM & 0.97$\pm$0.05 & 1.00$\pm$0.00 & \textbf{0.98$\pm$0.03} & \textbf{0.9$\pm$1.5} & \underline{1.00$\pm$0.01} & \underline{1.00$\pm$0.00} & \textbf{1.00$\pm$0.00} & \textbf{0.3$\pm$0.6} & 0.85$\pm$0.05 & \underline{1.00$\pm$0.00} & \underline{0.92$\pm$0.03} & \underline{22.3$\pm$7.1} & 0.95$\pm$0.01 & \underline{1.00$\pm$0.00} & \textbf{0.97$\pm$0.00} & \textbf{16.3$\pm$2.5} \\
\bottomrule
\end{tabular}
}
\end{table}

\begin{table}[htbp]
\centering
\scriptsize
\caption{\revision{High-dimensional (Nonlinear): P/R/F1/SHD (Base: n=10) (`-' indicates results not completed due to numerical instability or timeout.)}}
\label{tab:highdim_nonlinear_metrics}
\resizebox{\textwidth}{!}{%
\begin{tabular}{lcccccccccccccccc}
\toprule
Method & \multicolumn{4}{c}{Base} & \multicolumn{4}{c}{n=25} & \multicolumn{4}{c}{n=50} & \multicolumn{4}{c}{n=100} \\
\cmidrule(lr){2-17}
 & P & R & F1 & SHD & P & R & F1 & SHD & P & R & F1 & SHD & P & R & F1 & SHD \\
\midrule
DOT & 0.91$\pm$0.03 & \textbf{1.00$\pm$0.00} & \underline{0.95$\pm$0.02} & \underline{2.7$\pm$1.0} & 0.56$\pm$0.01 & 0.65$\pm$0.01 & \textbf{0.60$\pm$0.01} & \textbf{55.3$\pm$1.2} & 0.54$\pm$0.04 & 0.64$\pm$0.06 & \textbf{0.59$\pm$0.05} & \underline{114.3$\pm$15.5} & 0.56$\pm$0.01 & 0.60$\pm$0.01 & \textbf{0.58$\pm$0.01} & \textbf{245.7$\pm$8.0} \\
DYNOTEARS & \textbf{1.00$\pm$0.00} & 0.90$\pm$0.02 & 0.95$\pm$0.01 & 2.7$\pm$0.5 & \textbf{1.00$\pm$0.00} & 0.05$\pm$0.04 & 0.10$\pm$0.07 & \underline{61.0$\pm$4.6} & \underline{1.00$\pm$0.00} & 0.06$\pm$0.02 & 0.10$\pm$0.04 & 120.7$\pm$9.7 & \underline{1.00$\pm$0.00} & 0.05$\pm$0.00 & 0.09$\pm$0.01 & 269.7$\pm$2.1 \\
MV Granger & 0.20$\pm$0.00 & 0.78$\pm$0.09 & 0.32$\pm$0.01 & 91.3$\pm$7.4 & 0.20$\pm$0.00 & 0.04$\pm$0.03 & 0.07$\pm$0.04 & 72.3$\pm$4.9 & 0.19$\pm$0.02 & 0.04$\pm$0.01 & 0.06$\pm$0.01 & 143.3$\pm$7.4 & 0.14$\pm$0.05 & 0.04$\pm$0.01 & 0.06$\pm$0.01 & 345.0$\pm$23.6 \\
NTS-NOTEARS & 0.78$\pm$0.44 & 0.64$\pm$0.37 & 0.70$\pm$0.40 & 9.7$\pm$9.9 & 0.00 & 0.00 & 0.00 & 64.3$\pm$2.3 & 0.00 & 0.00 & 0.00 & 127.7$\pm$7.1 & 0.33$\pm$0.58 & 0.01$\pm$0.02 & 0.02$\pm$0.03 & 281.0$\pm$5.0 \\
PCMCI parcorr & 0.53$\pm$0.07 & \underline{1.00$\pm$0.00} & 0.69$\pm$0.06 & 25.1$\pm$6.5 & 0.25$\pm$0.02 & \textbf{0.82$\pm$0.03} & \underline{0.39$\pm$0.03} & 169.3$\pm$13.1 & 0.14$\pm$0.00 & \textbf{0.80$\pm$0.05} & 0.24$\pm$0.01 & 648.0$\pm$20.8 & 0.08$\pm$0.00 & \textbf{0.80$\pm$0.00} & 0.15$\pm$0.00 & 2571.0$\pm$52.0 \\
PW Granger & 0.11$\pm$0.02 & 0.88$\pm$0.03 & 0.19$\pm$0.04 & 211.7$\pm$54.9 & 0.16$\pm$0.03 & \underline{0.69$\pm$0.07} & 0.26$\pm$0.04 & 258.0$\pm$29.6 & 0.10$\pm$0.01 & \underline{0.68$\pm$0.05} & 0.18$\pm$0.01 & 817.7$\pm$80.7 & 0.06$\pm$0.00 & \underline{0.68$\pm$0.02} & 0.11$\pm$0.00 & 2971.3$\pm$74.6 \\
Random Guess & 0.04$\pm$0.02 & 0.05$\pm$0.02 & 0.05$\pm$0.02 & 55.7$\pm$3.9 & 0.01$\pm$0.02 & 0.01$\pm$0.02 & 0.01$\pm$0.02 & 125.0$\pm$13.5 & 0.01$\pm$0.01 & 0.01$\pm$0.01 & 0.01$\pm$0.01 & 249.0$\pm$13.0 & 0.00$\pm$0.00 & 0.00$\pm$0.00 & 0.00$\pm$0.00 & 574.3$\pm$32.0 \\
TCDF & \underline{1.00$\pm$0.00} & 0.47$\pm$0.05 & 0.64$\pm$0.04 & 14.3$\pm$0.9 & \underline{1.00$\pm$0.00} & 0.04$\pm$0.02 & 0.08$\pm$0.05 & 61.7$\pm$3.8 & \textbf{1.00$\pm$0.00} & 0.33$\pm$0.05 & \underline{0.50$\pm$0.05} & \textbf{85.3$\pm$10.6} & \textbf{1.00$\pm$0.00} & 0.10$\pm$0.02 & \underline{0.18$\pm$0.03} & \underline{255.7$\pm$6.7} \\
VAR-LiNGAM & 0.97$\pm$0.05 & 1.00$\pm$0.00 & \textbf{0.98$\pm$0.03} & \textbf{0.9$\pm$1.5} & 0.00 & 0.00 & 0.00 & 77.3$\pm$1.5 & 0.00 & 0.00 & 0.00 & 151.7$\pm$5.0 & 0.00$\pm$0.00 & 0.00$\pm$0.00 & 0.00$\pm$0.00 & 581.0$\pm$3.6 \\
\bottomrule
\end{tabular}
}
\end{table}

\begin{table}[htbp]
\centering
\scriptsize
\caption{\revision{Long-range Dependencies (Linear): P/R/F1/SHD (Base: k=5) (`-' indicates results not completed due to numerical instability or timeout.)}}
\label{tab:longrange_linear_metrics}
\resizebox{\textwidth}{!}{%
\begin{tabular}{lcccccccccccccccc}
\toprule
Method & \multicolumn{4}{c}{Base} & \multicolumn{4}{c}{k=10} & \multicolumn{4}{c}{k=25} & \multicolumn{4}{c}{k=50} \\
\cmidrule(lr){2-17}
 & P & R & F1 & SHD & P & R & F1 & SHD & P & R & F1 & SHD & P & R & F1 & SHD \\
\midrule
DOT & 0.91$\pm$0.03 & \textbf{1.00$\pm$0.00} & \underline{0.95$\pm$0.02} & \underline{2.7$\pm$1.0} & 0.92$\pm$0.05 & \textbf{1.00$\pm$0.00} & \textbf{0.96$\pm$0.03} & \textbf{2.3$\pm$1.5} & 0.91$\pm$0.05 & \textbf{1.00$\pm$0.00} & \textbf{0.95$\pm$0.03} & \textbf{2.7$\pm$1.5} & 0.96$\pm$0.08 & \textbf{1.00$\pm$0.00} & \textbf{0.98$\pm$0.04} & \textbf{1.3$\pm$2.3} \\
DYNOTEARS & \textbf{1.00$\pm$0.00} & 0.90$\pm$0.02 & 0.95$\pm$0.01 & 2.7$\pm$0.5 & \textbf{1.00} & 0.91$\pm$0.05 & 0.95$\pm$0.03 & 2.7$\pm$1.5 & \textbf{1.00$\pm$0.00} & 0.90$\pm$0.08 & \underline{0.95$\pm$0.04} & \underline{2.7$\pm$2.1} & \textbf{1.00$\pm$0.00} & 0.80$\pm$0.07 & \underline{0.89$\pm$0.04} & \underline{5.7$\pm$2.1} \\
MV Granger & 0.20$\pm$0.00 & 0.78$\pm$0.09 & 0.32$\pm$0.01 & 91.3$\pm$7.4 & 0.10$\pm$0.00 & 0.79$\pm$0.09 & 0.18$\pm$0.00 & 201.0$\pm$7.8 & 0.04$\pm$0.00 & 0.78$\pm$0.07 & 0.08$\pm$0.00 & 518.0$\pm$48.5 & 0.02$\pm$0.00 & 0.72$\pm$0.05 & 0.04$\pm$0.00 & 1020.7$\pm$101.5 \\
NTS-NOTEARS & 0.78$\pm$0.44 & 0.64$\pm$0.37 & 0.70$\pm$0.40 & 9.7$\pm$9.9 & 0.00 & 0.00 & 0.00 & 37.3$\pm$1.2 & 0.00 & 0.00 & 0.00 & 32.0$\pm$3.0 & 0.00 & 0.00 & 0.00 & 33.7$\pm$0.6 \\
PCMCI parcorr & 0.53$\pm$0.07 & \underline{1.00$\pm$0.00} & 0.69$\pm$0.06 & 25.1$\pm$6.5 & 0.34$\pm$0.04 & \underline{1.00$\pm$0.00} & 0.51$\pm$0.04 & 53.0$\pm$6.2 & 0.18$\pm$0.02 & \underline{1.00$\pm$0.00} & 0.31$\pm$0.03 & 123.3$\pm$10.3 & 0.10$\pm$0.01 & \underline{1.00$\pm$0.00} & 0.19$\pm$0.02 & 248.7$\pm$12.0 \\
PW Granger & 0.11$\pm$0.02 & 0.88$\pm$0.03 & 0.19$\pm$0.04 & 211.7$\pm$54.9 & 0.06$\pm$0.01 & 0.88$\pm$0.04 & 0.12$\pm$0.02 & 369.0$\pm$53.8 & 0.03$\pm$0.00 & 0.88$\pm$0.04 & 0.05$\pm$0.00 & 897.3$\pm$52.0 & 0.01$\pm$0.00 & 0.89$\pm$0.03 & 0.03$\pm$0.00 & 1993.7$\pm$263.5 \\
Random Guess & 0.04$\pm$0.02 & 0.05$\pm$0.02 & 0.05$\pm$0.02 & 55.7$\pm$3.9 & 0.05$\pm$0.02 & 0.05$\pm$0.02 & 0.05$\pm$0.02 & 54.3$\pm$5.5 & 0.00 & 0.00 & 0.00 & 50.3$\pm$7.6 & 0.01$\pm$0.02 & 0.01$\pm$0.02 & 0.01$\pm$0.02 & 56.7$\pm$2.1 \\
TCDF & \underline{1.00$\pm$0.00} & 0.47$\pm$0.05 & 0.64$\pm$0.04 & 14.3$\pm$0.9 & \underline{1.00$\pm$0.00} & 0.33$\pm$0.09 & 0.49$\pm$0.10 & 18.7$\pm$3.5 & \underline{1.00$\pm$0.00} & 0.22$\pm$0.06 & 0.36$\pm$0.08 & 21.3$\pm$1.2 & \underline{1.00$\pm$0.00} & 0.18$\pm$0.05 & 0.30$\pm$0.07 & 23.7$\pm$3.2 \\
VAR-LiNGAM & 0.97$\pm$0.05 & 1.00$\pm$0.00 & \textbf{0.98$\pm$0.03} & \textbf{0.9$\pm$1.5} & 0.92$\pm$0.05 & 1.00$\pm$0.00 & \underline{0.96$\pm$0.03} & \underline{2.3$\pm$1.5} & 0.00 & 0.00 & 0.00 & 57.3$\pm$1.5 & 0.00 & 0.00 & 0.00 & 58.7$\pm$2.3 \\
\bottomrule
\end{tabular}
}
\end{table}

\begin{table}[htbp]
\centering
\scriptsize
\caption{\revision{Long-range Dependencies (Nonlinear): P/R/F1/SHD (Base: k=5) (`-' indicates results not completed due to numerical instability or timeout.)}}
\label{tab:longrange_nonlinear_metrics}
\resizebox{\textwidth}{!}{%
\begin{tabular}{lcccccccccccccccc}
\toprule
Method & \multicolumn{4}{c}{Base} & \multicolumn{4}{c}{k=10} & \multicolumn{4}{c}{k=25} & \multicolumn{4}{c}{k=50} \\
\cmidrule(lr){2-17}
 & P & R & F1 & SHD & P & R & F1 & SHD & P & R & F1 & SHD & P & R & F1 & SHD \\
\midrule
DOT & 0.91$\pm$0.03 & \textbf{1.00$\pm$0.00} & \underline{0.95$\pm$0.02} & \underline{2.7$\pm$1.0} & 0.91$\pm$0.04 & 0.99$\pm$0.02 & 0.95$\pm$0.02 & 3.0$\pm$1.0 & \underline{0.91$\pm$0.05} & \textbf{1.00$\pm$0.00} & \textbf{0.95$\pm$0.03} & \textbf{2.7$\pm$1.5} & 0.94$\pm$0.10 & \underline{0.99$\pm$0.02} & \textbf{0.96$\pm$0.06} & \textbf{2.0$\pm$3.5} \\
DYNOTEARS & \textbf{1.00$\pm$0.00} & 0.90$\pm$0.02 & 0.95$\pm$0.01 & 2.7$\pm$0.5 & \textbf{1.00} & 0.91$\pm$0.05 & \underline{0.95$\pm$0.03} & \underline{2.7$\pm$1.5} & \textbf{1.00$\pm$0.00} & 0.90$\pm$0.08 & \underline{0.95$\pm$0.04} & \underline{2.7$\pm$2.1} & \textbf{1.00$\pm$0.00} & 0.80$\pm$0.07 & \underline{0.89$\pm$0.04} & \underline{5.7$\pm$2.1} \\
MV Granger & 0.20$\pm$0.00 & 0.78$\pm$0.09 & 0.32$\pm$0.01 & 91.3$\pm$7.4 & 0.10$\pm$0.00 & 0.79$\pm$0.09 & 0.18$\pm$0.00 & 201.0$\pm$7.8 & 0.04$\pm$0.00 & 0.78$\pm$0.07 & 0.08$\pm$0.00 & 518.0$\pm$48.5 & 0.02$\pm$0.00 & 0.72$\pm$0.05 & 0.04$\pm$0.00 & 1020.7$\pm$101.5 \\
NTS-NOTEARS & 0.78$\pm$0.44 & 0.64$\pm$0.37 & 0.70$\pm$0.40 & 9.7$\pm$9.9 & 0.00 & 0.00 & 0.00 & 30.7$\pm$3.8 & 0.00 & 0.00 & 0.00 & 29.0$\pm$3.0 & 0.00 & 0.00 & 0.00 & 31.3$\pm$2.3 \\
PCMCI parcorr & 0.53$\pm$0.07 & \underline{1.00$\pm$0.00} & 0.69$\pm$0.06 & 25.1$\pm$6.5 & 0.34$\pm$0.04 & \textbf{1.00$\pm$0.00} & 0.51$\pm$0.04 & 53.0$\pm$6.2 & 0.18$\pm$0.02 & \underline{1.00$\pm$0.00} & 0.31$\pm$0.03 & 123.3$\pm$10.3 & 0.10$\pm$0.01 & \textbf{1.00$\pm$0.00} & 0.19$\pm$0.02 & 248.7$\pm$12.0 \\
PW Granger & 0.11$\pm$0.02 & 0.88$\pm$0.03 & 0.19$\pm$0.04 & 211.7$\pm$54.9 & 0.06$\pm$0.01 & 0.88$\pm$0.04 & 0.12$\pm$0.02 & 369.0$\pm$53.8 & 0.03$\pm$0.00 & 0.88$\pm$0.04 & 0.05$\pm$0.00 & 897.3$\pm$52.0 & 0.01$\pm$0.00 & 0.89$\pm$0.03 & 0.03$\pm$0.00 & 1993.7$\pm$263.5 \\
Random Guess & 0.04$\pm$0.02 & 0.05$\pm$0.02 & 0.05$\pm$0.02 & 55.7$\pm$3.9 & 0.05$\pm$0.02 & 0.05$\pm$0.02 & 0.05$\pm$0.02 & 54.3$\pm$5.5 & 0.00 & 0.00 & 0.00 & 50.3$\pm$7.6 & 0.01$\pm$0.02 & 0.01$\pm$0.02 & 0.01$\pm$0.02 & 56.7$\pm$2.1 \\
TCDF & \underline{1.00$\pm$0.00} & 0.47$\pm$0.05 & 0.64$\pm$0.04 & 14.3$\pm$0.9 & \underline{1.00$\pm$0.00} & 0.33$\pm$0.09 & 0.49$\pm$0.10 & 18.7$\pm$3.5 & 0.33$\pm$0.58 & 0.09$\pm$0.15 & 0.14$\pm$0.24 & 25.0$\pm$4.6 & \underline{1.00$\pm$0.00} & 0.18$\pm$0.05 & 0.30$\pm$0.07 & 23.7$\pm$3.2 \\
VAR-LiNGAM & 0.97$\pm$0.05 & 1.00$\pm$0.00 & \textbf{0.98$\pm$0.03} & \textbf{0.9$\pm$1.5} & 0.99$\pm$0.02 & \underline{1.00$\pm$0.00} & \textbf{0.99$\pm$0.01} & \textbf{0.3$\pm$0.6} & 0.00 & 0.00 & 0.00 & 55.7$\pm$4.7 & 0.00 & 0.00 & 0.00 & 54.7$\pm$4.7 \\
\bottomrule
\end{tabular}
}
\end{table}

\begin{table}[htbp]
\centering
\scriptsize
\caption{\revision{Nonlinearity: Precision, Recall, F1, and SHD (Linear is baseline) (`-' indicates results not completed due to numerical instability or timeout.)}}
\label{tab:nonlinearity_metrics}
\resizebox{\textwidth}{!}{%
\begin{tabular}{lcccccccccccccccccccc}
\toprule
Method & \multicolumn{4}{c}{Linear} & \multicolumn{4}{c}{CAM-M} & \multicolumn{4}{c}{CAM-PL+M} & \multicolumn{4}{c}{MLP-Add} & \multicolumn{4}{c}{MLP-Cat} \\
\cmidrule(lr){2-21}
 & P & R & F1 & SHD & P & R & F1 & SHD & P & R & F1 & SHD & P & R & F1 & SHD & P & R & F1 & SHD \\
\midrule
DOT & 0.91$\pm$0.03 & \textbf{1.00$\pm$0.00} & \underline{0.95$\pm$0.02} & \underline{2.7$\pm$1.0} & \underline{0.39$\pm$0.09} & \underline{0.42$\pm$0.09} & \textbf{0.40$\pm$0.09} & 34.2$\pm$5.0 & \underline{0.70$\pm$0.17} & \underline{0.77$\pm$0.17} & \textbf{0.73$\pm$0.17} & \textbf{53.7$\pm$77.5} & \textbf{0.61$\pm$0.12} & \underline{0.67$\pm$0.13} & \textbf{0.64$\pm$0.12} & \textbf{20.9$\pm$6.9} & 0.55$\pm$0.12 & 0.61$\pm$0.14 & \underline{0.58$\pm$0.13} & \textbf{24.2$\pm$7.4} \\
DYNOTEARS & \textbf{1.00$\pm$0.00} & 0.90$\pm$0.02 & 0.95$\pm$0.01 & 2.7$\pm$0.5 & 0.33$\pm$0.50 & 0.04$\pm$0.06 & 0.07$\pm$0.10 & \underline{26.3$\pm$2.5} & 0.67$\pm$0.48 & 0.31$\pm$0.41 & 0.34$\pm$0.43 & \underline{60.5$\pm$83.5} & 0.00 & 0.00 & 0.00 & 27.3$\pm$1.0 & \textbf{1.00$\pm$0.00} & 0.04$\pm$0.02 & 0.08$\pm$0.03 & 26.1$\pm$0.9 \\
MV Granger & 0.20$\pm$0.00 & 0.78$\pm$0.09 & 0.32$\pm$0.01 & 91.3$\pm$7.4 & 0.09$\pm$0.11 & 0.03$\pm$0.05 & 0.04$\pm$0.06 & 29.7$\pm$3.9 & 0.10$\pm$0.08 & 0.29$\pm$0.35 & 0.07$\pm$0.06 & 266.1$\pm$316.8 & 0.04$\pm$0.09 & 0.01$\pm$0.02 & 0.01$\pm$0.03 & 28.0$\pm$1.5 & 0.16$\pm$0.09 & 0.07$\pm$0.08 & 0.09$\pm$0.07 & 33.3$\pm$6.0 \\
NTS-NOTEARS & 0.78$\pm$0.44 & 0.64$\pm$0.37 & 0.70$\pm$0.40 & 9.7$\pm$9.9 & 0.22$\pm$0.44 & 0.01$\pm$0.03 & 0.02$\pm$0.05 & 27.0$\pm$1.6 & 0.04$\pm$0.19 & 0.00$\pm$0.01 & 0.00$\pm$0.01 & 71.8$\pm$81.9 & 0.00 & 0.00 & 0.00 & 27.3$\pm$1.0 & 0.56$\pm$0.53 & 0.02$\pm$0.02 & 0.04$\pm$0.04 & 26.8$\pm$1.2 \\
PCMCI parcorr & 0.53$\pm$0.07 & \underline{1.00$\pm$0.00} & 0.69$\pm$0.06 & 25.1$\pm$6.5 & 0.31$\pm$0.05 & \textbf{0.43$\pm$0.09} & \underline{0.36$\pm$0.05} & 41.7$\pm$5.5 & 0.28$\pm$0.16 & \textbf{0.88$\pm$0.10} & \underline{0.40$\pm$0.18} & 433.8$\pm$793.0 & \underline{0.48$\pm$0.05} & \textbf{0.78$\pm$0.13} & \underline{0.59$\pm$0.06} & 29.4$\pm$3.6 & 0.51$\pm$0.03 & \textbf{0.80$\pm$0.13} & \textbf{0.62$\pm$0.06} & 26.3$\pm$3.0 \\
PW Granger & 0.11$\pm$0.02 & 0.88$\pm$0.03 & 0.19$\pm$0.04 & 211.7$\pm$54.9 & 0.21$\pm$0.05 & 0.35$\pm$0.09 & 0.26$\pm$0.06 & 53.9$\pm$10.3 & 0.13$\pm$0.10 & 0.75$\pm$0.11 & 0.20$\pm$0.13 & 833.8$\pm$978.3 & 0.26$\pm$0.02 & 0.64$\pm$0.12 & 0.37$\pm$0.02 & 60.3$\pm$9.3 & 0.26$\pm$0.04 & \underline{0.70$\pm$0.12} & 0.38$\pm$0.04 & 64.6$\pm$14.1 \\
Random Guess & 0.04$\pm$0.02 & 0.05$\pm$0.02 & 0.05$\pm$0.02 & 55.7$\pm$3.9 & 0.04$\pm$0.02 & 0.05$\pm$0.02 & 0.05$\pm$0.02 & 55.7$\pm$3.9 & 0.02$\pm$0.02 & 0.03$\pm$0.03 & 0.02$\pm$0.02 & 141.9$\pm$168.3 & 0.04$\pm$0.02 & 0.05$\pm$0.02 & 0.05$\pm$0.02 & 55.7$\pm$3.9 & 0.04$\pm$0.02 & 0.05$\pm$0.02 & 0.05$\pm$0.02 & 55.7$\pm$3.9 \\
TCDF & \underline{1.00$\pm$0.00} & 0.47$\pm$0.05 & 0.64$\pm$0.04 & 14.3$\pm$0.9 & \textbf{0.78$\pm$0.44} & 0.09$\pm$0.06 & 0.15$\pm$0.11 & \textbf{25.0$\pm$2.4} & \textbf{0.81$\pm$0.40} & 0.14$\pm$0.13 & 0.23$\pm$0.18 & 60.7$\pm$73.6 & 0.22$\pm$0.44 & 0.01$\pm$0.02 & 0.02$\pm$0.03 & \underline{27.1$\pm$0.9} & \underline{0.67$\pm$0.50} & 0.06$\pm$0.05 & 0.11$\pm$0.08 & \underline{25.8$\pm$1.8} \\
VAR-LiNGAM & 0.97$\pm$0.05 & 1.00$\pm$0.00 & \textbf{0.98$\pm$0.03} & \textbf{0.9$\pm$1.5} & 0.11$\pm$0.20 & 0.12$\pm$0.23 & 0.12$\pm$0.21 & 32.9$\pm$8.0 & 0.20$\pm$0.38 & 0.21$\pm$0.40 & 0.20$\pm$0.39 & 110.6$\pm$174.8 & 0.18$\pm$0.35 & 0.20$\pm$0.39 & 0.19$\pm$0.37 & 29.6$\pm$14.7 & 0.18$\pm$0.34 & 0.20$\pm$0.38 & 0.19$\pm$0.36 & 31.0$\pm$14.3 \\
\bottomrule
\end{tabular}
}
\end{table}

\begin{table}[htbp]
\centering
\scriptsize
\caption{\revision{Latent Variables: Precision, Recall, F1, and SHD (Base: 0 latent) (`-' indicates results not completed due to numerical instability or timeout.)}}
\label{tab:latent_metrics}
\resizebox{\textwidth}{!}{%
\begin{tabular}{lcccccccccccccccc}
\toprule
Method & \multicolumn{4}{c}{Base} & \multicolumn{4}{c}{3 Latent} & \multicolumn{4}{c}{5 Latent} & \multicolumn{4}{c}{10 Latent} \\
\cmidrule(lr){2-17}
 & P & R & F1 & SHD & P & R & F1 & SHD & P & R & F1 & SHD & P & R & F1 & SHD \\
\midrule
DOT & 0.91$\pm$0.03 & \textbf{1.00$\pm$0.00} & \underline{0.95$\pm$0.02} & \underline{2.7$\pm$1.0} & 0.76$\pm$0.04 & \textbf{1.00$\pm$0.00} & 0.86$\pm$0.02 & 7.3$\pm$1.2 & 0.67$\pm$0.09 & \textbf{1.00$\pm$0.00} & 0.80$\pm$0.07 & 10.0$\pm$2.6 & 0.44$\pm$0.18 & \textbf{1.00$\pm$0.00} & 0.60$\pm$0.18 & 16.7$\pm$5.5 \\
DYNOTEARS & \textbf{1.00$\pm$0.00} & 0.90$\pm$0.02 & 0.95$\pm$0.01 & 2.7$\pm$0.5 & \textbf{1.00$\pm$0.00} & 0.87$\pm$0.11 & \textbf{0.93$\pm$0.06} & \textbf{3.0$\pm$2.6} & \textbf{1.00$\pm$0.00} & 0.91$\pm$0.09 & \textbf{0.95$\pm$0.05} & \textbf{1.7$\pm$1.5} & \textbf{1.00$\pm$0.00} & 0.87$\pm$0.02 & \textbf{0.93$\pm$0.01} & \textbf{1.7$\pm$0.6} \\
MV Granger & 0.20$\pm$0.00 & 0.78$\pm$0.09 & 0.32$\pm$0.01 & 91.3$\pm$7.4 & 0.20$\pm$0.00 & 0.72$\pm$0.12 & 0.31$\pm$0.01 & 71.7$\pm$5.1 & 0.20$\pm$0.00 & 0.72$\pm$0.12 & 0.31$\pm$0.01 & 64.0$\pm$14.7 & 0.20$\pm$0.00 & 0.62$\pm$0.07 & 0.30$\pm$0.01 & 38.3$\pm$17.9 \\
NTS-NOTEARS & 0.78$\pm$0.44 & 0.64$\pm$0.37 & 0.70$\pm$0.40 & 9.7$\pm$9.9 & \underline{1.00$\pm$0.00} & 0.78$\pm$0.15 & \underline{0.87$\pm$0.09} & \underline{5.0$\pm$3.6} & \underline{1.00$\pm$0.00} & 0.82$\pm$0.10 & \underline{0.90$\pm$0.07} & \underline{3.3$\pm$1.5} & \underline{1.00$\pm$0.00} & 0.85$\pm$0.07 & \underline{0.92$\pm$0.04} & \underline{2.0$\pm$1.0} \\
PCMCI parcorr & 0.53$\pm$0.07 & \underline{1.00$\pm$0.00} & 0.69$\pm$0.06 & 25.1$\pm$6.5 & 0.42$\pm$0.03 & \underline{1.00$\pm$0.00} & 0.60$\pm$0.03 & 31.0$\pm$5.6 & 0.37$\pm$0.09 & \underline{1.00$\pm$0.00} & 0.54$\pm$0.10 & 35.0$\pm$10.6 & 0.21$\pm$0.12 & \underline{1.00$\pm$0.00} & 0.34$\pm$0.16 & 52.7$\pm$15.7 \\
PW Granger & 0.11$\pm$0.02 & 0.88$\pm$0.03 & 0.19$\pm$0.04 & 211.7$\pm$54.9 & 0.12$\pm$0.03 & 0.85$\pm$0.05 & 0.21$\pm$0.05 & 154.7$\pm$45.0 & 0.09$\pm$0.01 & 0.83$\pm$0.07 & 0.16$\pm$0.02 & 174.3$\pm$31.5 & 0.06$\pm$0.02 & 0.74$\pm$0.05 & 0.12$\pm$0.04 & 146.7$\pm$19.6 \\
Random Guess & 0.04$\pm$0.02 & 0.05$\pm$0.02 & 0.05$\pm$0.02 & 55.7$\pm$3.9 & 0.05$\pm$0.05 & 0.04$\pm$0.05 & 0.04$\pm$0.05 & 45.7$\pm$6.7 & 0.05$\pm$0.05 & 0.05$\pm$0.05 & 0.05$\pm$0.05 & 39.7$\pm$3.1 & 0.02$\pm$0.03 & 0.02$\pm$0.03 & 0.02$\pm$0.03 & 24.3$\pm$11.9 \\
TCDF & \underline{1.00$\pm$0.00} & 0.47$\pm$0.05 & 0.64$\pm$0.04 & 14.3$\pm$0.9 & 0.94$\pm$0.05 & 0.46$\pm$0.00 & 0.61$\pm$0.01 & 13.0$\pm$1.0 & 1.00$\pm$0.00 & 0.59$\pm$0.10 & 0.74$\pm$0.08 & 8.3$\pm$3.1 & 1.00$\pm$0.00 & 0.69$\pm$0.16 & 0.81$\pm$0.11 & 4.7$\pm$3.5 \\
VAR-LiNGAM & 0.97$\pm$0.05 & 1.00$\pm$0.00 & \textbf{0.98$\pm$0.03} & \textbf{0.9$\pm$1.5} & 0.77$\pm$0.06 & 1.00$\pm$0.00 & 0.87$\pm$0.04 & 7.0$\pm$2.6 & 0.66$\pm$0.17 & 1.00$\pm$0.00 & 0.79$\pm$0.13 & 11.0$\pm$7.0 & 0.35$\pm$0.40 & 0.67$\pm$0.58 & 0.43$\pm$0.44 & 30.3$\pm$22.3 \\
\bottomrule
\end{tabular}
}
\end{table}

\begin{table}[htbp]
\centering
\scriptsize
\caption{\revision{Noise Robustness: Precision, Recall, F1, and SHD (Gaussian is baseline) (`-' indicates results not completed due to numerical instability or timeout.)}}
\label{tab:noise_metrics}
\resizebox{\textwidth}{!}{%
\begin{tabular}{lcccccccccccccccccccc}
\toprule
Method & \multicolumn{4}{c}{Gaussian} & \multicolumn{4}{c}{Uniform} & \multicolumn{4}{c}{Mixed} & \multicolumn{4}{c}{Hetero-S} & \multicolumn{4}{c}{Hetero-L} \\
\cmidrule(lr){2-21}
 & P & R & F1 & SHD & P & R & F1 & SHD & P & R & F1 & SHD & P & R & F1 & SHD & P & R & F1 & SHD \\
\midrule
DOT & 0.91$\pm$0.03 & \textbf{1.00$\pm$0.00} & \underline{0.95$\pm$0.02} & \underline{2.7$\pm$1.0} & 0.91$\pm$0.04 & \textbf{1.00$\pm$0.00} & 0.95$\pm$0.02 & 2.7$\pm$1.2 & 0.91$\pm$0.04 & \textbf{1.00$\pm$0.00} & \underline{0.95$\pm$0.02} & \underline{2.7$\pm$1.2} & 0.91$\pm$0.04 & \textbf{1.00$\pm$0.00} & \underline{0.95$\pm$0.02} & \underline{2.7$\pm$1.2} & 0.91$\pm$0.04 & \textbf{1.00$\pm$0.00} & \underline{0.95$\pm$0.02} & \underline{2.7$\pm$1.2} \\
DYNOTEARS & \textbf{1.00$\pm$0.00} & 0.90$\pm$0.02 & 0.95$\pm$0.01 & 2.7$\pm$0.5 & \underline{1.00$\pm$0.00} & 0.91$\pm$0.02 & \underline{0.96$\pm$0.01} & \underline{2.3$\pm$0.6} & \underline{1.00$\pm$0.00} & 0.90$\pm$0.02 & 0.95$\pm$0.01 & 2.7$\pm$0.6 & \underline{1.00$\pm$0.00} & 0.83$\pm$0.07 & 0.91$\pm$0.04 & 4.7$\pm$2.1 & \underline{1.00$\pm$0.00} & 0.79$\pm$0.08 & 0.88$\pm$0.05 & 5.7$\pm$2.3 \\
MV Granger & 0.20$\pm$0.00 & 0.78$\pm$0.09 & 0.32$\pm$0.01 & 91.3$\pm$7.4 & 0.20$\pm$0.00 & 0.79$\pm$0.05 & 0.32$\pm$0.00 & 92.3$\pm$1.5 & 0.20$\pm$0.00 & 0.78$\pm$0.03 & 0.32$\pm$0.00 & 91.3$\pm$2.5 & 0.20$\pm$0.00 & 0.70$\pm$0.07 & 0.31$\pm$0.01 & 84.3$\pm$2.1 & 0.20$\pm$0.00 & 0.66$\pm$0.06 & 0.31$\pm$0.01 & 81.3$\pm$2.1 \\
NTS-NOTEARS & 0.78$\pm$0.44 & 0.64$\pm$0.37 & 0.70$\pm$0.40 & 9.7$\pm$9.9 & 1.00$\pm$0.00 & 0.85$\pm$0.03 & 0.92$\pm$0.02 & 4.0$\pm$1.0 & 1.00$\pm$0.00 & 0.83$\pm$0.01 & 0.91$\pm$0.01 & 4.7$\pm$0.6 & 1.00$\pm$0.00 & 0.77$\pm$0.05 & 0.87$\pm$0.03 & 6.3$\pm$1.5 & 1.00$\pm$0.00 & 0.72$\pm$0.08 & 0.84$\pm$0.06 & 7.7$\pm$2.5 \\
PCMCI parcorr & 0.53$\pm$0.07 & \underline{1.00$\pm$0.00} & 0.69$\pm$0.06 & 25.1$\pm$6.5 & 0.57$\pm$0.10 & \underline{1.00$\pm$0.00} & 0.72$\pm$0.08 & 21.3$\pm$7.5 & 0.51$\pm$0.01 & \underline{1.00$\pm$0.00} & 0.67$\pm$0.01 & 26.3$\pm$0.6 & 0.52$\pm$0.01 & \underline{1.00$\pm$0.00} & 0.69$\pm$0.01 & 25.0$\pm$1.7 & 0.53$\pm$0.01 & \underline{1.00$\pm$0.00} & 0.69$\pm$0.01 & 24.7$\pm$2.3 \\
PW Granger & 0.11$\pm$0.02 & 0.88$\pm$0.03 & 0.19$\pm$0.04 & 211.7$\pm$54.9 & 0.11$\pm$0.01 & 0.88$\pm$0.04 & 0.20$\pm$0.02 & 197.3$\pm$27.2 & 0.11$\pm$0.02 & 0.88$\pm$0.04 & 0.20$\pm$0.03 & 196.3$\pm$38.8 & 0.11$\pm$0.01 & 0.88$\pm$0.04 & 0.19$\pm$0.02 & 202.3$\pm$31.0 & 0.11$\pm$0.02 & 0.88$\pm$0.04 & 0.19$\pm$0.03 & 207.0$\pm$39.9 \\
Random Guess & 0.04$\pm$0.02 & 0.05$\pm$0.02 & 0.05$\pm$0.02 & 55.7$\pm$3.9 & 0.04$\pm$0.02 & 0.05$\pm$0.02 & 0.05$\pm$0.02 & 55.7$\pm$4.5 & 0.04$\pm$0.02 & 0.05$\pm$0.02 & 0.05$\pm$0.02 & 55.7$\pm$4.5 & 0.04$\pm$0.02 & 0.05$\pm$0.02 & 0.05$\pm$0.02 & 55.7$\pm$4.5 & 0.04$\pm$0.02 & 0.05$\pm$0.02 & 0.05$\pm$0.02 & 55.7$\pm$4.5 \\
TCDF & \underline{1.00$\pm$0.00} & 0.47$\pm$0.05 & 0.64$\pm$0.04 & 14.3$\pm$0.9 & 1.00$\pm$0.00 & 0.46$\pm$0.04 & 0.63$\pm$0.04 & 14.7$\pm$0.6 & 1.00$\pm$0.00 & 0.47$\pm$0.04 & 0.64$\pm$0.04 & 14.3$\pm$0.6 & 1.00$\pm$0.00 & 0.44$\pm$0.05 & 0.61$\pm$0.05 & 15.3$\pm$0.6 & 0.98$\pm$0.04 & 0.43$\pm$0.04 & 0.59$\pm$0.03 & 16.0 \\
VAR-LiNGAM & 0.97$\pm$0.05 & 1.00$\pm$0.00 & \textbf{0.98$\pm$0.03} & \textbf{0.9$\pm$1.5} & \textbf{1.00$\pm$0.00} & 1.00$\pm$0.00 & \textbf{1.00$\pm$0.00} & \textbf{0.0} & \textbf{1.00$\pm$0.00} & 1.00$\pm$0.00 & \textbf{1.00$\pm$0.00} & \textbf{0.0} & \textbf{1.00$\pm$0.00} & 1.00$\pm$0.00 & \textbf{1.00$\pm$0.00} & \textbf{0.0} & \textbf{1.00$\pm$0.00} & 1.00$\pm$0.00 & \textbf{1.00$\pm$0.00} & \textbf{0.0} \\
\bottomrule
\end{tabular}
}
\end{table}

\begin{table}[htbp]
\centering
\scriptsize
\caption{\revision{Non-stationary (Fixed Lag (k=5), Linear): 2 Domains (Base: stationary) (`-' indicates results not completed due to numerical instability or timeout.)}}
\label{tab:ns_regular_seg2_metrics}
\resizebox{\textwidth}{!}{%
\begin{tabular}{lcccccccccccccccc}
\toprule
Method & \multicolumn{4}{c}{Base} & \multicolumn{4}{c}{50K} & \multicolumn{4}{c}{100K} & \multicolumn{4}{c}{1M} \\
\cmidrule(lr){2-17}
 & P & R & F1 & SHD & P & R & F1 & SHD & P & R & F1 & SHD & P & R & F1 & SHD \\
\midrule
DOT & 0.91$\pm$0.03 & \textbf{1.00$\pm$0.00} & \underline{0.95$\pm$0.02} & \underline{2.7$\pm$1.0} & \textbf{0.59$\pm$0.07} & 0.66$\pm$0.12 & \textbf{0.62$\pm$0.09} & \textbf{21.3$\pm$5.7} & \textbf{0.65$\pm$0.06} & 0.73$\pm$0.09 & \textbf{0.69$\pm$0.07} & \textbf{17.7$\pm$4.0} & \textbf{0.75$\pm$0.03} & 0.84$\pm$0.02 & \textbf{0.79$\pm$0.01} & \textbf{11.7$\pm$0.6} \\
DYNOTEARS & \textbf{1.00$\pm$0.00} & 0.90$\pm$0.02 & 0.95$\pm$0.01 & 2.7$\pm$0.5 & 0.56$\pm$0.02 & 0.39$\pm$0.12 & \underline{0.45$\pm$0.08} & \underline{24.7$\pm$1.5} & \underline{0.56$\pm$0.02} & 0.39$\pm$0.12 & \underline{0.45$\pm$0.08} & \underline{24.7$\pm$1.5} & \underline{0.56$\pm$0.02} & 0.38$\pm$0.15 & 0.44$\pm$0.10 & \underline{24.7$\pm$1.5} \\
MV Granger & 0.20$\pm$0.00 & 0.78$\pm$0.09 & 0.32$\pm$0.01 & 91.3$\pm$7.4 & 0.15$\pm$0.02 & 0.23$\pm$0.05 & 0.17$\pm$0.01 & 56.3$\pm$9.7 & 0.14$\pm$0.01 & 0.45$\pm$0.08 & 0.21$\pm$0.01 & 92.7$\pm$22.1 & - & - & - & - \\
NTS-NOTEARS & 0.78$\pm$0.44 & 0.64$\pm$0.37 & 0.70$\pm$0.40 & 9.7$\pm$9.9 & \underline{0.57$\pm$0.01} & 0.27$\pm$0.11 & 0.35$\pm$0.11 & 25.0$\pm$2.0 & 0.38$\pm$0.33 & 0.21$\pm$0.18 & 0.27$\pm$0.23 & 25.3$\pm$2.5 & 0.00 & 0.00 & 0.00 & 26.7$\pm$1.5 \\
PCMCI parcorr & 0.53$\pm$0.07 & \underline{1.00$\pm$0.00} & 0.69$\pm$0.06 & 25.1$\pm$6.5 & 0.17$\pm$0.02 & \textbf{0.99$\pm$0.02} & 0.29$\pm$0.03 & 131.3$\pm$9.3 & 0.13$\pm$0.00 & \textbf{0.99$\pm$0.02} & 0.24$\pm$0.01 & 171.3$\pm$11.1 & 0.08$\pm$0.00 & \textbf{1.00$\pm$0.00} & 0.15$\pm$0.00 & 303.7$\pm$16.4 \\
PW Granger & 0.11$\pm$0.02 & 0.88$\pm$0.03 & 0.19$\pm$0.04 & 211.7$\pm$54.9 & 0.09$\pm$0.01 & 0.86$\pm$0.01 & 0.16$\pm$0.01 & 243.7$\pm$11.0 & 0.08$\pm$0.00 & 0.86$\pm$0.01 & 0.14$\pm$0.00 & 276.0$\pm$16.5 & 0.06$\pm$0.00 & \underline{0.86$\pm$0.01} & 0.11$\pm$0.00 & 362.3$\pm$25.4 \\
Random Guess & 0.04$\pm$0.02 & 0.05$\pm$0.02 & 0.05$\pm$0.02 & 55.7$\pm$3.9 & 0.05$\pm$0.02 & 0.06$\pm$0.03 & 0.06$\pm$0.03 & 54.3$\pm$5.5 & 0.05$\pm$0.02 & 0.06$\pm$0.03 & 0.06$\pm$0.03 & 54.3$\pm$5.5 & 0.05$\pm$0.02 & 0.06$\pm$0.03 & 0.06$\pm$0.03 & 54.3$\pm$5.5 \\
TCDF & \underline{1.00$\pm$0.00} & 0.47$\pm$0.05 & 0.64$\pm$0.04 & 14.3$\pm$0.9 & 0.55$\pm$0.04 & 0.30$\pm$0.06 & 0.39$\pm$0.05 & 25.3$\pm$2.5 & 0.56$\pm$0.02 & 0.32$\pm$0.05 & 0.40$\pm$0.04 & 25.0$\pm$2.0 & 0.00 & 0.00 & 0.00 & 26.7$\pm$1.5 \\
VAR-LiNGAM & 0.97$\pm$0.05 & 1.00$\pm$0.00 & \textbf{0.98$\pm$0.03} & \textbf{0.9$\pm$1.5} & 0.29$\pm$0.06 & \underline{0.99$\pm$0.02} & 0.44$\pm$0.08 & 68.3$\pm$17.9 & 0.23$\pm$0.02 & \underline{0.99$\pm$0.02} & 0.37$\pm$0.02 & 90.3$\pm$3.8 & 0.53$\pm$0.01 & 0.59$\pm$0.03 & \underline{0.56$\pm$0.01} & 25.0$\pm$1.0 \\
\bottomrule
\end{tabular}
}
\end{table}

\begin{table}[htbp]
\centering
\scriptsize
\caption{\revision{Non-stationary (Fixed Lag (k=5), Linear): 5 Domains (Base: stationary) (`-' indicates results not completed due to numerical instability or timeout.)}}
\label{tab:ns_regular_seg5_metrics}
\resizebox{\textwidth}{!}{%
\begin{tabular}{lcccccccccccccccc}
\toprule
Method & \multicolumn{4}{c}{Base} & \multicolumn{4}{c}{50K} & \multicolumn{4}{c}{100K} & \multicolumn{4}{c}{1M} \\
\cmidrule(lr){2-17}
 & P & R & F1 & SHD & P & R & F1 & SHD & P & R & F1 & SHD & P & R & F1 & SHD \\
\midrule
DOT & 0.91$\pm$0.03 & \textbf{1.00$\pm$0.00} & \underline{0.95$\pm$0.02} & \underline{2.7$\pm$1.0} & \underline{0.29$\pm$0.01} & 0.31$\pm$0.03 & \textbf{0.30$\pm$0.01} & 41.3$\pm$3.5 & \underline{0.32$\pm$0.04} & 0.35$\pm$0.07 & \textbf{0.33$\pm$0.05} & 39.3$\pm$5.5 & \textbf{0.48$\pm$0.07} & 0.52$\pm$0.11 & \textbf{0.50$\pm$0.09} & 30.0$\pm$7.0 \\
DYNOTEARS & \textbf{1.00$\pm$0.00} & 0.90$\pm$0.02 & 0.95$\pm$0.01 & 2.7$\pm$0.5 & 0.18$\pm$0.31 & 0.02$\pm$0.04 & 0.04$\pm$0.07 & \underline{28.7$\pm$3.5} & 0.18$\pm$0.31 & 0.02$\pm$0.04 & 0.04$\pm$0.07 & \underline{28.7$\pm$3.5} & 0.20$\pm$0.35 & 0.02$\pm$0.03 & 0.03$\pm$0.05 & \textbf{28.7$\pm$3.5} \\
MV Granger & 0.20$\pm$0.00 & 0.78$\pm$0.09 & 0.32$\pm$0.01 & 91.3$\pm$7.4 & 0.00 & 0.00 & 0.00 & 28.7$\pm$3.5 & 0.03$\pm$0.05 & 0.00$\pm$0.01 & 0.01$\pm$0.01 & 30.0$\pm$1.7 & - & - & - & - \\
NTS-NOTEARS & 0.78$\pm$0.44 & 0.64$\pm$0.37 & 0.70$\pm$0.40 & 9.7$\pm$9.9 & 0.27$\pm$0.46 & 0.01$\pm$0.02 & 0.02$\pm$0.04 & \textbf{28.3$\pm$4.0} & 0.27$\pm$0.46 & 0.01$\pm$0.02 & 0.02$\pm$0.04 & \textbf{28.3$\pm$4.0} & 0.00 & 0.00 & 0.00 & \underline{28.7$\pm$3.5} \\
PCMCI parcorr & 0.53$\pm$0.07 & \underline{1.00$\pm$0.00} & 0.69$\pm$0.06 & 25.1$\pm$6.5 & 0.13$\pm$0.01 & \textbf{0.93$\pm$0.01} & 0.23$\pm$0.01 & 185.0$\pm$15.4 & 0.11$\pm$0.01 & \textbf{0.96$\pm$0.01} & 0.20$\pm$0.02 & 228.7$\pm$8.4 & 0.07$\pm$0.01 & \textbf{0.99$\pm$0.01} & 0.13$\pm$0.01 & 367.7$\pm$7.1 \\
PW Granger & 0.11$\pm$0.02 & 0.88$\pm$0.03 & 0.19$\pm$0.04 & 211.7$\pm$54.9 & 0.08$\pm$0.01 & 0.85$\pm$0.04 & 0.15$\pm$0.02 & 276.3$\pm$10.3 & 0.07$\pm$0.01 & 0.87$\pm$0.03 & 0.14$\pm$0.02 & 318.7$\pm$5.8 & 0.06$\pm$0.01 & \underline{0.88$\pm$0.03} & 0.11$\pm$0.02 & 397.0$\pm$10.6 \\
Random Guess & 0.04$\pm$0.02 & 0.05$\pm$0.02 & 0.05$\pm$0.02 & 55.7$\pm$3.9 & 0.06$\pm$0.01 & 0.06$\pm$0.01 & 0.06$\pm$0.01 & 56.0$\pm$6.9 & 0.06$\pm$0.01 & 0.06$\pm$0.01 & 0.06$\pm$0.01 & 56.0$\pm$6.9 & 0.06$\pm$0.01 & 0.06$\pm$0.01 & 0.06$\pm$0.01 & 56.0$\pm$6.9 \\
TCDF & \underline{1.00$\pm$0.00} & 0.47$\pm$0.05 & 0.64$\pm$0.04 & 14.3$\pm$0.9 & \textbf{0.49$\pm$0.10} & 0.04$\pm$0.01 & 0.08$\pm$0.02 & 28.7$\pm$3.5 & \textbf{0.46$\pm$0.12} & 0.05$\pm$0.05 & 0.09$\pm$0.07 & 29.3$\pm$2.5 & 0.00 & 0.00 & 0.00 & 28.7$\pm$3.5 \\
VAR-LiNGAM & 0.97$\pm$0.05 & 1.00$\pm$0.00 & \textbf{0.98$\pm$0.03} & \textbf{0.9$\pm$1.5} & 0.16$\pm$0.00 & \underline{0.89$\pm$0.01} & \underline{0.28$\pm$0.01} & 133.7$\pm$17.0 & 0.15$\pm$0.00 & \underline{0.94$\pm$0.01} & \underline{0.25$\pm$0.00} & 161.7$\pm$18.5 & \underline{0.26$\pm$0.00} & 0.28$\pm$0.02 & \underline{0.27$\pm$0.01} & 42.7$\pm$3.5 \\
\bottomrule
\end{tabular}
}
\end{table}

\begin{table}[htbp]
\centering
\scriptsize
\caption{\revision{Non-stationary (Fixed Lag (k=5), Linear): 10 Domains (Base: stationary) (`-' indicates results not completed due to numerical instability or timeout.)}}
\label{tab:ns_regular_seg10_metrics}
\resizebox{\textwidth}{!}{%
\begin{tabular}{lcccccccccccccccc}
\toprule
Method & \multicolumn{4}{c}{Base} & \multicolumn{4}{c}{50K} & \multicolumn{4}{c}{100K} & \multicolumn{4}{c}{1M} \\
\cmidrule(lr){2-17}
 & P & R & F1 & SHD & P & R & F1 & SHD & P & R & F1 & SHD & P & R & F1 & SHD \\
\midrule
DOT & 0.91$\pm$0.03 & \textbf{1.00$\pm$0.00} & \underline{0.95$\pm$0.02} & \underline{2.7$\pm$1.0} & \textbf{0.19$\pm$0.03} & 0.21$\pm$0.03 & \textbf{0.20$\pm$0.03} & 46.0$\pm$2.6 & \underline{0.21$\pm$0.03} & 0.23$\pm$0.03 & \textbf{0.21$\pm$0.03} & 45.0$\pm$2.6 & \textbf{0.24$\pm$0.02} & 0.27$\pm$0.02 & \textbf{0.25$\pm$0.02} & 43.0$\pm$1.0 \\
DYNOTEARS & \textbf{1.00$\pm$0.00} & 0.90$\pm$0.02 & 0.95$\pm$0.01 & 2.7$\pm$0.5 & 0.00 & 0.00 & 0.00 & \textbf{27.7$\pm$1.5} & 0.00 & 0.00 & 0.00 & \textbf{27.7$\pm$1.5} & 0.00 & 0.00 & 0.00 & \textbf{27.7$\pm$1.5} \\
MV Granger & 0.20$\pm$0.00 & 0.78$\pm$0.09 & 0.32$\pm$0.01 & 91.3$\pm$7.4 & 0.00 & 0.00 & 0.00 & \underline{27.7$\pm$1.5} & 0.00 & 0.00 & 0.00 & \underline{27.7$\pm$1.5} & - & - & - & - \\
NTS-NOTEARS & 0.78$\pm$0.44 & 0.64$\pm$0.37 & 0.70$\pm$0.40 & 9.7$\pm$9.9 & 0.00 & 0.00 & 0.00 & 27.7$\pm$1.5 & 0.00 & 0.00 & 0.00 & 27.7$\pm$1.5 & 0.00 & 0.00 & 0.00 & \underline{27.7$\pm$1.5} \\
PCMCI parcorr & 0.53$\pm$0.07 & \underline{1.00$\pm$0.00} & 0.69$\pm$0.06 & 25.1$\pm$6.5 & 0.10$\pm$0.01 & \underline{0.78$\pm$0.03} & \underline{0.17$\pm$0.01} & 204.3$\pm$13.7 & 0.09$\pm$0.00 & \textbf{0.86$\pm$0.02} & 0.16$\pm$0.00 & 249.7$\pm$8.1 & 0.07 & \textbf{0.96} & 0.12 & 382.0 \\
PW Granger & 0.11$\pm$0.02 & 0.88$\pm$0.03 & 0.19$\pm$0.04 & 211.7$\pm$54.9 & 0.07$\pm$0.00 & \textbf{0.79$\pm$0.05} & 0.13$\pm$0.01 & 305.0$\pm$17.8 & 0.07$\pm$0.00 & \underline{0.86$\pm$0.03} & 0.12$\pm$0.01 & 346.3$\pm$4.7 & 0.06$\pm$0.00 & \underline{0.90$\pm$0.02} & 0.11$\pm$0.01 & 403.7$\pm$4.7 \\
Random Guess & 0.04$\pm$0.02 & 0.05$\pm$0.02 & 0.05$\pm$0.02 & 55.7$\pm$3.9 & 0.05$\pm$0.01 & 0.06$\pm$0.01 & 0.06$\pm$0.01 & 55.0$\pm$5.2 & 0.05$\pm$0.01 & 0.06$\pm$0.01 & 0.06$\pm$0.01 & 55.0$\pm$5.2 & 0.05$\pm$0.01 & 0.06$\pm$0.01 & 0.06$\pm$0.01 & 55.0$\pm$5.2 \\
TCDF & \underline{1.00$\pm$0.00} & 0.47$\pm$0.05 & 0.64$\pm$0.04 & 14.3$\pm$0.9 & \underline{0.15$\pm$0.26} & 0.01$\pm$0.02 & 0.02$\pm$0.04 & 27.7$\pm$1.5 & \textbf{0.25$\pm$0.23} & 0.02$\pm$0.02 & 0.03$\pm$0.03 & 27.7$\pm$1.5 & 0.00 & 0.00 & 0.00 & 27.7$\pm$1.5 \\
VAR-LiNGAM & 0.97$\pm$0.05 & 1.00$\pm$0.00 & \textbf{0.98$\pm$0.03} & \textbf{0.9$\pm$1.5} & 0.05$\pm$0.01 & 0.16$\pm$0.09 & 0.07$\pm$0.02 & 113.0$\pm$39.2 & 0.10$\pm$0.00 & 0.79$\pm$0.05 & \underline{0.18$\pm$0.00} & 202.7$\pm$4.6 & \underline{0.18$\pm$0.02} & 0.21$\pm$0.03 & \underline{0.19$\pm$0.03} & 46.7$\pm$2.1 \\
\bottomrule
\end{tabular}
}
\end{table}

\begin{table}[htbp]
\centering
\scriptsize
\caption{\revision{Non-stationary (Variable Lag (k=1-5), Nonlinear): 2 Domains (Base: stationary) (`-' indicates results not completed due to numerical instability or timeout.)}}
\label{tab:ns_extreme_seg2_metrics}
\resizebox{\textwidth}{!}{%
\begin{tabular}{lcccccccccccccccc}
\toprule
Method & \multicolumn{4}{c}{Base} & \multicolumn{4}{c}{50K} & \multicolumn{4}{c}{100K} & \multicolumn{4}{c}{1M} \\
\cmidrule(lr){2-17}
 & P & R & F1 & SHD & P & R & F1 & SHD & P & R & F1 & SHD & P & R & F1 & SHD \\
\midrule
DOT & 0.91$\pm$0.03 & \textbf{1.00$\pm$0.00} & \underline{0.95$\pm$0.02} & \underline{2.7$\pm$1.0} & 0.44$\pm$0.18 & 0.58$\pm$0.10 & \textbf{0.50$\pm$0.15} & 24.3$\pm$5.5 & \textbf{0.49$\pm$0.18} & 0.66$\pm$0.09 & \textbf{0.55$\pm$0.15} & \underline{21.3$\pm$5.5} & \textbf{0.53$\pm$0.18} & 0.71$\pm$0.07 & \textbf{0.59$\pm$0.15} & \textbf{19.3$\pm$4.9} \\
DYNOTEARS & \textbf{1.00$\pm$0.00} & 0.90$\pm$0.02 & 0.95$\pm$0.01 & 2.7$\pm$0.5 & 0.21$\pm$0.25 & 0.23$\pm$0.16 & 0.20$\pm$0.19 & 49.0$\pm$43.3 & 0.21$\pm$0.25 & 0.22$\pm$0.15 & 0.19$\pm$0.17 & 50.3$\pm$45.6 & 0.21$\pm$0.25 & 0.22$\pm$0.15 & 0.19$\pm$0.17 & 49.3$\pm$43.9 \\
MV Granger & 0.20$\pm$0.00 & 0.78$\pm$0.09 & 0.32$\pm$0.01 & 91.3$\pm$7.4 & 0.10$\pm$0.04 & 0.14$\pm$0.08 & 0.11$\pm$0.06 & 44.3$\pm$20.6 & 0.09$\pm$0.03 & 0.20$\pm$0.11 & 0.13$\pm$0.05 & 59.3$\pm$29.6 & - & - & - & - \\
NTS-NOTEARS & 0.78$\pm$0.44 & 0.64$\pm$0.37 & 0.70$\pm$0.40 & 9.7$\pm$9.9 & \textbf{0.50$\pm$0.00} & 0.18$\pm$0.12 & \underline{0.25$\pm$0.14} & \textbf{21.0$\pm$6.1} & 0.17$\pm$0.29 & 0.08$\pm$0.13 & 0.11$\pm$0.18 & \textbf{21.0$\pm$6.1} & 0.00 & 0.00 & 0.00 & \underline{21.0$\pm$6.1} \\
PCMCI parcorr & 0.53$\pm$0.07 & \underline{1.00$\pm$0.00} & 0.69$\pm$0.06 & 25.1$\pm$6.5 & 0.10$\pm$0.01 & \textbf{0.74$\pm$0.11} & 0.18$\pm$0.02 & 150.3$\pm$45.4 & 0.09$\pm$0.00 & \textbf{0.79$\pm$0.05} & 0.16$\pm$0.00 & 178.3$\pm$52.8 & 0.06$\pm$0.00 & \textbf{0.88$\pm$0.04} & 0.11$\pm$0.01 & 292.7$\pm$72.5 \\
PW Granger & 0.11$\pm$0.02 & 0.88$\pm$0.03 & 0.19$\pm$0.04 & 211.7$\pm$54.9 & 0.05$\pm$0.01 & \underline{0.73$\pm$0.11} & 0.09$\pm$0.02 & 300.3$\pm$80.6 & 0.05$\pm$0.01 & \underline{0.76$\pm$0.07} & 0.09$\pm$0.02 & 331.3$\pm$76.7 & 0.04$\pm$0.01 & \underline{0.79$\pm$0.06} & 0.08$\pm$0.02 & 380.0$\pm$50.7 \\
Random Guess & 0.04$\pm$0.02 & 0.05$\pm$0.02 & 0.05$\pm$0.02 & 55.7$\pm$3.9 & 0.03$\pm$0.03 & 0.03$\pm$0.05 & 0.03$\pm$0.04 & 40.7$\pm$16.2 & 0.03$\pm$0.03 & 0.03$\pm$0.05 & 0.03$\pm$0.04 & 40.7$\pm$16.2 & 0.03$\pm$0.03 & 0.03$\pm$0.05 & 0.03$\pm$0.04 & 40.7$\pm$16.2 \\
TCDF & \underline{1.00$\pm$0.00} & 0.47$\pm$0.05 & 0.64$\pm$0.04 & 14.3$\pm$0.9 & \underline{0.46$\pm$0.11} & 0.17$\pm$0.05 & 0.25$\pm$0.07 & \underline{21.3$\pm$4.7} & \underline{0.46$\pm$0.11} & 0.17$\pm$0.05 & \underline{0.25$\pm$0.07} & 21.3$\pm$4.7 & 0.00 & 0.00 & 0.00 & 21.0$\pm$6.1 \\
VAR-LiNGAM & 0.97$\pm$0.05 & 1.00$\pm$0.00 & \textbf{0.98$\pm$0.03} & \textbf{0.9$\pm$1.5} & 0.05$\pm$0.04 & 0.34$\pm$0.32 & 0.09$\pm$0.06 & 141.0$\pm$52.8 & 0.10$\pm$0.01 & 0.74$\pm$0.07 & 0.17$\pm$0.02 & 160.0$\pm$51.7 & \underline{0.37$\pm$0.13} & 0.47$\pm$0.07 & \underline{0.40$\pm$0.11} & 29.0$\pm$2.6 \\
\bottomrule
\end{tabular}
}
\end{table}

\begin{table}[htbp]
\centering
\scriptsize
\caption{\revision{Non-stationary (Variable Lag (k=1-5), Nonlinear): 5 Domains (Base: stationary) (`-' indicates results not completed due to numerical instability or timeout.)}}
\label{tab:ns_extreme_seg5_metrics}
\resizebox{\textwidth}{!}{%
\begin{tabular}{lcccccccccccccccc}
\toprule
Method & \multicolumn{4}{c}{Base} & \multicolumn{4}{c}{50K} & \multicolumn{4}{c}{100K} & \multicolumn{4}{c}{1M} \\
\cmidrule(lr){2-17}
 & P & R & F1 & SHD & P & R & F1 & SHD & P & R & F1 & SHD & P & R & F1 & SHD \\
\midrule
DOT & 0.91$\pm$0.03 & \textbf{1.00$\pm$0.00} & \underline{0.95$\pm$0.02} & \underline{2.7$\pm$1.0} & \textbf{0.30$\pm$0.04} & 0.38$\pm$0.07 & \textbf{0.33$\pm$0.05} & 35.3$\pm$4.0 & \textbf{0.35$\pm$0.05} & 0.44$\pm$0.09 & \textbf{0.39$\pm$0.06} & 32.0$\pm$4.6 & \textbf{0.44$\pm$0.07} & 0.53$\pm$0.05 & \textbf{0.47$\pm$0.06} & 27.3$\pm$2.5 \\
DYNOTEARS & \textbf{1.00$\pm$0.00} & 0.90$\pm$0.02 & 0.95$\pm$0.01 & 2.7$\pm$0.5 & 0.09$\pm$0.06 & 0.13$\pm$0.16 & 0.08$\pm$0.06 & 58.3$\pm$51.7 & 0.08$\pm$0.06 & 0.13$\pm$0.16 & 0.08$\pm$0.06 & 58.7$\pm$51.5 & 0.09$\pm$0.06 & 0.13$\pm$0.16 & 0.08$\pm$0.06 & 58.0$\pm$51.2 \\
MV Granger & 0.20$\pm$0.00 & 0.78$\pm$0.09 & 0.32$\pm$0.01 & 91.3$\pm$7.4 & 0.06$\pm$0.05 & 0.02$\pm$0.02 & 0.04$\pm$0.03 & 29.3$\pm$7.2 & 0.08$\pm$0.00 & 0.04$\pm$0.03 & 0.05$\pm$0.02 & 35.0$\pm$11.1 & - & - & - & - \\
NTS-NOTEARS & 0.78$\pm$0.44 & 0.64$\pm$0.37 & 0.70$\pm$0.40 & 9.7$\pm$9.9 & 0.04$\pm$0.08 & 0.01$\pm$0.02 & 0.02$\pm$0.03 & \textbf{25.3$\pm$4.5} & 0.06$\pm$0.10 & 0.01$\pm$0.02 & 0.02$\pm$0.04 & \textbf{25.3$\pm$4.5} & 0.00 & 0.00 & 0.00 & \textbf{23.7$\pm$2.5} \\
PCMCI parcorr & 0.53$\pm$0.07 & \underline{1.00$\pm$0.00} & 0.69$\pm$0.06 & 25.1$\pm$6.5 & 0.08$\pm$0.01 & \textbf{0.81$\pm$0.03} & \underline{0.15$\pm$0.02} & 229.7$\pm$36.5 & 0.07$\pm$0.01 & \textbf{0.84$\pm$0.02} & 0.13$\pm$0.02 & 271.7$\pm$41.5 & 0.06$\pm$0.00 & \textbf{0.95$\pm$0.02} & 0.11$\pm$0.00 & 396.0$\pm$19.8 \\
PW Granger & 0.11$\pm$0.02 & 0.88$\pm$0.03 & 0.19$\pm$0.04 & 211.7$\pm$54.9 & 0.05$\pm$0.01 & \underline{0.77$\pm$0.07} & 0.09$\pm$0.01 & 374.7$\pm$43.0 & 0.05$\pm$0.01 & \underline{0.79$\pm$0.07} & 0.09$\pm$0.01 & 393.0$\pm$30.0 & 0.05$\pm$0.01 & \underline{0.84$\pm$0.04} & 0.09$\pm$0.01 & 426.7$\pm$7.5 \\
Random Guess & 0.04$\pm$0.02 & 0.05$\pm$0.02 & 0.05$\pm$0.02 & 55.7$\pm$3.9 & 0.06$\pm$0.02 & 0.06$\pm$0.04 & 0.06$\pm$0.03 & 42.0$\pm$12.1 & 0.06$\pm$0.02 & 0.06$\pm$0.04 & 0.06$\pm$0.03 & 42.0$\pm$12.1 & 0.06$\pm$0.02 & 0.06$\pm$0.04 & 0.06$\pm$0.03 & 42.0$\pm$12.1 \\
TCDF & \underline{1.00$\pm$0.00} & 0.47$\pm$0.05 & 0.64$\pm$0.04 & 14.3$\pm$0.9 & \underline{0.26$\pm$0.03} & 0.05$\pm$0.03 & 0.09$\pm$0.04 & \underline{26.0$\pm$1.7} & \underline{0.28$\pm$0.12} & 0.05$\pm$0.02 & 0.09$\pm$0.03 & \underline{25.7$\pm$2.5} & \underline{0.25$\pm$0.05} & 0.06$\pm$0.04 & 0.09$\pm$0.04 & \underline{26.3$\pm$2.1} \\
VAR-LiNGAM & 0.97$\pm$0.05 & 1.00$\pm$0.00 & \textbf{0.98$\pm$0.03} & \textbf{0.9$\pm$1.5} & 0.06$\pm$0.02 & 0.57$\pm$0.30 & 0.11$\pm$0.04 & 205.3$\pm$57.7 & 0.08$\pm$0.02 & 0.77$\pm$0.03 & \underline{0.14$\pm$0.04} & 247.0$\pm$74.1 & 0.22$\pm$0.02 & 0.27$\pm$0.04 & \underline{0.24$\pm$0.03} & 40.0$\pm$3.6 \\
\bottomrule
\end{tabular}
}
\end{table}

\begin{table}[htbp]
\centering
\scriptsize
\caption{\revision{Non-stationary (Variable Lag (k=1-5), Nonlinear): 10 Domains (Base: stationary) (`-' indicates results not completed due to numerical instability or timeout.)}}
\label{tab:ns_extreme_seg10_metrics}
\resizebox{\textwidth}{!}{%
\begin{tabular}{lcccccccccccccccc}
\toprule
Method & \multicolumn{4}{c}{Base} & \multicolumn{4}{c}{50K} & \multicolumn{4}{c}{100K} & \multicolumn{4}{c}{1M} \\
\cmidrule(lr){2-17}
 & P & R & F1 & SHD & P & R & F1 & SHD & P & R & F1 & SHD & P & R & F1 & SHD \\
\midrule
DOT & 0.91$\pm$0.03 & \textbf{1.00$\pm$0.00} & \underline{0.95$\pm$0.02} & \underline{2.7$\pm$1.0} & \textbf{0.20$\pm$0.04} & 0.25$\pm$0.04 & \textbf{0.22$\pm$0.04} & 42.7$\pm$3.1 & \textbf{0.24$\pm$0.05} & 0.30$\pm$0.05 & \textbf{0.26$\pm$0.05} & 41.0$\pm$2.6 & \textbf{0.28$\pm$0.04} & 0.35$\pm$0.05 & \textbf{0.30$\pm$0.05} & 38.0$\pm$3.0 \\
DYNOTEARS & \textbf{1.00$\pm$0.00} & 0.90$\pm$0.02 & 0.95$\pm$0.01 & 2.7$\pm$0.5 & 0.04$\pm$0.02 & 0.11$\pm$0.11 & 0.05$\pm$0.04 & 71.0$\pm$36.3 & 0.04$\pm$0.02 & 0.11$\pm$0.12 & 0.06$\pm$0.04 & 73.3$\pm$37.9 & 0.04$\pm$0.02 & 0.11$\pm$0.12 & 0.05$\pm$0.04 & 71.3$\pm$35.9 \\
MV Granger & 0.20$\pm$0.00 & 0.78$\pm$0.09 & 0.32$\pm$0.01 & 91.3$\pm$7.4 & 0.04$\pm$0.05 & 0.02$\pm$0.03 & 0.02$\pm$0.03 & 30.3$\pm$7.0 & 0.06$\pm$0.02 & 0.04$\pm$0.03 & 0.04$\pm$0.03 & 41.3$\pm$11.7 & 0.04 & 0.09 & 0.05 & 86.0 \\
NTS-NOTEARS & 0.78$\pm$0.44 & 0.64$\pm$0.37 & 0.70$\pm$0.40 & 9.7$\pm$9.9 & 0.07$\pm$0.06 & 0.01$\pm$0.01 & 0.02$\pm$0.02 & \textbf{28.0$\pm$2.6} & 0.06$\pm$0.05 & 0.01$\pm$0.01 & 0.02$\pm$0.02 & \textbf{28.0$\pm$3.0} & 0.00 & 0.00 & 0.00 & \textbf{25.0$\pm$2.0} \\
PCMCI parcorr & 0.53$\pm$0.07 & \underline{1.00$\pm$0.00} & 0.69$\pm$0.06 & 25.1$\pm$6.5 & 0.07$\pm$0.01 & \underline{0.71$\pm$0.07} & \underline{0.12$\pm$0.02} & 256.3$\pm$23.3 & 0.06$\pm$0.01 & \underline{0.78$\pm$0.05} & \underline{0.11$\pm$0.02} & 305.0$\pm$25.5 & 0.06 & \textbf{0.95} & \underline{0.11} & 426.0 \\
PW Granger & 0.11$\pm$0.02 & 0.88$\pm$0.03 & 0.19$\pm$0.04 & 211.7$\pm$54.9 & 0.05$\pm$0.00 & \textbf{0.79$\pm$0.04} & 0.09$\pm$0.00 & 382.7$\pm$24.9 & 0.05$\pm$0.00 & \textbf{0.82$\pm$0.03} & 0.10$\pm$0.00 & 398.7$\pm$19.1 & 0.05$\pm$0.00 & \underline{0.86$\pm$0.01} & 0.09$\pm$0.01 & 424.0$\pm$1.7 \\
Random Guess & 0.04$\pm$0.02 & 0.05$\pm$0.02 & 0.05$\pm$0.02 & 55.7$\pm$3.9 & 0.06$\pm$0.01 & 0.05$\pm$0.04 & 0.05$\pm$0.03 & 43.7$\pm$8.4 & 0.06$\pm$0.01 & 0.05$\pm$0.04 & 0.05$\pm$0.03 & 43.7$\pm$8.4 & 0.06$\pm$0.01 & 0.05$\pm$0.04 & 0.05$\pm$0.03 & 43.7$\pm$8.4 \\
TCDF & \underline{1.00$\pm$0.00} & 0.47$\pm$0.05 & 0.64$\pm$0.04 & 14.3$\pm$0.9 & \underline{0.10$\pm$0.06} & 0.02$\pm$0.03 & 0.04$\pm$0.04 & \underline{29.0$\pm$2.0} & \underline{0.11$\pm$0.06} & 0.03$\pm$0.02 & 0.04$\pm$0.03 & \underline{30.0$\pm$2.0} & 0.00 & 0.00 & 0.00 & \underline{25.0$\pm$2.0} \\
VAR-LiNGAM & 0.97$\pm$0.05 & 1.00$\pm$0.00 & \textbf{0.98$\pm$0.03} & \textbf{0.9$\pm$1.5} & 0.07$\pm$0.01 & 0.67$\pm$0.10 & 0.12$\pm$0.01 & 246.0$\pm$26.7 & 0.06$\pm$0.01 & 0.76$\pm$0.03 & 0.11$\pm$0.01 & 303.3$\pm$24.2 & \underline{0.09$\pm$0.02} & 0.12$\pm$0.03 & 0.10$\pm$0.03 & 49.3$\pm$1.5 \\
\bottomrule
\end{tabular}
}
\end{table}

\end{document}